\newtheorem*{meta*}{Guiding Principle}
\newcommand{\R}{\mathbb{R}}
\newcommand{\bigo}{\mathcal{O}}
\newcommand{\N}{\mathbb{N}}
\newcommand{\Ebb}{\mathbb{E}}
\newcommand{\algo}{\mathcal{A}}
\newcommand{\dist}{\mathscr{D}}
\newcommand{\Rplus}{\mathbb{R}_{\geq 0}}
\newcommand{\unif}{\mathrm{unif}}
\Crefname{claim}{Claim}{Claims} 
\title{Realizable Learning is All You Need}
\begin{document}
\maketitle

\begin{abstract}
The equivalence of realizable and agnostic learnability is a fundamental phenomenon in learning theory. With variants ranging from classical settings like PAC learning and regression to recent trends such as adversarially robust learning, it's surprising that we still lack a unified theory; traditional proofs of the equivalence tend to be disparate, and rely on strong model-specific assumptions like uniform convergence and sample compression. 

In this work, we give the first model-independent framework explaining the equivalence of realizable and agnostic learnability: a three-line blackbox reduction that simplifies, unifies, and extends our understanding across a wide variety of settings. This includes models with no known characterization of learnability such as learning with arbitrary distributional assumptions and more general loss functions, as well as a host of other popular settings such as robust learning, partial learning, fair learning, and the statistical query model.

More generally, we argue that the equivalence of realizable and agnostic learning is actually a special case of a broader phenomenon we call property generalization: any desirable property of a learning algorithm (e.g.\ noise tolerance, privacy, stability) that can be satisfied over finite hypothesis classes extends (possibly in some variation) to any learnable hypothesis class. 
\end{abstract}

\section{Introduction}\label{sec:intro}
The equivalence of realizable and agnostic learnability in Valiant's Probably Approximately Correct (PAC) model \cite{valiant1984theory} is one of the best known results in learning theory, and numbers among its most surprising. Given a set $X$ and a family of binary classifiers $H$, the result states that the ability to learn a classifier $h \in H$ from examples of the form $(x,h(x))$ is in fact sufficient for something much stronger: given samples from \textit{any} distribution $D$ over $X \times \{0,1\}$, it is possible to learn the \textit{best approximation to $D$ in $H$}. This surprising equivalence stems from a classical result of Vapnik and Chervonenkis (VC) \cite{vapnik1974theory}, and independently Blumer, Ehrenfeucht, Haussler, and Warmuth (BEHW) \cite{Blumer} and Haussler \cite{haussler1992decision}, who equate both the former model (known as \textit{realizable learning}) and the latter model (known as \textit{agnostic learning}) to a strong property of pairs $(X,H)$ called \textit{uniform convergence}.\footnote{Uniform convergence promises that a large enough sample gives a good approximation for loss of \textit{every} $h \in H$ simultaneously.}

VC, BEHW, and Haussler's result was certainly a breakthrough in its own right, but its proof technique is too indirect to reveal any deeper connections between realizable and agnostic learning beyond the PAC setting. Further, recent years have seen both theory and practice shift away not only from this original formalization, but more generally from the ``uniform convergence equals learnability'' paradigm, often in favor of distributional or data-dependent assumptions like margin that are more applicable to the real world. The inability of VC, BEHW, and Haussler's proof technique to generalize to such scenarios raises a fundamental question: \textit{is the equivalence of realizable and agnostic learning a fundamental property of learnability, or simply a happy coincidence derived from the original PAC framework?}

In the 30 years since these works, a mountain of evidence has amassed in favor of the former: almost every reasonable variant of learning shares some sort of similar equivalence. This includes a long list of popular settings such as regression \cite{bartlett1996fat}, distribution-dependent learning \cite{benedek1991learnability}, multi-class learning \cite{david2016supervised}, robust learning \cite{montasser2019vc}, online learning \cite{ben2009agnostic}, private learning \cite{beimel2014learning, alon2020closure}, and partial learning \cite{long2001agnostic,alon2021theory}. What's more, the uniform convergence paradigm fails miserably in most of these models. In the distribution-dependent model, for instance, it is easy to build classes which are trivially learnable (even with one sample!) but completely fail to satisfy uniform convergence \cite{benedek1991learnability}. On the other hand, models such as private learning give well-known examples where uniform convergence fails to imply learnability \cite{alon2019private}. In spite of this, we are really no closer today to a general understanding of this phenomenon than we were in the early 90s. Much like Vapnik and Chervonenkis \cite{vapnik1974theory}, Blumer, Ehrenfeucht, Haussler, and Warmuth \cite{Blumer}, and Haussler's \cite{haussler1992decision} proofs, the above works often use indirect methods and tend to rely on powerful model-dependent assumptions. 

In this work, we aim to offer a generic, unifying theory by way of the first direct reduction from agnostic to realizable learning. Unlike any previous work, our reduction is blackbox, relies on no additional assumptions, and, perhaps most importantly, is incredibly simple. In fact, the basic algorithm can be stated in three lines. 

\begin{algorithm}[H]
\SetAlgoLined
\textbf{Input:} Realizable PAC-Learner $\algo$, Unlabeled Sample Oracle $\bigo_U$, Labeled Sample Oracle $\bigo_L$
\\
\textbf{Algorithm:}
\begin{enumerate}[leftmargin=*]
    \item Draw an unlabeled sample $S_U \sim \bigo_U$, and labeled sample $S_L \sim \bigo_L$.
    \item Run $\algo$ over all possible labelings of $S_U$ to get:
    \[
    C(S_U) \coloneqq \{ \algo(S_U,h(S_U))~|~ h \in H|_{S_U}\}.
    \]
    \item \textbf{Return} the hypothesis in $C(S_U)$ with lowest empirical error over $S_L$.
\end{enumerate}
 \caption{Agnostic to Realizable Reduction}
 \label{Intro:alg}
\end{algorithm}

This basic reduction simplifies and unifies classic results such as VC \cite{vapnik1974theory}, BEHW \cite{Blumer}, and Haussler's \cite{haussler1992decision} distribution-free equivalence and Benedek and Itai's \cite{benedek1991learnability}
analogous result in the distribution-dependent setting\footnote{Benedek and Itai only consider random classification noise, but it is clear that their analysis extends to the agnostic model.} up to log factors. More importantly, because \Cref{Intro:alg} doesn't rely on model-dependent properties like uniform convergence, it extends to learning regimes without known characterizations. One such example is the notoriously difficult distribution-family model, in which the adversary is given a restricted family of distributions $\dist$ along with the pair $(X,H)$. The distribution-family model has no finitary characterization \cite{lechner2023impossibility},\footnote{In an earlier version of this manuscript we conjectured this result. It has since been confirmed by Lechner and Ben-David \cite{lechner2023impossibility}.} yet \Cref{Intro:alg} can still be used to show that the realizable and agnostic settings are equivalent.

Unfortunately, while \Cref{Intro:alg} does avoid any significant blowup in sample complexity, it is inherently computationally inefficient. In fact, this is necessary unless $P=NP$. There are many basic classes (e.g.\ halfspaces) which are easy to learn in the realizable model, but NP or cryptographically hard in the agnostic setting (see e.g.\ \cite{feldman2009agnostic}). As such we focus in this work only on information theoretic considerations, though building computationally efficient reductions in restricted settings remains an interesting avenue of research.

The core of \Cref{Intro:alg} lies in an equivalence between PAC learning and a new form of randomized covering of independent interest we call a \textit{non-uniform cover}. In contrast to more classical notions, a non-uniform cover is a distribution over subsets of hypotheses that covers any \textit{fixed} hypothesis in the class with high probability, but may fail to cover all hypotheses \textit{simultaneously}. The connection between supervised learning and non-uniform covering is inherent in \Cref{Intro:alg}, where Steps 1 and 2 turn the realizable learner $\mathcal{A}$ into a non-uniform cover $C(S_U)$, and Step 3 uses the cover to perform agnostic learning. At a high level, this process works because the adversary does not see the randomness inherent to Steps 1 and 2, and therefore cannot detect or exploit which hypotheses in the class will fail to be well-estimated in the process.

In fact, this connection has many broader implications within the theory of supervised learning. For one, the method is not inherently restricted to the agnostic setting. \Cref{Intro:alg} achieves agnostic learning for general classes by applying an agnostic learner for finite classes in Step 3; replacing this with learners satisfying other properties (e.g.\ the exponential mechanism for privacy) leads to other families of reductions to the realizable setting. At a high level, this can be summarized by the following informal `guiding principle':
\begin{meta*}[Property Generalization]\label{thm:meta}
    If there is a (sample-efficient) algorithm with property $P$ over finite classes, then \Cref{Intro:alg} gives a (sample-efficient) learner with property $P$ over any `learnable' class.
\end{meta*}
We stress that the above is a guide, not a theorem, and indeed often requires modification or weakening of the desired property for a given application. For instance, when Step 3 is replaced with a private algorithm, the result is a \textit{semi}-private learner for general classes, a weakened model allowing the use of a small amount of public unlabeled data \cite{beimel2013private,alon2019limits}.

\Cref{Intro:alg} also provides a unified framework for many settings beyond the PAC-model. This includes basic extensions such as general loss functions\footnote{Over infinite label spaces, we will require some weak assumptions on the loss.} and the distribution-family model, but also more involved modifications such as partial learning, robust learning, or even the statistical query model. Moreover, in some of these settings removing reliance on setting-specific assumptions like uniform convergence actually quantitatively improves the sample complexity; such is the case for the semi-private model, where we use this fact to achieve information-theoretically optimal unlabeled sample complexity for the first time. 

Finally, we note there are a few settings where \Cref{Intro:alg} runs into issues, especially discrete infinite settings such as infinite multi-class classification and properties such as privacy that require more careful data handling. We leave the extension of our method to these settings as an intriguing open problem.
\subsection{Proof Overview}
Before moving to more detailed discussion of our results, we briefly overview the proof that \Cref{Intro:alg} is an agnostic learner. At its most basic, the idea is simply to observe that the set $C(S_U)$ almost certainly contains a `near-optimal' hypothesis $\tilde{h} \in H$. Since $C(S_U)$ has bounded size, standard uniform convergence for finite classes promises Step 3 outputs a hypothesis close to $\tilde{h}$, which is therefore itself `near-optimal' as desired.

Taking a step back, the key observation in this process is really that $C(S_U)$ `acts like a cover' of $H$ in the following weak sense we call \textit{non-uniform covering}:
\begin{definition}[Non-uniform Cover (Informal \Cref{def:prob-cover})]
Let $(X,H)$ be a hypothesis class, $D$ a marginal distribution over $X$, and $C$ a random variable over the power set $P(H)$. We call $C$ a non-uniform $(\varepsilon,\delta)$-cover of $H$ with respect to $D$ if:
\[
\forall h \in H: \quad \Pr_{C}\left[\exists h' \in C: \Pr_{x \sim D}[h'(x) \neq h(x)] \leq \varepsilon\right] \geq 1-\delta.
\]
\end{definition}
In other words, for every \textit{fixed} hypothesis $h$ in the class, $C$ is very likely to contain a hypothesis close to $h$. This differs from the standard covering which enforces that $C$ \textit{simultaneously} cover every $h \in H$ (equivalent to pushing the `$\forall$' quantifier into the probability above). In \Cref{sec:cover}, we show that the latter (which is the standard in the literature) is strictly stronger and requires more samples to generate. This is critical in our application to semi-private learning where we cut the additional samples required to generate a full cover and thereby achieve the optimal sample complexity.

It is left to observe that $C(S_U)$ in \Cref{Intro:alg} is actually a non-uniform cover, but this is essentially immediate from the definition of realizable learning! Realizable learning promises that for every fixed hypothesis $h \in H$, when $A$ receives samples labeled by $h$ it outputs a hypothesis close to $h$ with high probability. $C(S_U)$ is generated by running $A$ across all $h \in H$, so this guarantee exactly translates to the above. We refer the reader to \Cref{sec:overview-base,sec:base-reduction} for a formal explanation of this argument.

\subsection{Beyond PAC Learning}
\Cref{Intro:alg} is an extremely flexible framework for proving agnostic to realizable reductions in supervised learning. In this section, we informally overview the many extended models studied in this paper. The most basic setting in which \Cref{Intro:alg} applies beyond the standard model is \textit{learning under distributional assumptions} (or formally what we call the ``distribution-family model'', see \Cref{sec:overview-base,sec:prelims}). Standard techniques using combinatorial dimensions require that the learner $A$ works over \textit{every distribution} (the ``distribution-free'' setting), while modern algorithms frequently only work under some niceness conditions on the distribution. On the other hand, it is easy to observe that the process described above works under arbitrary distributional assumptions---as long as $A$ is a realizable learner over a distribution $D$, then \Cref{Intro:alg} is an agnostic learner whenever the data is marginally distributed as $D$. This leads to the following corollary:

\begin{theorem}[Distribution Family Model (Informal \Cref{thm:basic-reduction})] 
The sample complexity of agnostic learning a class $(X,H)$ in the distribution family model is at most:
\[
m(\varepsilon,\delta) \leq O\left(\frac{n(\varepsilon,\delta)+\log(1/\delta)}{\varepsilon^2}\right),
\]
where $n(\varepsilon,\delta)$ is the realizable sample complexity of $(X,H)$.
\end{theorem}
In the finite VC setting, this can be improved to roughly $\frac{d\log(\frac{d}{\varepsilon})+\log(\frac{1}{\delta})}{\varepsilon^2}$ for VC dimension $d$, which is optimal up to a log factor.

\Cref{Intro:alg} also covers many other supervised settings in the literature that diverge from the PAC model in more substantial ways, including general loss functions, constraints on the learner, and properties beyond agnostic learning. For simplicity we cover these here informally, avoiding exact definitions and sample complexities (which are generally similar to the above), and give formal details in the main body.

\subsubsection{General Loss Functions} 

Perhaps the most natural extension of the PAC framework is to loss functions beyond binary classification. Here data points are labeled by an arbitrary \textit{label space} $Y$ and error is measured with respect to a generic \textit{loss function} $\ell: Y \times Y \to \mathbb{R}_{\geq 0}$, for instance we may take $Y = \R$ and $\ell(y,y')=(y-y')^2$, the square loss. In general, agnostic and realizable learning are actually not equivalent in this setting, even for nice loss functions (see \Cref{prop:infinite-lower}). The issue is that over infinite label spaces it is possible to encode hypotheses into the labels with infinite precision, making the class trivial to learn realizably, but impossible even with the smallest amount of noise (which wipes out the encoding). We show this type of counter-example is essentially the \textit{only barrier} to agnostic learnability. 

Somewhat more formally, we call a class $H$ \emph{discretely learnable} if for every $\varepsilon > 0$, there exists an $\varepsilon$-discretization\footnote{A discretization is roughly a class $H'$ over a (probably) finite label space that $\varepsilon$-covers $H$. See \Cref{sec:infinite} for details.} of $H$ that is learnable up to $O(\varepsilon)$ error. Discrete learnability can informally be thought of as a very weak type of noise tolerance that essentially acts only to rule out the above construction. We prove discrete and agnostic learnability are equivalent under weak conditions on the loss function.
\begin{theorem}[General Loss (Informal \Cref{thm:bounded,thm:approximate pseudometric})]\label{intro:infinite}
If $\ell$ is a (probably) upper bounded loss function\footnote{Here we mean that it is enough for $\ell$ to be bounded with good probability over the data distribution.} and either:
\begin{enumerate}
    \item $\ell$ is bounded from below
    \item $\ell$ satisfies a $C$-approximate triangle inequality
\end{enumerate}
then discrete and agnostic learnability are equivalent under $\ell$ up to polynomial factors.
\end{theorem}
We remark that in the latter case, the learner only achieves $C\cdot OPT+\varepsilon$ error and that this is tight (\Cref{prop:c-agn-lower}). To our knowledge, these results are new even to the distribution-free setting, where such an equivalence was only known for bounded Lipschitz \cite{bartlett1996fat,wolf2018mathematical} or binary-valued \cite{ben1995finiteY,david2016supervised} loss functions. In the models below, similar guarantees hold under the above assumptions on loss. We omit the exact dependence on $\ell$ for simplicity and discuss where relevant in the main body.



Another well-studied setting for loss functions beyond standard classification is \textit{adversarial robustness}. Robust learning is an extension of the PAC model introduced to handle adversarial perturbations at test time by taking a \textit{maximum} of the loss function over specified `perturbation sets' around the test point. We give a modification of \Cref{Intro:alg} in the robust setting that handles general loss and distributional assumptions.
\begin{theorem}[Robust Classification (Informal \Cref{thm:robust})]\label{intro:robust}
    Robust realizable and robust agnostic learning are equivalent up to polynomial factors.
\end{theorem}
Note this is completely independent of the perturbation sets. In the classification setting, \Cref{intro:robust} generalizes recent work giving such an equivalence in the distribution-free model \cite{montasser2019vc,montasser2021adversarially}, though the sample complexity of our algorithm suffers an extra factor of $\varepsilon^{-1}$ in this special case.

Finally, another recent setting that works with a sort of `modified' loss function is the learning of \textit{partial} functions, capturing `data-dependent' models such as halfspaces with geometric margin where. Here the functions in $H$ are allowed to be `undefined' on some portions of $X$, and the loss is fixed to $1$ under any undefined point no matter the response of the algorithm.
\begin{theorem}[Partial functions (Informal \Cref{thm:partial})]
    Realizable and agnostic learning of partial functions are equivalent up to polynomial factors.
\end{theorem}
In the distribution-free setting, a variant of this equivalence is known via compression schemes, but the above is new under distributional assumptions and more general loss (again at the cost of an extra $\varepsilon^{-1}$ factor).

\subsubsection{Constrained Models}

Another frequent modification of the PAC setting is to impose constraints either on how the algorithm $A$ uses the data, or on various properties of the output classifier itself. In this section, we cover application to three such examples: fairness, stability, and statistical queries.

We start with \textit{fairness}, where the goal is to output a hypothesis with low error \textit{conditional} on `treating similar individuals similarly' under a fixed metric on the data space. 
\begin{theorem}[Fair Learning (Informal \Cref{thm:fair})]
 Realizable and agnostic fair learning are equivalent up to polynomial factors.
\end{theorem}
Formally, the above result holds in Rothblum and Yona's \cite{rothblum2018probably} popular `Probably Approximately Correct and Fair' (PACF)-learning. To our knowledge it is the first such reduction even in the distribution-free setting, as Rothblum and Yona \cite{rothblum2018probably} study the agnostic case directly.

Another popular constraint in the literature is \textit{algorithmic stability}, where one roughly imposes that running the algorithm twice should produce similar results. We study a simple variant known as \textit{uniform stability} \cite{dagan2020pac} (closely related to `private prediction' \cite{dwork2018privacy}) which promises that for every fixed point in the space $x$ the output distribution of $A$ on $x$ is similar when $A$ is run on neighboring datasets.
\begin{theorem}[Uniform Stability (Informal \Cref{thm:stable})]\label{thm:intro-stability}
    Realizable and agnostic learning are equivalent under uniform stability up to polynomial factors.
\end{theorem}
This is known in the distribution-free setting by VC arguments \cite{dwork2018privacy,dagan2020pac}, but is new for general loss and under distributional assumptions. Unlike prior examples the sample complexity matches prior techniques.

Another way to constrain the algorithm $A$ is through the way it interacts with data. Perhaps the most popular example of this is the \textit{statistical query model}, where $A$ is constrained to approximating general population statistics (such algorithms are then guaranteed to satisfy certain nice properties such as noise-tolerance and privacy). We give a variant of our reduction for the statistical query setting as well.
\begin{theorem}[Statistical Query Model (Informal \Cref{thm:SQ})]
    Realizable and agnostic learning are equivalent in the statistical query model up to polynomial factors.
\end{theorem}
Variants of such a result are known via combinatorial characterization in the fixed distribution setting \cite{simon2007characterization,feldman2009agnostic}, but new in our general setup.

\subsubsection{Beyond Agnostic Learning}

In the introduction, we claimed \Cref{Intro:alg} can be used to build a learner satisfying \textit{any} ``finitely-satisfiable'' property, not just agnostic learning. In fact, one of the examples above already displays this property: \Cref{thm:intro-stability} does not require the base learner to be uniformly stable. Instead, we apply a uniformly stable agnostic learner to $C(S_U)$ in Step 3 of \Cref{Intro:alg} and `lift' uniform stability from finite to infinite classes. We give two further applications of this strategy: to malicious noise, and (semi)-private learning.


Kearns and Li's \cite{kearns1993learning} \textit{malicious noise} is a model for data contamination where one is given a faulty sample oracle $\bigo_M(\cdot)$ which (with some small probability) returns a completely adversarial pair $(x,y)$, and otherwise draws from a true `ground truth' distribution. A learner is said to be tolerant to malicious noise if it achieves standard PAC guarantees while drawing from $\bigo_M(\cdot)$. Like agnostic learning, tolerance to malicious noise is known to be achievable for finite classes \cite{kearns1993learning}. As a result, (a slight modification of) \Cref{Intro:alg} gives a blackbox reduction from learning with malicious noise to realizable learning.
\begin{theorem}[Realizable $\to$ Malicious (Informal \Cref{thm:malicious})]\label{intro:malicious}
    Realizable learning and learning with malicious noise are equivalent up to polynomial factors.
\end{theorem}
This extends the original result of \cite{kearns1993learning} to the distribution-family and general loss settings.

Our final application of \Cref{Intro:alg} is to the ubiquitous notion of \textit{differential privacy}. Informally, an algorithm is said to be differentially private if its output is not susceptible to small changes in the underlying sample (see \Cref{sec:privacy}). Privacy is a strong condition, and even relaxed notions require finite Littlestone dimension in the distribution-free setting \cite{alon2019private}, ruling out a direct reduction from realizable learning. However, \Cref{Intro:alg} does recover a weaker variant known as \textit{semi-privacy} \cite{beimel2013private} where the learner is allowed to use a few `public' unlabeled samples, but must otherwise maintain privacy with respect to the data. 
\begin{theorem}[Realizable $\to$ Semi-Private (Informal \Cref{thm:private-PAC})]\label{intro:semi-private}
Realizable and semi-private learning are equivalent up to polynomial factors.
\end{theorem}
This generalizes Beimel, Nissim, and Stemmer's \cite{beimel2013private} equivalence in the distribution-free setting and extends to general loss. Perhaps most interesting is that in this case, due to relying only on \textit{non-uniform covering} over more classical uniform convergence arguments, \Cref{Intro:alg} actually gives gives a quantitative improvement over prior methods.
\begin{corollary}[Semi-Optimal Semi-Private Learning (Informal \Cref{cor:private-class})]\label{intro:cor-privacy}
    Let $(X, H)$ be a class with VC-dimension $d$. Then $(X,H)$ is $\alpha$-semi-private, agnostically learnable in
    \[
    m_{\text{pub}}(\varepsilon,\delta,\alpha) \leq O\left(\frac{d+\log(1/\delta)}{\varepsilon}\right)
    \]
    unlabeled (public) samples, and
    \[
    m_{\text{pri}}(\varepsilon,\delta,\alpha) \leq O\left(\frac{d\log(1/\varepsilon) + \log(1/\delta)}{\varepsilon \cdot \min\{\varepsilon,\alpha\}}\right)
    \]
    labeled (private) samples.
\end{corollary}
For fixed $d$ and $\delta$, the public sample complexity of \Cref{intro:cor-privacy} is tight and resolves a conjecture of Alon, Bassily, and Moran \cite{alon2019limits} who gave the corresponding lower bound.\footnote{Specifically, any class with infinite Littlestone dimension requires $\Omega(\frac{1}{\varepsilon})$ public unlabeled samples to semi-privately learn.} The private complexity is off by a log factor from known lower bounds \cite{chaudhuri2011sample}, and it remains an interesting question whether this can be fixed.

\Cref{intro:semi-private} and \Cref{intro:cor-privacy} are also robust to light distribution shift between the public and private databases. This problem, called \textit{covariate shift}, is a commonly observed in machine learning practice and is especially of concern in privacy where a distribution over ``opt-in'' public users could easily differ from the overall distribution of private data. We discuss covariate shift in more depth in \Cref{sec:covariate}.
\subsection{Modification Archetypes}
Finally before moving to formal presentation of our methods and results, we briefly overview the four generic modifications of \Cref{Intro:alg} used in the above extensions, and outline where they can be found formally in the main body of the paper.

\paragraph{Discretization.} We'll start by discussing our main technique to extend \Cref{Intro:alg} to infinite label spaces. The basic idea is simple: since we cannot afford to run our learner over all possible labelings of $S_U$, we instead run the learner over labelings coming from some \textit{discretization} of the class.
As long as we have access to a learner for the discretization, we can then use the same arguments covered in \Cref{sec:overview-base} to prove various occurrences of property generalization. We formalize these notions more generally in \Cref{sec:infinite}, where we use the technique to prove \Cref{intro:infinite}. Discretization can also be used to handle learning models such as the statistical query setting which output real-valued query responses (see \Cref{sec:SQ}).

\paragraph{Subsampling.} Another core limitation to \Cref{Intro:alg} is access to clean unlabeled data. \Cref{Intro:alg} works by running a realizable learner over a representative set of unlabeled data, but, in practice, such data may often be corrupted, and data-dependent assumptions such as margin might mean that the optimal hypothesis isn't even well-defined on this set. We handle cases like these by a simple sub-sampling procedure: instead of running our realizable learner over labelings of $S_U$, we run the learner over all labelings of \textit{all subsets of $S_U$}. As long as $S_U$ contains some amount of uncorrupted data, this subsampling procedure will find it and we can maintain the guarantees discussed in \Cref{sec:overview-base}. We use this technique to prove property generalization for models such as robust learning (see \Cref{intro:robust} and \Cref{sec:robust}), partial learning (see \Cref{sec:partial}), and malicious noise (see \Cref{intro:malicious} and \Cref{sec:subsample}).

\paragraph{Replacing the Finite Learner.} In the introduction, we proposed a general paradigm (guiding principle) called property generalization: that a variant of any learning property which holds for finite classes should in fact hold for \textit{any} ``learnable'' class in the base model. The main idea relies on replacing Step 3 of \Cref{Intro:alg} (which, as stated, is an empirical risk minimization process) with a generic learner for finite classes with the desired property. For noise-tolerance properties such as agnostic and malicious noise, empirical risk minimization works. Properties such as privacy or stability, however, require a different finite learner. To prove \Cref{intro:semi-private}, for example, we replace the ERM process in \Cref{Intro:alg} with the exponential mechanism \cite{mcsherry2007mechanism}. We use a similar strategy in \Cref{sec:stable} to prove an analogous result for uniform stability.

\paragraph{Replacing the Base Learner.} Finally we note a very basic modification of \Cref{Intro:alg} that allows us to extend property generalization beyond the PAC setting: simply replace the input realizable PAC learner with a realizable learner in the desired model. This is usually combined with one of the techniques above depending on the specific application, e.g. to prove property generalization for robust learning and the statistical query model. The same idea can also be used to analyze semi-private learning with covariate shift (see \Cref{sec:covariate}) and property generalization for fair learning (see \Cref{sec:fair}).

\section{Distribution Family Classification}\label{sec:overview-base}
Since all of our results are derived from variants of \Cref{Intro:alg}, it is instructive to start by considering its basic analysis in our simplest non-trivial setting: distribution-family classification. We remark that this section is entirely pedalogical, and the results are subsumed in \Cref{sec:base-reduction}.

The distribution-family model captures learnability with arbitrary distributional assumptions, a well-studied relaxation of PAC learning in practice where worst-case distributional assumptions are often too strong, and encompasses both the distribution-free and distribution-dependent PAC settings. Unlike these models, however, we cannot assume uniform convergence in the distribution family setting, due to a classical example of Benedek and Itai \cite{benedek1991learnability}.
\begin{proposition}[Benedek and Itai \cite{benedek1991learnability}]
    \label{prop:uc-bc}
There exists a PAC-learnable class $(D,X,H)$ over binary labels and classification loss without the uniform convergence property.
\end{proposition}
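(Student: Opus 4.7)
My plan is to give an explicit construction of a triple $(D,X,H)$ witnessing the claimed separation. I would take $X = [0,1]$, let $D$ consist of (the single-element family containing) Lebesgue measure on $X$, and let $H = \{\mathbf{1}_F : F \subset [0,1],\ |F|<\infty\}$ be the class of indicators of finite subsets. The intuition driving the example is that $H$ has infinite VC dimension, so any uniform convergence claim ought to be in serious trouble, yet every $F$ in the defining collection is $D$-null, so there are essentially no hard realizable labelings for the learner to confront.

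The first step is to verify realizable PAC learnability with respect to $D$. For any target $h^* = \mathbf{1}_{F^*} \in H$, finiteness of $F^*$ gives $D(F^*) = 0$, so the trivial learner that ignores its sample and returns $\mathbf{1}_\emptyset \in H$ satisfies
\[
\Pr_{x \sim D}[\mathbf{1}_\emptyset(x) \neq h^*(x)] \;=\; D(F^*) \;=\; 0.
\]
This achieves zero error using zero samples, so $(D,X,H)$ is trivially PAC-learnable.

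The second step is to exhibit failure of uniform convergence. I would let $\mathcal{D}$ be the distribution on $X \times \{0,1\}$ whose $X$-marginal is $D$ and whose label is the constant $y = 1$, and draw a sample $S = (x_1,\dots,x_m) \sim \mathcal{D}^m$. The hypothesis $h_S \coloneqq \mathbf{1}_{\{x_1,\dots,x_m\}}$ lies in $H$ because its support is finite; its empirical classification loss on $S$ equals $0$, while its true loss equals $\Pr_{x \sim D}[x \notin \{x_1,\dots,x_m\}] = 1$. Thus, with probability $1$, some $h \in H$ has $|\hat{L}_S(h) - L_\mathcal{D}(h)| = 1$, ruling out uniform convergence for every $\epsilon < 1$ and every sample size $m$.

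The only genuine subtlety I anticipate is definitional: one must pin down which version of uniform convergence is being negated. The construction above defeats the standard agnostic-style formulation quantifying over arbitrary joint distributions with $X$-marginal $D$. If a strictly realizable definition is preferred, the same idea adapts by taking $h^* = \mathbf{1}_\emptyset$ (so labels are constantly $0$) and reversing the roles of empirical and true loss: the witness $h_S$ then has empirical loss $1$ versus true loss $0$. Either way no real calculation is needed beyond selecting the example, so I expect no technical obstacle once the definitions are fixed.
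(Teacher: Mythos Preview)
Your proposal is correct and follows essentially the same construction as the paper: $X=[0,1]$, $D$ uniform, and $H$ the indicators of finite subsets, with the witness $h_S=\mathbf{1}_{\{x_1,\dots,x_m\}}$ exhibiting an empirical--true gap of $1$. The only difference is that the paper additionally throws the all-$1$s function into $H$ (forcing a one-sample learner that branches on the observed label and giving a realizable target with all-$1$ labels), whereas by omitting it you get the even simpler constant learner $\mathbf{1}_\emptyset$ and must flip to the all-$0$ target (or the agnostic label-$1$ distribution) for the uniform-convergence failure; either variant is perfectly valid.
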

\begin{proof}
Let $X=[0,1]$, $D$ be the uniform distribution over $X$, $Y=\{0,1\}$, and $H$ consist of all indicator functions for finite sets $S \subset X$, as well as for $X$ itself. It is not hard to see that $(D,X,H)$ is realizably PAC-learnable by the following scheme in only a single sample: if the learner draws a sample labeled $1$, output the all $1$'s function. Otherwise, output all $0$s. When the adversary has chosen a finite set, with probability $1$ the learner draws a sample labeled $0$, and outputs a hypothesis with $0$ error (since the finite set has measure $0$). If the adversary chooses the all $1$'s function, the learner will always output the all $1$'s function.

On the other hand, it is clear that when the adversary chooses the all $1$'s function, no matter how many samples the learner draws, there will exist a hypothesis in the class that is poorly approximated by the sample. Namely the hypothesis whose support is given by the support of the sample itself has empirical measure $1$, but true measure $0$. As a result, this class fails to have the uniform convergence property despite its learnability.
\end{proof}
This simple example rules out many classical techniques in supervised learning, including uniform convergence based reductions (e.g. \cite{balcan2010discriminative,beimel2013private,alon2019limits}). Furthermore, since the distribution-family setting has no finitary characterization, we also cannot hope to take the traditional approach \cite{vapnik1974theory,benedek1991learnability,haussler1992decision,bartlett1996fat} of proving equivalence of realizable and agnostic models by simultaneous characterization.

With this motivation in mind, let's define distribution-family learning more formally. Let $X$ be a set (called the \textit{instance space}), $Y=\{0,1\}$ the set of binary labels, $\dist$ a family of distributions over $X$, and $H = \{h: X \to Y\}$ a family of binary classifiers. A tuple $(\dist,X,H)$ is said to be \textit{realizably learnable} if there exists an algorithm\footnote{Note that $\algo$ could be deterministic or randomized. This distinction has no effect on any of the arguments in this work.} $\algo$ and a function $n(\varepsilon,\delta)$ such that for every $\varepsilon,\delta > 0$, every choice of distribution $D \in \dist$, and every hypothesis $h \in H$, $\algo$ outputs a good classifier with high probability on samples of size $n(\varepsilon,\delta)$:
\[
\Pr_{S \sim D^{n(\varepsilon,\delta)}}[\text{err}_{D \times h}(\algo(S,h(S))) \leq \varepsilon] \geq 1-\delta,
\]
where $\text{err}_{D\times h}(\cdot)$ is commonly called the \textit{error} or \textit{risk} of $h$:
\[
err_{D\times h}(h') = \underset{x \sim D}{\Pr}[h'(x) \neq h(x)].
\]
Likewise, a tuple $(\dist,X,H)$ is said to be \textit{agnostically learnable} if there exists an algorithm $\algo$ which for every distribution $D$ over  $X \times Y$ whose marginal $D_X \in \dist$ outputs $h'$ close to the best hypothesis in $H$ with probability $1-\delta$:
\[
\Pr_{S \sim D^{n(\varepsilon,\delta)}}[\text{err}_{D}(\algo(S)) \leq OPT + \varepsilon] \geq 1-\delta,
\]
where $OPT=\inf_{h \in H}\{\text{err}_D(h)\}$ is the error of the best hypothesis in the class\footnote{We will assume throughout for simplicity that the infinum is realized by some element in the class, but this assumption can be removed with essentially no changes to any arguments.} and the risk $\text{err}_D(\cdot)$ is similarly defined:
\[
\text{err}_D(h') = \underset{(x,y) \sim D}{\Pr}[h'(x) \neq y].
\]
With this in mind, we can now state the most basic application of \Cref{Intro:alg}: the equivalence of agnostic and realizable learning for distribution-family classification.

\begin{theorem}[Realizable $\to$ Agnostic (Distribution-Family Classification)]\label{thm:intro-classification}
    Let $\algo$ be a realizable learner for $(\dist,X,H)$ using $n(\varepsilon,\delta)$ samples. Then \Cref{Intro:alg} is an agnostic learner for $(\dist,X,H)$ using:
    \[
    m_U(\varepsilon,\delta) \leq n(\varepsilon/2,\delta/2)
    \]
    unlabeled samples, and
    \[
    m_L(\varepsilon,\delta) \leq O\left(\frac{n(\varepsilon/2,\delta/2)+\log(1/\delta)}{\varepsilon^2}\right)
    \]
    labeled samples. Moreover if $(X,H)$ has finite VC dimension $d$, \Cref{Intro:alg} needs only
    \[
    m_L(\varepsilon,\delta) \leq O\left(\frac{d\log\left(1/\varepsilon\right)+\log(1/\delta)}{\varepsilon^2}\right)
    \]
    labeled samples.
\end{theorem}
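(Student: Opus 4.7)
The plan is to split the error of the algorithm's output into two pieces: the error of a \emph{specific} good hypothesis sitting inside the random cover $C(S_U)$ produced in Steps 1--2, and the loss incurred by empirical risk minimization over $C(S_U)$ in Step 3. The conceptual move is that although $D$ is arbitrary and has no obvious realizable learner, the unlabeled marginal $D_X$ still lies in $\dist$; hence if $h^\ast \in \argmin_{h \in H} \text{err}_D(h)$ denotes the agnostic optimum, the projected distribution $D^\ast \coloneqq D_X \times h^\ast$ is \emph{realizable} for $\algo$.

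First, I would set $m_U = n(\varepsilon/2,\delta/2)$ and consider $\tilde h \coloneqq \algo(S_U, h^\ast(S_U))$. Because $h^\ast(S_U)$ is one of the labelings of $S_U$ realized by $H$, we have $\tilde h \in C(S_U)$. Applying the realizable guarantee of $\algo$ to $D^\ast$, with probability at least $1-\delta/2$ over $S_U \sim D_X^{m_U}$ we get $\text{err}_{D^\ast}(\tilde h) \leq \varepsilon/2$. A symmetric-difference triangle inequality then yields
\[
\text{err}_D(\tilde h) \;\leq\; \text{err}_D(h^\ast) + \Pr_{x \sim D_X}[\tilde h(x) \neq h^\ast(x)] \;=\; \text{OPT} + \text{err}_{D^\ast}(\tilde h) \;\leq\; \text{OPT} + \varepsilon/2,
\]
so $C(S_U)$ contains a near-optimal hypothesis with high probability.

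Second, I would condition on $S_U$ and use the independent sample $S_L$ to select from $C(S_U)$ by ERM. Since $C(S_U)$ is finite and drawn independently of $S_L$, a Hoeffding-plus-union-bound argument gives uniform convergence on $C(S_U)$ up to error $\varepsilon/4$ with probability $1-\delta/2$ whenever $m_L = O((\log|C(S_U)| + \log(1/\delta))/\varepsilon^2)$. The empirical minimizer then has true error at most $\text{err}_D(\tilde h) + \varepsilon/2 \leq \text{OPT} + \varepsilon$. In the binary setting $|C(S_U)| \leq |H|_{S_U}| \leq 2^{m_U}$, which gives the first claimed bound. When $(X,H)$ has VC dimension $d$, one instead invokes the standard agnostic uniform-convergence bound for VC-dimension-$d$ classes (applied to $H$ in the proper case, or to $H|_{S_U}$ via Sauer--Shelah in general), trading $\log|C(S_U)| = O(m_U)$ for $O(d \log(1/\varepsilon))$.

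The main subtlety is the data-dependence of $C(S_U)$: ERM concentration bounds normally require the class to be fixed before sampling. This is handled by the two-sample structure of the algorithm --- $S_U$ and $S_L$ are drawn independently, so after conditioning on $S_U$ the class $C(S_U)$ is fixed relative to $S_L$. This decoupling, which the authors formalize as a \emph{non-uniform cover}, is exactly what prevents the adversary (who chooses $D$ without seeing $S_U$) from steering mass toward a hypothesis that the realizable learner happens to mis-estimate on this particular draw of $S_U$.
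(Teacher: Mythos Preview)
Your proposal is correct and follows essentially the same route as the paper: feed the optimal labeling $h^\ast(S_U)$ to the realizable learner to plant a near-optimal hypothesis in $C(S_U)$, transfer the error via the symmetric-difference triangle inequality, and finish with a finite-class Chernoff-plus-union bound on the independent sample $S_L$. One small clarification on the VC case: the paper does not invoke agnostic uniform convergence for $H$ directly (which would bypass the reduction), but rather bounds $|C(S_U)|$ by the Sauer--Shelah count $(e\,m_U/d)^d$ and then plugs in the known realizable sample complexity $m_U = n(\varepsilon/2,\delta/2) = O((d\log(1/\varepsilon)+\log(1/\delta))/\varepsilon)$ to get $\log|C(S_U)| = O(d\log(1/\varepsilon))$ --- this is your second option, and the one you should commit to.
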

Along with its novelty in the distribution-family setting (where no such equivalence was known), it is worth noting that in the distribution-free setting, \Cref{thm:intro-classification} actually recovers the same sample complexity bound as standard analysis of uniform convergence (though it remains off by a log factor from the optimal bound given by chaining \cite{li2001improved}). We also note that while unlabeled sample complexity is not usually considered separately from labeled complexity in the PAC setting, this will become a useful distinction in semi-supervised extensions considered later in the work. As such, it is instructive to keep the complexities separate for the time being.

With this out of the way, let's prove \Cref{thm:intro-classification}. The analysis breaks naturally into two parts, corresponding respectively to Step 2 and Step 3 of \Cref{Intro:alg}. In the first part, we'll show that $C(S_U)$, the set of outputs corresponding to running the realizable learner $\algo$ across all possible labelings of the unlabeled sample $S_U$, is in some sense a ``good approximation'' of the class $H$. More formally, the crucial observation is that for any choice of the adversary's distribution, $C(S_U)$ will (almost) always contain a hypothesis close to the optimal solution.
\begin{claim}\label{pf-overview-claim}
For any distribution $D$ over $X \times Y$ whose marginal $D_X \in \dist$, with probability $1-\delta/2$, there exists $h' \in C(S_U)$ which is within $\varepsilon/2$ of the optimal risk:
\[
err_{D}(h') \leq OPT + \varepsilon/2.
\]
\end{claim}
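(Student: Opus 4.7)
The plan is to exhibit a specific element of $C(S_U)$ that already has nearly optimal agnostic risk, rather than trying to argue about $C(S_U)$ as a whole. Let $h^\ast \in H$ be a hypothesis achieving $\text{err}_D(h^\ast) = OPT$ (assumed to exist by the footnote; the general case needs only a $o(1)$ perturbation). The key structural observation is that $h^\ast(S_U)$ is a labeling produced by an element of $H$, so $h^\ast|_{S_U} \in H|_{S_U}$, which means the specific hypothesis
\[
h' := \algo(S_U, h^\ast(S_U))
\]
is one of the elements enumerated in $C(S_U)$ by Step 2 of \Cref{Intro:alg}.

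Next, the trick is to view the input $(S_U, h^\ast(S_U))$ as a sample from a \emph{realizable} problem, even though the adversary's distribution $D$ is not realizable by $H$. Concretely, $S_U \sim D_X^{m_U}$ with $D_X \in \dist$, and the labels $h^\ast(S_U)$ are exactly what the learner would see in the realizable model with target $h^\ast \in H$ and marginal $D_X$. Thus the realizable PAC guarantee of $\algo$ applies: with $m_U = n(\varepsilon/2, \delta/2)$ samples, with probability at least $1 - \delta/2$ over $S_U$,
\[
\Pr_{x \sim D_X}[h'(x) \neq h^\ast(x)] \;\leq\; \varepsilon/2.
\]

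The final step is a triangle inequality turning disagreement with $h^\ast$ into agnostic risk. For any $(x,y)$, if $h'(x) \neq y$ then either $h^\ast(x) \neq y$ or $h'(x) \neq h^\ast(x)$, so
\[
\text{err}_D(h') \;\leq\; \text{err}_D(h^\ast) + \Pr_{x \sim D_X}[h'(x) \neq h^\ast(x)] \;\leq\; OPT + \varepsilon/2,
\]
which is the claim.

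There is no real obstacle here; the only conceptual point to emphasize is that realizable learnability of $(\dist, X, H)$ is defined to hold uniformly over the choice of target $h \in H$ \emph{and} marginal in $\dist$, which is exactly what lets us instantiate the guarantee at $(D_X, h^\ast)$ despite $D$ itself being agnostic. All of the genuine difficulty of converting realizable to agnostic learning is deferred to Step 3 of \Cref{Intro:alg} (selecting the best member of $C(S_U)$ via the labeled sample), which this claim is not responsible for.
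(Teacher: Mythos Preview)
Your proposal is correct and follows essentially the same route as the paper: the paper first states the covering guarantee for an arbitrary $h\in H$ (Lemma~\ref{intro:lemma-cover}) and then instantiates it at $h_{OPT}$, whereas you go directly to $h^\ast=h_{OPT}$, but the argument---identify $h'=\algo(S_U,h^\ast(S_U))\in C(S_U)$, invoke the realizable guarantee on the pair $(D_X,h^\ast)$, and finish with the union-bound/triangle inequality $\text{err}_D(h')\le \text{err}_D(h^\ast)+\Pr_{D_X}[h'\ne h^\ast]$---is identical.
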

Once we have this claim, the second step is to show that Step 3, an empirical risk minimization process on $C(S_U)$, gives the desired agnostic learner. This actually follows from standard arguments. In particular, given a hypothesis $h \in C(S_U)$, let
\[
err_{S_L}(h) = \Pr_{(x,y) \sim S_L}[h(x) \neq y]
\]
denote its \textit{empirical risk} with respect to $S_L$. Since $C(S_U)$ is finite, a standard Chernoff+Union bound gives that with probability at least $1-\delta/2$, the empirical risk of every hypothesis in $C(S_U)$ with respect to $S_L$ is close to its true risk. Then as long as $S_L$ is sufficiently large, empirical risk minimization 
returns a solution with at most $OPT + \varepsilon$ error with high probability (we'll formalize this in a moment).

It remains to prove \Cref{pf-overview-claim}. The key observation lies in an equivalence between realizable PAC-learning and a weak type of randomized covering: for any fixed $h \in H$, $C(S_U)$ contains a hypothesis close to $h$ with high probability.
\begin{lemma}\label{intro:lemma-cover}
For any distribution $D$ over $X \times Y$ with marginal $D_X \in \dist$ and any $h \in H$, with probability $1-\delta/2$, there exists $h' \in C(S_U)$ which is within $\varepsilon/2$ of $h$ in classification distance:
\[
\Pr_{x \sim D_X}[h'(x) \neq h(x)] \leq \varepsilon/2.
\]
\end{lemma}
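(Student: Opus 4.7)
The statement is essentially an immediate consequence of the fact that $\algo$ is a realizable learner combined with the construction of $C(S_U)$: by looping over all labelings of $S_U$ in $H|_{S_U}$, Step 2 of \Cref{Intro:alg} is in particular running $\algo$ on the labeling induced by the target $h$, so the realizable guarantee applies with $h$ as the ``truth.''

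More concretely, fix any $h \in H$ and any $D$ with $D_X \in \dist$. Consider the random variable $\algo(S_U, h(S_U))$ where $S_U \sim D_X^{m_U}$. Since $h(S_U) \in H|_{S_U}$ by definition, the output $h^\ast := \algo(S_U, h(S_U))$ lies in $C(S_U)$ deterministically. Now observe that $(S_U, h(S_U))$ is exactly a realizable sample of size $m_U$ drawn from $D_X$ and labeled by $h \in H$. Since $m_U \geq n(\varepsilon/2, \delta/2)$, the defining guarantee of a realizable learner for $(\dist, X, H)$ yields
\[
\Pr_{S_U \sim D_X^{m_U}}\Bigl[\Pr_{x \sim D_X}[h^\ast(x) \neq h(x)] \leq \varepsilon/2\Bigr] \geq 1 - \delta/2.
\]
On this event, $h^\ast \in C(S_U)$ witnesses the lemma, completing the proof.

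\textbf{Where the work lies.} There is essentially no obstacle here; the lemma is a direct unpacking of definitions once one recognizes the key structural point that $C(S_U)$ contains $\algo$'s output on every labeling achievable by $H$, in particular the labeling $h(S_U)$. The only mild subtlety is that we do not know the adversary's $h$ in advance, but this is irrelevant because $C(S_U)$ is constructed by exhausting all labelings simultaneously, and the realizable guarantee of $\algo$ holds uniformly over the choice of target $h$ and of $D_X \in \dist$. The real difficulty of \Cref{thm:intro-classification} is not this lemma but the upgrade from \emph{cover-a-single-hypothesis} to \emph{cover-the-optimum}, i.e.\ \Cref{pf-overview-claim}; that step presumably requires pairing \Cref{intro:lemma-cover} with a hypothesis in $H$ that is close to $OPT$ and then arguing that an approximation to this hypothesis in $C(S_U)$ is also nearly optimal (which will cost an additional $\varepsilon/2$ in error via the triangle inequality for classification distance).
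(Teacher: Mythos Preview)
Your proof is correct and takes essentially the same approach as the paper: both observe that $C(S_U)$ by construction contains $\algo(S_U,h(S_U))$, and then invoke the realizable PAC guarantee of $\algo$ with target $h$ and marginal $D_X \in \dist$ at parameters $(\varepsilon/2,\delta/2)$. Your added commentary about where the real work of \Cref{thm:intro-classification} lies also matches the paper's subsequent argument.
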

\begin{proof}
The proof is essentially immediate from the definition of realizable PAC-learning. $\algo$ promises that for any $h \in H$ and $D \in \dist$, a $1-\delta/2$ fraction of \textit{labeled} samples $(S,h(S)) \sim D^{n(\varepsilon/2,\delta/2)}$ satisfy 
\[
\text{err}_{D\times h}[\algo(S,h(S))] = \underset{D}{\Pr}[h'(x) \neq h(x)] \leq \varepsilon/2,
\]
where $h'=\algo(S,h(S))$. Since $C(S_U)$ contains $\algo(S_U,h(S_U))$ for every $h \in H$ by definition, the result follows.
\end{proof}
More generally, we call such objects \textit{non-uniform covers}. 
\begin{definition}[Non-uniform Cover (Informal \Cref{def:prob-cover})]
Let $(X,H)$ be a class over label space $Y$, $D$ a marginal distribution over $X$, and $C$ a random variable over the power set $P(H)$. We call $C$ a non-uniform $(\varepsilon,\delta)$-cover of $H$ with respect to $D$ if for every $h \in H$:
\[
\Pr_{C}\left[\exists h' \in C: \Pr_{x \sim D}[h'(x) \neq h(x)] \leq \varepsilon\right] \geq 1-\delta.
\]
\end{definition}
Note that \Cref{intro:lemma-cover} (and in general non-uniform covering) does not imply that $C(S_U)$ contains hypotheses close to every $h \in H$ \textit{simultaneously}. This stronger object is called a \textit{uniform} cover and takes provably more samples to construct (see \Cref{sec:cover}). In our case, a non-uniform cover is sufficient. Since the guarantee holds for every \textit{fixed} $h \in H$, it must hold in particular for the optimal hypothesis $h_{OPT}$, so $C(S_U)$ contains some $h'$ within $\varepsilon/2$ of optimal. Let's now formalize these ideas and put everything together to prove \Cref{thm:intro-classification}.
\begin{proof}[Proof of Theorem \ref{thm:intro-classification}] 
Let $D$ be the adversary's distribution over $X \times Y$, and let $h_{OPT} \in H$ be a hypothesis achieving the optimal error. By \Cref{intro:lemma-cover}, with probability $1-\delta/2$, $C(S_U)$ contains a hypothesis $h'$ such that:
\[
\Pr_{x \sim D_X}[h'(x) \ne h_{OPT}(x)] \leq \varepsilon/2.
\]
This implies \Cref{pf-overview-claim} (that $C(S_U)$ contains a hypothesis with error at most $OPT+\varepsilon/2$) since
\begin{align*}
    err_D(h') &= \underset{(x,y) \sim D}{\Pr}[h'(x) \neq y]\\
    &\leq \underset{(x,y) \sim D}{\Pr}[h_{OPT}(x) \neq y] + \underset{(x,y) \sim D}{\Pr}[h_{OPT}(x) \neq h'(x)]\\
    &\leq OPT + \varepsilon/2.
\end{align*}
We can now use standard empirical risk minimization bounds on $C(S_U)$ to find a hypothesis with error at most $OPT+\varepsilon$. Chernoff and union bounds imply that with probability at least $1-\delta/2$, the empirical risk of every hypothesis in $C(S_U)$ on a sample of size $O\left(\frac{\log(|C(S_U)|/\delta)}{\varepsilon^2}\right)$ is at most $\varepsilon/4$ away from its true error. Since $h'$ has error at most $OPT+\varepsilon/2$, its empirical risk is at most $OPT+3\varepsilon/4$, and by the above guarantee any hypothesis in $C(S_U)$ with empirical risk at most $OPT+3\varepsilon/4$ has true error at most $OPT + \varepsilon$. 

Putting everything together, we have that with probability $1-\delta$ over the entire process, the empirical risk minimizer of $C(S_U)$ has error at most $OPT+\varepsilon$ as desired. The sample complexity bounds follow from noting that $|C(S_U)|$ is at most $2^{n(\varepsilon/2,\delta/2)}$, and at most $\left(\frac{e\cdot n(\varepsilon/2,\delta/2)}{d}\right)^d$ if the class has VC dimension $d$. The sample complexity bound for the latter case then follows by plugging in the standard bound for distribution-free classification: $n(\varepsilon/2,\delta/2) \leq O\left( \frac{d\log(1/\varepsilon) + \log(1/\delta)}{\varepsilon}\right)$.
\end{proof}

\section{Related Work}\label{sec:related-work}
Agnostic learning is a very widely studied model across learning theory, and works across many different sub-areas have noted model specific equivalences with realizable learning. Here we'll survey a few representative examples, and discuss how they relate and differ from our approach.

\subsection{Beyond Binary Classification}
\paragraph{Uniform Convergence and Multiclass Classification.} 

It is well known that the uniform convergence equals learnability paradigm continues to hold for $0/1$-valued loss functions over constant-size label spaces \cite{Blumer,ben1995finiteY,vapnik1971,natarajan1989}, and that agnostic and realizable learning are equivalent as a result. On the other hand, Daniely, Sabato, Ben-Devid, and Shalev-Shwartz \cite{daniely2015} showed this is no longer the case as the number of labels grows large. In this regime, even basic multi-class learning is no longer equivalent to uniform convergence, so the connection between realizable and agnostic learning becomes non-trivial. A few years later, David, Moran, and Yehuyadoff (DMY) \cite{david2016supervised} proved the equivalence nevertheless holds in the infinite multi-class setting through the weaker \textit{sample compression equals learnability} paradigm. While more general than the uniform convergence paradigm, their proof remains model-specific and fails in many of the settings we consider, e.g.\ partial learning \cite{alon2021theory}.

\paragraph{Discretization and General Loss Functions.} Basic forms of discretization were also considered back in the mid 90s in work on characterizing the learnability of real-valued functions. In a seminal work, Bartlett, Long, and Williamson (BLW) \cite{bartlett1996fat} proved that a scale-sensitive measure introduced by Kearns and Schapire \cite{Kearns93} called fat-shattering dimension characterizes learnability under bounded Lipschitz loss functions.\footnote{Though the original work only considers $\ell1$ loss, their techniques generalize to Lipschitz loss, see for instance \cite{wolf2018mathematical}.} BLW use a basic form of discrete learning (called quantization) to prove that fat-shattering dimension is a necessary condition, and use uniform convergence to prove sufficiency. 
We give a similar argument as BLW in the necessary direction, but show that uniform convergence is not necessary for the equivalence to hold, and instead use \Cref{Intro:alg} to appeal directly to discrete learnability. This allows us to extend BLW's result across a much more general set of loss functions and scenarios without strong model-specific assumptions.

\subsection{Semi-supervised, Active, and Semi-Private Learning}
Our reduction hinges on combining a realizable learner with unlabeled data to cut down the number of potential hypotheses in our class. The use of unlabeled samples to this effect is one of the core ideas in the field of \textit{semi-supervised} learning \cite{balcan2010discriminative,Zhu2017}. Here, it is usually additionally assumed that the function to be learnt has some relation (or `compatibility') to the underlying data distribution, for example it might have large margin on unlabelled data as in Transductive SVM \cite{joachims1999transductive} or redundant sufficient information as in Co-training \cite{blum1998combining,dasgupta2001}. In their seminal work on the topic, Balcan and Blum \cite{balcan2010discriminative} employed a similar strategy to \Cref{Intro:alg} in which they draw an unlabeled sample $S_U$ and select hypotheses consistent with each possible labeling based upon compatibility. They argue via uniform convergence that this results in a uniform cover, and then use empirical risk minimization to select a good hypothesis in the cover. It is worth noting that around the same time a similar strategy independently found use in the online learning literature in work of Ben-David, Pal, and Shalev-Schwarz \cite{ben2009agnostic} who simulated the so-called `standard optimal algorithm' (SOA) over a sequence of examples and applied weighted majority \cite{littlestone1994weighted} over the resulting set of hypotheses to obtain an agnostic online learner.

Similar strategies have also found use in the related \textit{active learning} literature. Hanneke and Yang \cite{hanneke2015minimax} use the same technique to build a cover from unlabeled samples (adding one hypothesis consistent with each possible labeling), and then apply active (adaptive) query algorithms to learn the best hypothesis in the cover in as few labeled samples as possible. This generalized earlier work of Dasgupta \cite{dasgupta2005coarse}, who assumed a priori that the cover was known to the learner ahead of time. Most recently, the approach has seen use in the study of semi-private learning. In their original work on the model, Beimel, Nissim, and Stemmer \cite{beimel2013private} again apply the same trick for building a uniform cover, but then find the best hypothesis privately via the exponential mechanism (similar to our proof of \Cref{intro:semi-private}). The analysis of this strategy was later improved by Alon, Bassily, and Moran (ABM) \cite{alon2019limits}.

The above works differ from ours in two crucial senses. First, each work focuses solely on developing an algorithm for their specific framework (rather than working to understand a more general equivalence or reduction between settings). In this sense, one can view each of these prior results as a specific instance of our general framework where the ``base learner'' in our reduction is restricted to be an empirical risk minimizer (or SOA in the online setting), and a problem-specific learner for the relevant property (online, agnostic, active, or private) is then applied over the resulting cover. Second, and perhaps most importantly, these previous works all rely fundamentally on uniform convergence.\footnote{The online setting is the only exception, but this model diverges much more substantially from the PAC-type learning we consider in this work than previous uniform convergence based methods.} This means that their algorithms break down as soon as one moves away from the original PAC model (even to say the basic distribution-dependent setting), and can also lead to sub-optimal sample complexity bounds. In the analysis of semi-private learning, for instance, we show that avoiding  uniform convergence leads to asymptotically better bounds, actually resolving the public sample complexity of the model altogether. Indeed, one can show that building a uniform cover requires asymptotically more unlabeled samples than a non-uniform one, and therefore cannot result in optimal semi-supervised algorithms.\footnote{Formally we prove this for the distribution-family model, but conjecture it holds in the distribution-free setting too.} 

\subsection{Non-Uniform Covering and Probabilistic Representations}
Covering techniques have long been used in learning theory, and while almost all prior works focus on \textit{uniform} notions (where all hypotheses are covered simultaneously), there is one notable exception. In 2013, Beimel, Nissim, and Stemmer (BNS) \cite{beimel2013characterizing} introduced \textit{probabilistic representations}, a strong randomized form of covering used to characterize pure differentially private learning. In the language of our work, given a class $(X,H)$, a probabilistic representation is a distribution over subsets of $H$ which is a non-uniform cover \textit{simultaneously over all distributions of $X$}. BNS prove that private learning is equivalent to the existence of a probabilistic representations for the class. Equivalently, this can be thought of as the ability to build a non-uniform cover without access to the underlying distribution at all. On the other hand, we are interested in the much weaker setting where a non-uniform cover can be built from a bounded number of samples from the distribution (and crucially argue that this is equivalent to realizable learning). Thus in a sense, our core connection between realizable learning and non-uniform covering can be thought of as an analog of BNS' characterization of private learning by probabilistic representations. 


\subsection*{Paper Organization}
The main body of this paper is split into two main portions. In the first we cover preliminaries (\Cref{sec:prelims}), the base reduction for all finitely supported loss functions (\Cref{sec:base-reduction}), and discuss in detail the four main modification archetypes to \Cref{Intro:alg}: discretization (\Cref{sec:infinite}), sub-sampling (\Cref{sec:subsample}), replacing ERM (\Cref{sec:privacy}), and changing the base model (\Cref{sec:covariate}), covering extensions to infinite label classes, malicious noise, semi-private learning, and covariate shift respectively. In the remainder of the paper we cover applications of these modified versions to doubly-bounded loss (\Cref{sec:doubly-bounded}), robust learning (\Cref{sec:robust}), partial learning (\Cref{sec:partial}), uniformly-stable learning (\Cref{sec:stable}), the statistical query model (\Cref{sec:SQ}), and fair learning (\Cref{sec:fair}), and discuss further connections of non-uniform covers to previous notions of covering (\Cref{sec:cover}). Each of the latter sections are self-contained and the reader is encouraged to skip directly to any model they wish to see directly.

\section{Preliminaries}\label{sec:prelims}
Before moving to a more formal discussion of our results, we'll cover the most basic learning models discussed in this work: standard (distribution-free) PAC-learning and distribution-family PAC-learning. Extended models we consider beyond these (e.g. malicious noise, robust learning, partial learning, etc.) will instead be introduced in their respective sections.

\subsection{PAC-Learning}
We'll start by reviewing the seminal PAC-learning model of Valiant \cite{valiant1984theory} and Vapnik and Chervonenkis \cite{vapnik1974theory}. We start with a few core definitions for the setting of general loss. Let $X$ be an arbitrary set called the \emph{instance space} (e.g. $\mathbb{R}^d$), $Y$ a set called the \emph{label space} (e.g. $\{0,1\}$), and $H$ a family of labelings of $X$ by $Y$ (that is a family of functions of the form $h: X \to Y$). Given a class $(X,H)$, it will often be useful to consider its \textit{growth function} $\Pi_H(n)$ which measures the maximum size of $H$ when restricted to a sample of size $n$:
\[
\Pi_H(n) = \max_{h \in H, S \in X^n}(\left|H|_S\right|)).
\]
We note that the growth function is trivially bounded by $|Y|^n$, but one can often give stronger bounds when $(X,H)$ satisfies some finite combinatorial dimension (e.g.\ VC-dimension for the binary case).

While PAC-learning is sometimes used to refer only to classification, we will study the model under general loss functions. With that in mind, we call a function $\ell: Y \times Y \to \Rplus$ a \emph{loss function} if $\ell(y,y)=0$ for all $y \in Y$. We say a loss $\ell$ satisfies the \emph{identity of indiscernibles} if $\ell(y_1,y_2) = 0$ iff $y_1=y_2$. Given any distribution $D$ over $X \times Y$ and loss $\ell$, the \emph{risk} of a labeling $h: X \to Y$ with respect to $D$ and $\ell$ is its expected loss:
\[
\text{err}_{D,\ell}(h) = \underset{(x,y) \sim D}{\mathbb{E}}[\ell(h(x),y)].
\]
The goal of learning is generally to find a classifier $h \in H$ that minimizes risk. More formally, there are two commonly studied variants of this problem. The original formulation, now called \emph{realizable} learning, assumes the existence of a hypothesis in $H$ with no loss.
\begin{definition}[(Realizable) PAC-learning]
We say $(X, H, \ell)$ is realizable PAC-learnable if there exists an algorithm $\algo$ and function $n(\varepsilon,\delta)$ such that for all $\varepsilon,\delta > 0$ and distributions $D$ over $X\times Y$ such that $\min_{h \in H}{\text{err}_{D,\ell}(h)}=0$:
\[
\Pr_{S \sim D^{n(\varepsilon,\delta)}}[\text{err}_{D,\ell}(\algo(S)) > \varepsilon] \leq \delta.
\]
$\algo$ is called \textbf{proper} if it outputs only labels in $H$.
\end{definition}
Perhaps a more realistic variant of PAC-learning is to drop this restriction on the adversary, and let them choose an arbitrary distribution over $X \times Y$. This model, introduced by Haussler \cite{haussler1992decision} and Kearns, Schapire, and Sellie \cite{kearns1993learning}, is known as \emph{agnostic} learning.
\begin{definition}[(Agnostic) PAC-learning]
We say $(X, H, \ell)$ is agnostic PAC-learnable if there exists an algorithm $\algo$ and function $n(\varepsilon,\delta)$ such that for all $\varepsilon,\delta > 0$ and distributions $D$ over $X \times Y$:
\[
\Pr_{S \sim D^{n(\varepsilon,\delta)}}[\text{err}_{D,\ell}(\algo(S)) > OPT + \varepsilon] \leq \delta,
\]
where $OPT = \min_{h \in H}\{\text{err}_{D,\ell}(h)\}$.
\end{definition}
For some settings covered in this work, it will turn out that reaching $OPT+\varepsilon$ error is too stringent of a condition. However, we will show in these cases that it is sometimes possible to maintain a weaker guarantee and learn up to $c \cdot OPT + \varepsilon$ error for some constant $c>1$. We call such classes \emph{$c$-agnostic learnable}.

Finally, we note that for simplicity when $\ell$ is the standard ``classification error:''
\[
\ell(y_1,y_2) = \begin{cases}
0 & \text{if } y_1=y_2\\
1 & \text{else},
\end{cases}
\]
we'll simply write $(X,H)$ to mean $(X,H,\ell)$. Realizable and Agnostic Learning are well studied under many basic loss functions including binary classification, where both models are known to be characterized by a combinatorial parameter called VC-dimension.

\subsection{Learning Under Distribution Families}
\label{sec:def-arbitrary-distribution}
The standard PAC-models described above are often called \emph{distribution-free} due to the fact that no assumptions are made on the marginal distribution over $X$. In practice, however, this is usually too worst-case an assumption. We often expect distributions in nature to be ``nice'' in some way, or at least somewhat restricted. This is reflected in the fact that popular machine learning algorithms usually significantly outperform the PAC-model's worst-case generalization bounds. Indeed such niceness assumptions have long been popular in learning theory as well, where conditions such as tail bounds or anti-concentration are frequently used to build efficient algorithms. 

These ideas are captured more generally by a simple (but notoriously difficult) extension to the PAC framework originally proposed by Benedek and Itai \cite{benedek1991learnability}, where the adversary is restricted to picking from a fixed, known set of distributions.
\begin{definition}[(Realizable) Distribution-Family PAC-learning]
Let $X$ be an instance space and $\dist$ a family of distributions over $X$. We say $(\dist, X, H, \ell)$ is realizable PAC-learnable if there exists an algorithm $\algo$ and function $n(\varepsilon,\delta)$ such that for all $\varepsilon,\delta > 0$ and distributions $D$ over $X\times Y$ satisfying:
\begin{enumerate}
    \item The marginal $D_X \in \dist$,
    \item $\min_{h \in H} \text{err}_{D,\ell}(h) = 0$,
\end{enumerate}
we have
\[
\Pr_{S \sim D^{n(\varepsilon,\delta)}}[\text{err}_{D,\ell}(\algo(S)) > \varepsilon] \leq \delta.
\]
\end{definition}
Agnostic learning is defined similarly. The adversary must still choose a marginal distribution in $\dist$, but the conditional labeling can be arbitrary.
\begin{definition}[Agnostic Distribution-Family PAC-learning]
We say $(\dist, X, H, \ell)$ is agnostic PAC-learnable if there exists an algorithm $\algo$ and function $n(\varepsilon,\delta)$ such that for all $\varepsilon,\delta > 0$ and distributions $D$ over $X \times Y$ satisfying:
\begin{enumerate}
    \item The marginal $D_X \in \dist$,
\end{enumerate}
$\algo$ outputs a good hypothesis with high probability:
\[
\Pr_{S \sim D^{n(\varepsilon,\delta)}}[\text{err}_{D,\ell}(\algo(S)) > OPT + \varepsilon] \leq \delta
\]
where $OPT = \min_{h \in H}\{\text{err}_{D,\ell}(h)\}$.
\end{definition}
The weaker $c$-agnostic learning is defined analogously with $OPT$ replaced by $c \cdot OPT$. Unlike the standard model, very little is known about distribution-family learnability. A number of works have given partial characterizations of the model \cite{benedek1991learnability,dudley1994metric,kulkarni1997learning,vidyasagar2001closure}, but it was recently shown by Lechner and Ben-David \cite{lechner2023impossibility} that no general finitary characterization is possible.

\section{The Core Reduction: Finite Label Classes}\label{sec:base-reduction}
In this section, we give a more detailed exposition of our main reduction as covered in \Cref{sec:overview-base} for the general setting of arbitrary loss on constant size label spaces, matching lower bounds, and additional discussion of non-uniform covers. As mentioned previously, since there is no combinatorial characterization of learnability in the distribution-family model \cite{lechner2023impossibility}, standard techniques \cite{Blumer,ben1995finiteY,vapnik1971,natarajan1989} cannot be used.

Before jumping into our reduction proper, it is worth re-iterating why we can't simply take the approach of prior works and rely on \textit{uniform convergence}, a strong condition which promises that on a large enough sample, the empirical error of every hypothesis will be close to its true error. While uniform convergence was a very popular technique in the early years of learning, practitioners have since moved away from the paradigm which fails to capture learning rates seen in practice \cite{zhang2021,nagarajan2019}. Indeed it soon became clear that the technique failed to capture even basic theoretical models such as the distribution-dependent setting (as discussed in \Cref{prop:uc-bc}). In later sections, we will even see distribution-free models where uniform convergence fails, such as the Partial PAC model \cite{long2001agnostic,alon2021theory} which captures realistic scenarios such as learning with margin. Since even the most basic modifications of PAC-learning fail to satisfy uniform convergence, it is clear we need to move beyond the condition to gain a more general understanding of the common phenomenon of equivalence between learning models.

Instead of relying on uniform convergence, our core observation is an equivalence between learning and sample access to a combinatorial object we call a \textit{non-uniform cover}.
\begin{definition}[Non-uniform Cover]\label{def:prob-cover}
Let $(X,H)$ be a class over label space $Y$ and $L_{X,Y}$ denote the family of all labelings from $X$ to $Y$. If $C$ is a random variable over the power set $P(L_{X,Y})$ and $d: L_{X,Y} \times L_{X,Y} \to \Rplus$ is a ``distance'' function between labelings, we call $C$ a non-uniform $(\varepsilon,\delta)$-cover of $H$ with respect to $d$ if for all $h \in H$:
\[
\Pr_{T \sim C}\left[\exists h' \in T: d(h', h) \leq \varepsilon\right] \geq 1-\delta.
\]
We call $C$ \textbf{bounded} if its support lies entirely on subsets of size at most some $k \in \mathbb{N}$, and we call the smallest such $k$ its \textbf{size}.
\end{definition}
Non-uniform covers share a close connection to several notions of covering used throughout the learning literature such as uniform covers \cite{alon2019limits} and fractional covers \cite{alon2021adversarial}. We discuss these connections in more detail in \Cref{sec:cover}. For the moment, we note only that previous works using the strictly stronger notion of uniform covering necessarily lose factors in the sample complexity as a result. We discuss this further in \Cref{sec:privacy}.

In \Cref{sec:overview-base}, we argued (at least implicitly) that once we have sampling access to a bounded non-uniform cover, agnostic learnability follows from standard arguments. Namely since a sample $T$ has bounded size and is guaranteed to contain a concept ``close'' to optimal, it suffices to run empirical risk minimization over about $\log(|T|/\delta)/\varepsilon^2$ samples. The key to our reduction therefore boils down to turning blackbox access to a realizable PAC-learner into sampling access to some relevant non-uniform cover. This is given by Step 2 of \Cref{Intro:alg}, which we rewrite here as a subroutine called \textsc{LearningToCover}.

\begin{algorithm}[H]
\SetAlgoLined
\textbf{Input:} Hypothesis Class $H$, Realizable PAC-Learner $\algo$, Unlabeled Sample size $S_U$.
\\
\textbf{Algorithm:}
\begin{enumerate}[leftmargin=*]
    \item Run $\algo$ over all possible labelings of $S_U$ by $H$.
    \item \textbf{Return} the set of responses $C(S_U)$:
    \[
    C(S_U) \coloneqq \{ \algo(S_U,h(S_U))~|~ h \in H|_{S_U}\}.
    \]
\end{enumerate}
 \caption{\textsc{LearningToCover}($H$, $\algo$, $S_U$)}
 \label{alg:learningtocover}
\end{algorithm}

In fact, we already argued in \Cref{sec:overview-base} that \textsc{LearningToCover} gives sampling access to a non-uniform cover, but we will re-write the result here in this formulation for convenience.
\begin{lemma}[Core Lemma: Realizable Learning Implies Non-Uniform Covering]\label{lemma:cover}
Let $\algo$ be an algorithm that $(\varepsilon,\delta)$-PAC learns a class $(\dist, X, H, \ell)$ in $n=n(\varepsilon,\delta)$ samples. Then for any $D \in \dist$, running \textsc{LearningToCover} on $S_U \sim D^n$ returns a sample from a size $\Pi_H(n)$, non-uniform $(\varepsilon,\delta)$-cover with respect to the standard distance between hypotheses:
\[
d(h,h') = \underset{D}{\mathbb{E}}[\ell(h'(x),h(x))].
\]
\end{lemma}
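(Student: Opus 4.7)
The plan is to fix an arbitrary target hypothesis $h \in H$ and show that, with probability at least $1-\delta$ over $S_U \sim D^n$, some member of $C(S_U)$ lies within distance $\varepsilon$ of $h$. Crucially, the non-uniform cover definition only demands this guarantee pointwise per fixed $h$, so no union bound over $H$ will ever be required.

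The central construction is to build, for each fixed $h \in H$, a distribution $D_h$ on $X \times Y$ whose $X$-marginal is $D$ and whose label is deterministically $h(x)$. Then $h$ itself achieves zero risk on $D_h$, so $D_h$ falls squarely in the realizable regime of $(\dist, X, H, \ell)$: its marginal lies in $\dist$ by assumption, and $\min_{h'' \in H} \text{err}_{D_h,\ell}(h'') = 0$. Invoking the realizable PAC guarantee of $\algo$ on $D_h$ at accuracy $\varepsilon$ and confidence $\delta$ with sample size $n = n(\varepsilon,\delta)$ yields that with probability at least $1-\delta$ over $S_U \sim D^n$, the hypothesis $h^\star := \algo(S_U, h(S_U))$ satisfies $\text{err}_{D_h,\ell}(h^\star) \leq \varepsilon$. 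By construction of $D_h$, this error equals $\mathbb{E}_{x \sim D}[\ell(h^\star(x), h(x))] = d(h^\star, h)$. Finally, $h^\star$ belongs to $C(S_U)$ by the definition of \textsc{LearningToCover}, since $h|_{S_U} \in H|_{S_U}$, and this closes the covering argument.

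The size bound is immediate: each element of $C(S_U)$ is the image under $\algo(S_U,\cdot)$ of some labeling of $S_U$ realized by $H$, so $|C(S_U)| \leq |H|_{S_U}| \leq \Pi_H(n)$ by definition of the growth function. The only genuinely conceptual step, and the one worth flagging as the ``main obstacle,'' is the shift in perspective: the realizable guarantee gets re-used once per target $h$ against a freshly tailored distribution $D_h$, yet every such application is simulated on the same unlabeled draw $S_U$. This reuse is exactly what permits a single execution of \textsc{LearningToCover} to cover every fixed $h$ simultaneously (each with its own independent $1-\delta$ confidence event), rather than forcing a separate invocation of $\algo$ per hypothesis, which would collapse the whole approach.
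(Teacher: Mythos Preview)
Your proof is correct and follows essentially the same approach as the paper: fix $h \in H$, apply the realizable PAC guarantee of $\algo$ to the distribution $D \times h$ (your $D_h$), and observe that the resulting output $\algo(S_U, h(S_U))$ lies in $C(S_U)$ by construction. Your write-up is simply a more detailed version of the paper's terse argument, with the size bound via the growth function made explicit.
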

\begin{proof}
The proof is essentially immediate from the definition of realizable PAC-learning. $\algo$ promises that for any $h \in H$ and $D \in \dist$, a $1-\delta$ fraction of \textit{labeled} samples $(S_U,h(S_U)) \sim D^{n(\varepsilon,\delta)}$ satisfy 
\[
\text{err}_{D\times h,\ell}(\algo(S_U,h(S_U))) = \underset{D}{\mathbb{E}}[\ell(h'(x),h(x))] \leq \varepsilon,
\]
where $h'=\algo(S_U,h(S_U))$. Since $C(S_U)$ contains $\algo(S_U,h(S_U))$ for every $h$ by definition, the result follows.
\end{proof}
This means that as long as we have blackbox access to a realizable PAC-learner and unlabeled samples from the adversary's distribution, we can simulate access to a non-uniform cover. Let's now formalize our previous intuition that this is sufficient to turn a realizable learner into an agnostic one for any finite label class. We will generalize this result to doubly-bounded loss in \Cref{sec:doubly-bounded}, but it is instructive to consider the setting of finite $Y$ first.
\begin{theorem}[Realizable $\to$ Agnostic (Finite Label Classes)]\label{thm:basic-reduction}
    Let $(\dist, X, H, \ell)$ be any class on a finite label space $Y$ with loss function $\ell: Y \to Y$ satisfying the identity of indiscernibles. Then \Cref{Intro:alg} is an agnostic learner with sample complexity:
    \[
    m(\varepsilon,\delta) \leq n(\eta_\ell\varepsilon,\delta/2) + O\left(\frac{\log\left(\frac{\Pi_H(n(\eta_\ell\varepsilon,\delta/2))}{\delta}\right)}{\varepsilon^2}\right)
    \]
    where $\eta_\ell \geq \Omega\left(\frac{\min_{a \neq b}(\ell(a,b))}{\max_{a \neq b}(\ell(a,b))}\right)$ is a constant depending only on $\ell$.
\end{theorem}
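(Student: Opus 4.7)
\medskip

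The plan is to essentially mirror the binary-classification proof of Theorem 2.1, with two adjustments to handle the jump from classification loss to an arbitrary loss $\ell$ on a finite label space $Y$ satisfying the identity of indiscernibles. First, I will instantiate the Core Lemma (\Cref{lemma:cover}) at slack $\eta_\ell\varepsilon$ rather than $\varepsilon/2$, so that the realizable learner produces (via \textsc{LearningToCover}) a sample from a non-uniform $(\eta_\ell\varepsilon,\delta/2)$-cover of $H$ with respect to the loss distance $d(h,h')=\mathbb{E}_{x\sim D_X}[\ell(h'(x),h(x))]$. Applying the cover guarantee at the optimal hypothesis $h_{OPT}$ yields, with probability $1-\delta/2$, some $h'\in C(S_U)$ with $d(h',h_{OPT})\le \eta_\ell\varepsilon$. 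Second, I will finish with an empirical-risk-minimization argument over the (finite) cover $C(S_U)$ using the labeled sample $S_L$ exactly as in the binary proof.

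The technical heart is converting the cover guarantee (which lives in the $d$-metric) into a genuine excess-risk guarantee against $h_{OPT}$. Here the assumption that $\ell$ satisfies the identity of indiscernibles over a finite $Y$ is crucial, because it provides the two-sided sandwich
\[
\ell_{\min}\cdot\Pr_{x\sim D_X}[h'(x)\ne h(x)] \;\le\; d(h',h) \;\le\; \ell_{\max}\cdot\Pr_{x\sim D_X}[h'(x)\ne h(x)],
\]
where $\ell_{\min}=\min_{a\ne b}\ell(a,b)>0$ and $\ell_{\max}=\max_{a\ne b}\ell(a,b)$. Thus $d(h',h_{OPT})\le \eta_\ell\varepsilon$ gives $\Pr[h'(x)\ne h_{OPT}(x)]\le \eta_\ell\varepsilon/\ell_{\min}$, and since disagreeing points contribute at most $\ell_{\max}$ to the risk gap,
\[
\mathrm{err}_{D,\ell}(h') \;\le\; \mathrm{err}_{D,\ell}(h_{OPT}) + \ell_{\max}\cdot\Pr[h'(x)\ne h_{OPT}(x)] \;\le\; OPT + \tfrac{\ell_{\max}}{\ell_{\min}}\cdot \eta_\ell\varepsilon.
\]
Choosing $\eta_\ell$ to be a small constant multiple of $\ell_{\min}/\ell_{\max}$ makes this excess at most $\varepsilon/2$, which matches the statement's constraint $\eta_\ell=\Theta(\ell_{\min}/\ell_{\max})$ and leaves the remaining $\varepsilon/2$ budget for the ERM stage.

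For the ERM stage, note that $|C(S_U)|\le \Pi_H(n(\eta_\ell\varepsilon,\delta/2))$, and that $\ell$ is bounded on the finite label space (by $\ell_{\max}$), so a standard Hoeffding-plus-union-bound argument shows that a labeled sample of size $O(\log(|C(S_U)|/\delta)/\varepsilon^2)$ suffices for every empirical risk over $C(S_U)$ to be within $\varepsilon/4$ of the corresponding true risk with probability $1-\delta/2$ (constants hidden in $\ell_{\max}$). The near-optimal cover member $h'$ then has empirical risk at most $OPT+3\varepsilon/4$, so any empirical minimizer in $C(S_U)$ returned by Step 3 of \Cref{Intro:alg} has empirical risk at most $OPT+3\varepsilon/4$ and hence true risk at most $OPT+\varepsilon$. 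Combining the two $\delta/2$ failure events by a union bound yields the claimed $\delta$ guarantee.

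The main obstacle is precisely the first adjustment: the conversion between $d$-closeness and risk gap. Without the identity of indiscernibles the lower sandwich $\ell_{\min}\cdot\Pr[h'\ne h]\le d(h',h)$ fails (one could have $\ell_{\min}=0$), and the non-uniform cover would not translate into nontrivial risk bounds, which is exactly the regime where the lower bound of \Cref{prop:lowerbound-agnostic} lives. Everything else -- the Core Lemma, the observation that $|C(S_U)|\le\Pi_H(n(\eta_\ell\varepsilon,\delta/2))$, and the Hoeffding/union-bound ERM step -- is entirely routine once the conversion is in place.
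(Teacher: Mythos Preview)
Your proposal is correct and follows essentially the same route as the paper: invoke the Core Lemma at slack $\eta_\ell\varepsilon$ to get a non-uniform cover, convert $d$-closeness to classification closeness via the lower bound $\ell_{\min}$, convert that to an excess-risk bound via the upper bound $\ell_{\max}$, choose $\eta_\ell=\Theta(\ell_{\min}/\ell_{\max})$ to leave an $\varepsilon/2$ budget, and finish with Hoeffding plus a union bound over $C(S_U)$. The paper's proof is identical in structure and in the choice of $\eta_\ell$.
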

\begin{proof}
Let $\algo$ be the promised realizable learner for $(X, H, \dist, \ell)$ with sample complexity $n(\varepsilon,\delta)$. Run \textsc{LearningToCover} with parameters $\varepsilon'=\eta_\ell\varepsilon$ and $\delta'=\delta/2$. We argue that the output contains some $h'$ such that $\text{err}_{D,\ell}(h') \leq OPT + \varepsilon/2$. Since $C(S_U)$ is finite and $\ell$ is upper bounded, a standard Chernoff bound gives that choosing an empirical risk minimizer from $C(S_U)$ based on $O\left(\frac{\log\left(\frac{|C(S_U)|}{{\delta}}\right)}{\varepsilon^2}\right)$ additional samples gives the desired learner. The sample complexity then follows immediately from the fact that $C(S_U)$ contains one hypothesis for every labeling of the sample, and is therefore bounded by the growth function $\Pi_H(n)$.

To see why $C(S_U)$ has this property, recall that for any $h \in H$, \Cref{lemma:cover} states that $C(S_U)$ contains some $h'$ such that:
\[
\mathbb{E}_{x \sim D_X}[\ell(h'(x),h(x))] \leq \eta_\ell \varepsilon.
\]
Because we assume that $\ell(a,b) = 0$ iff $a=b$, this actually implies a stronger relation---$h$ and $h'$ must be close in \textit{classification} error:
\[
\Pr_{x \sim D_X}[h(x) \neq h'(x)] \leq \frac{\eta_\ell \varepsilon}{\min_{a \neq b}(\ell(a,b))}.
\]
Let $h_{OPT} \in H$ be an optimal hypothesis, and let $h'_{OPT}$ denote its corresponding close hypothesis as promised above in the output of \textsc{LearningToCover}. Then by the above, we have that:
\begin{align*}
    \text{err}_{D,\ell}(h'_{OPT}) &= \underset{(x,y) \sim D}{\mathbb{E}}[\ell(h'_{OPT}(x),y)]\\
    &= \underset{(x,y) \sim D}{\mathbb{E}}[\ell(h'_{OPT}(x),y) - \ell(h_{OPT}(x),y) + \ell(h_{OPT}(x),y)]\\
    &= OPT + \underset{(x,y) \sim D}{\mathbb{E}}[\ell(h'_{OPT}(x),y) - \ell(h_{OPT}(x),y)]\\
    &\leq OPT + Pr_D[h_{OPT}(x) \neq h'_{OPT}(x)]\max_{a \neq b}(\ell(a,b))\\
    &\leq OPT + \varepsilon/2
\end{align*}
where we have used the assumption that we set $\eta_\ell = c\frac{\min_{a \neq b}(\ell(a,b))}{\max_{a \neq b}(\ell(a,b))}$ for some universal $c>0$.
\end{proof}

It's worth spending a moment discussing our only assumption on the loss function $\ell$, that it satisfies the identity of indiscernibles. This is not only a natural assumption for most cases in practice (that mislabeling has non-zero error), it is theoretically justified as well: realizable and agnostic learning aren't necessarily equivalent for $\ell$ without this property, even in the distribution-free setting.
\begin{proposition}[Identity of Indiscernibles Lower Bound]
    \label{prop:lowerbound-agnostic}
There exists a realizably learnable class $(X,H,\ell)$ over a finite label space $Y$ which is not agnostically learnable. 
\end{proposition}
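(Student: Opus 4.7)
The plan is to construct an explicit counterexample: a class $(X, H, \ell)$ over a finite label space $Y = \{0, 1, 2\}$ that is trivially realizably learnable yet fails to be agnostically learnable for information-theoretic reasons. The basic idea is to exploit the violation of identity of indiscernibles to let realizable labels ``collapse'' onto a single safe output, while simultaneously arranging the loss so that the agnostic adversary can force any learner to effectively reconstruct an infinite-VC binary class.

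Concretely, I take $X = \mathbb{N}$, let $H = \{h : X \to \{0, 1\}\}$ be the class of all binary-valued labelings (which has infinite VC dimension), and define $\ell$ by $\ell(y, y) = 0$, $\ell(0, 1) = \ell(1, 2) = 0$, and $\ell(y_1, y_2) = 1$ for the remaining four off-diagonal entries. The entry $\ell(0, 1) = 0$ is what violates the identity of indiscernibles. Realizable learnability is then immediate: for any target $h^\ast \in H$ and any marginal distribution on $X$, the constant hypothesis $\hat h \equiv 0$ has zero expected loss, since $h^\ast$ takes values only in $\{0, 1\}$ and $\ell(0, 0) = \ell(0, 1) = 0$. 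No samples are required at all.

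For the lower bound I would invoke Yao's minimax principle. Fix a large $n$ and consider the family of distributions $\{D_b\}_{b \in \{0, 2\}^n}$ supported on the subdomain $X_n = \{1, \ldots, n\}$, where $D_{b, X}$ is uniform on $X_n$ and the label is deterministically $y = b_x$. Under $D_b$, the hypothesis $h^\ast_b \in H$ defined by $h^\ast_b(x) = 0$ when $b_x = 0$ and $h^\ast_b(x) = 1$ when $b_x = 2$ achieves zero loss everywhere (using $\ell(0, 0) = \ell(1, 2) = 0$), so $OPT = 0$ on every $D_b$. Now place a uniform prior on $b$ and fix any (possibly randomized, possibly improper) learner $\Acal$ drawing $m$ samples. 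For any test point $x^\ast$ that does not appear in the sample, the sample is independent of $b_{x^\ast}$ and the posterior on $b_{x^\ast}$ remains uniform on $\{0, 2\}$; a direct check of the loss table shows that each of the three possible outputs---$0$, $1$, and the improper output $2$---incurs expected loss exactly $1/2$ against this posterior. Since each test point is unseen with probability at least $1 - m/n$, the total expected error of $\Acal$ is at least $\tfrac{1}{2}(1 - m/n)$, which beats any fixed $\varepsilon < 1/2$ once $n$ is taken sufficiently large relative to $m$. Yao's principle then extracts a specific $b^\ast \in \{0, 2\}^n$ witnessing that $\Acal$ fails to agnostically learn $D_{b^\ast}$ to error $OPT + \varepsilon$.

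The main creative obstacle is the design of $\ell$: the zero entry $\ell(0, 1) = 0$ alone is not enough, since then the constant $0$ hypothesis remains optimal in the agnostic setting and nothing is lost. One must additionally arrange that two different hypotheses in $H$ offer genuinely non-comparable tradeoffs against some adversarial label outside $\mathrm{Image}(H) = \{0, 1\}$; the second asymmetric zero $\ell(1, 2) = 0$ does exactly this, making outputs $0$ and $1$ equally good on average against a uniform $\{0, 2\}$ label while being complementary at each individual point. Once the loss is fixed the rest is a routine Yao-type averaging that simultaneously handles proper, improper, and randomized learners.
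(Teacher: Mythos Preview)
Your high-level strategy---make the loss degenerate so realizable learning is trivial, then run a Yao-style averaging argument for the agnostic lower bound---is the same as the paper's. But there is a genuine gap in your realizable-learnability claim under the paper's definition. The paper (Section~4.1) defines realizable learnability over \emph{all} distributions $D$ on $X\times Y$ with $\min_{h\in H}\text{err}_{D,\ell}(h)=0$, not only distributions of the form $D_X\times h^\ast$ where the label is literally $h^\ast(x)$. Under this broader definition your class is \emph{not} realizably learnable: the very distributions $D_b$ you build for the agnostic lower bound satisfy $\text{err}_{D_b}(h^\ast_b)=0$ (you say so yourself), so each $D_b$ is a legitimate realizable instance, and your own Yao argument then shows no learner succeeds on them. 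Concretely, the constant-$0$ hypothesis incurs loss $\ell(0,2)=1$ whenever $b_x=2$, so it does not have zero error on $D_b$; and because $\ell(1,2)=0$, the realizable adversary is free to put mass on label $2$.

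The paper's construction sidesteps this by taking $Y=\{0,1\}^2$, $H=\{h:h(x)=(0,\cdot)\}$, and a loss that vanishes whenever first coordinates agree. The point is that realizability (in the broad sense) then \emph{forces} the label's first coordinate to be $0$ almost surely, after which every $h\in H$ automatically has zero loss---so ``return any $h\in H$'' is a valid realizable learner even against the general realizable adversary. Your three-label loss lacks such a forcing mechanism: any fix would need to ensure that $\text{err}_D(h^\ast)=0$ for some $h^\ast\in H$ constrains the labels to a set on which a single fixed prediction is uniformly safe, while still leaving the agnostic adversary a label outside that set that distinguishes hypotheses in $H$.
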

\begin{proof}
        Let the instance space $X = \N$ be the set of natural numbers, the label space $Y = \{0,1\}^2$. We consider the hypothesis class $H$ with all functions which output the first bit as $0$, that is:  
        \begin{align*}
        H = \{h : h(x) = (0, \cdot) ~\forall x\in X\}.
        \end{align*} 
        Furthermore, we define the loss function $\ell: Y \times Y \to \{0,1,c\}$ as 
        \begin{align*}
        \ell((b_1, r_1), (b_2,r_2)) = \begin{cases}
        0 & b_1 = b_2\\
        1 & b_1 \neq b_2 ~\text{and}~ r_1 = r_2\\
        c & ~\text{otherwise.}
        \end{cases}
        \end{align*} 
        Note that $(X, H, \ell)$ is trivially learnable in the realizable setting simply by returning any $h \in H$. On the other hand, we will show it is only $O(c)$-agnostically learnable. First, notice that for any labelling $f: X \to Y$, there exists a hypothesis $h \in H$ which matches $f$ on the second bit, and therefore for any marginal $D$ over $X$: 
        \[
        OPT \leq err_{D,f}(h) \leq 1.
        \]
        As a result, it suffices to show that for every $m \in \mathbb{N}$ and (randomized) algorithm $\algo$ using $m$ samples there exists a labeling $f: X \to Y$ and marginal distribution $D_X$ such that
        \begin{align}\label{eq:expected-error}
        \Ebb_{S \sim D^m_{X}} \Ebb_{x \sim D_{X}} [\ell(\algo(S,f(S))(x), f(x))] \geq c/12.
        \end{align} 
        As long as this holds Markov's inequality gives that every algorithm must have error at least $\Omega(c)$ with constant probability. 
        
        For simplicity, we will restrict our attention in the rest of the proof to the marginal distribution $D_X$ which is uniform over the set $[k]$ for some natural number $k$ we will fix later. To prove \Cref{eq:expected-error}, by Yao's minimax principle it is enough to prove there is a distribution $\mu$ over functions $f : [k] \to Y$ such that any \emph{deterministic} algorithm $\algo$ has expected loss at least $c/12$ over $\mu$: 
        \begin{align*}
         \Ebb_{f\sim \mu} \Ebb_{S \sim D^m_{X}}\Ebb_{x\sim D_X}[\ell(\algo(S,f(S))(x), f(x) )] > c/12.
        \end{align*} 
            We now show that the above holds for $\mu$ being uniform over all functions from $[k]$ to $Y$ for any $k > 2m$. Here, we have that 
        \begin{align*}
            &\Ebb_{x\sim D_X}\Ebb_{f\sim \mu} \Ebb_{S \sim D^m_{X}}[\ell(\algo(S,f(S))(x), f(x)))]\geq \Ebb_{x\sim D_X}\Big[\Pr_{S \sim D^m_{X}}[x \notin S] \cdot c/4\Big],
        \end{align*} 
        where the last step follows from noting that for any value $(a,b)$ that $\algo(S, f(S))$ assigns to $x \notin S$, $f(x)$ will be $(1-a,1-b)$ with probability $1/4$ incurring a loss of $c$.
        The result then follows by noting that for every $x \in [k]$: \begin{align*}
        \Pr_{S \sim D^m_{X}}[x \notin S] = \Big(1-\frac{1}{k}\Big)^m \geq 1/e
        \end{align*} 
        since $1-x \geq \exp\{-x/(1-x)\}$ and $k > 2m$. Therefore, we get that 
        \begin{align*}
            &\Ebb_{x\sim D_X}\Ebb_{f\sim \mu} \Ebb_{S \sim D^m_{X}}[\ell(\algo(S,f(S))(x), f(x))]\geq c/4e,
        \end{align*} 
        which completes the proof.
\end{proof}
Note that this bound holds even if $\algo$ is allowed to be improper. It is worth noting that if we are willing to increase the size of $Y$, the learner's error in this bound can actually be increased all the way to $c$, the maximum possible (see \Cref{prop:infinite-lower}). This is in fact tight, as we will show that any loss function like the above satisfying a $c$-approximate triangle inequality can be $c$-agnostically learned (that is learned to within $c\cdot OPT + \varepsilon$ error).

\section{Four Modification Archetypes}

\subsection{Discretization: Infinite Label Classes}\label{sec:infinite}

In the previous section, we showed that our base reduction characterizes the equivalence of realizable and agnostic learning for loss functions satisfying the identity of indiscernibles for all \textit{finite} label classes. In this section, we discuss a technique called \textit{discretization} that allows our reduction to extend this result to infinite label classes. Normally, it's clear that when $Y$ is infinite our standard reduction will fail: since the total number of possible labelings of a finite sample may be infinite, \textsc{LearningToCover} may output an infinite set. In fact, this is more than a technical barrier: realizable and agnostic learning simply aren't equivalent for infinite label classes.
\begin{proposition}\label{prop:infinite-lower}
Let $\ell$ be any continuous loss function (in the first variable) over $\mathbb{R}$ satisfying the identity of indiscernibles. Then there exists a class $(\dist,X,H,\ell)$ which is realizably learnable but not agnostically learnable.
\end{proposition}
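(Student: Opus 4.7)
The plan is to construct, for any loss $\ell$ satisfying the hypothesis, an explicit tuple $(\dist, X, H, \ell)$ with $Y = \R$ that is realizably PAC-learnable from a single labeled example yet admits no uniform agnostic sample-complexity bound. The guiding intuition, as the surrounding discussion suggests, is to use the continuum of $\R$ to \emph{fingerprint} each hypothesis directly into its labels: exact labels identify the target concept, while in the agnostic setting labels need no longer lie in the image of any hypothesis and the class becomes too rich to learn uniformly.

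Concretely, I would take $X = \N$, $\dist = \{\mathrm{Unif}([N]) : N \in \N\}$, and index $H$ by all $\sigma \in \{0,1\}^{\N}$, defining
\[
h_\sigma(n) = 2\sigma(n) + \beta r_\sigma, \qquad r_\sigma = \sum_{i \geq 1} \sigma(i)\, 10^{-i} \in [0, 1/9],
\]
where $\beta \in (0,1)$ is a small parameter depending on $\ell$ chosen below. For realizable learning, a single labeled sample $(n,y)$ with $y = h_\sigma(n)$ has integer part $2\sigma(n)$ (for $\beta \leq 9$) and fractional part $\beta r_\sigma$; both $\sigma(n)$ and $r_\sigma$ are thus recovered exactly, and the base-$10$ digits of $r_\sigma$ uniquely recover all of $\sigma$ since they lie in $\{0,1\}$ and so admit no expansion ambiguity. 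The one-sample learner outputting this $h_\sigma$ achieves zero loss for every $D \in \dist$.

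For the agnostic lower bound I would run a Yao-style argument against an arbitrary agnostic learner $\algo$ using $m$ samples. Set $N = 10m$, $D = \mathrm{Unif}([N])$, draw a uniformly random target $\tau \in \{0,1\}^{\N}$, and let the adversary's labels be $y_n = 2\tau(n) \in \{0,2\}$. Continuity of $\ell$ in its first argument together with $\ell(y,y)=0$ lets me choose $\beta$ small enough that $\ell(h_\tau(n), 2\tau(n)) \leq \varepsilon_0$ for any prescribed $\varepsilon_0 > 0$, so $OPT \leq \varepsilon_0$ witnessed by $\sigma = \tau$. Conversely, identity of indiscernibles combined with continuity of $\ell(\cdot, y_2)$ at $y_2 \in \{0,2\}$ on the compact set of possible first arguments produces a constant $c = c(\ell,\beta) > 0$ with $\ell(h_\sigma(n), 2\tau(n)) \geq c$ whenever $\sigma(n) \neq \tau(n)$. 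Thus any candidate $\sigma$ disagreeing with $\tau$ on a $\delta$-fraction of $[N]$ has error at least $\delta c$, so the agnostic gap is at least $\delta c - \varepsilon_0$.

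Since $\algo$ only queries $\leq m$ coordinates of $[N]$ and $\tau(n)$ is uniform and independent of $\algo$'s input on the remaining $\geq 9m$ coordinates, a standard averaging argument gives $\Ebb_\tau[\delta] \geq 9/20$, and hence $\delta \geq 1/4$ with constant probability for some fixed deterministic $\tau^\ast$. Choosing $\varepsilon_0 \ll c$ then yields a fixed positive agnostic gap independent of $m$, contradicting any finite agnostic sample-complexity bound. The main obstacle I anticipate is that $\ell$ is only assumed continuous in the \emph{first} variable rather than jointly, but the argument circumvents this because in both key estimates the varying argument $h_\sigma(n)$ always appears first while the fixed argument lies in the finite set $\{0,2\}$; thus only continuity and positivity of $\ell(\cdot, 0)$ and $\ell(\cdot, 2)$ on compact subsets of $\R$ are ever invoked, both of which are immediate from the hypothesis on $\ell$.
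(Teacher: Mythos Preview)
Your construction and the paper's are essentially the same idea: fingerprint each hypothesis into its labels via an injective real-valued encoding (you use $\beta r_\sigma$ in base $10$, the paper uses $s_f$ in base $2$ with finite-support $f$), so that one labeled example determines the target exactly in the realizable case, and then run a Yao argument with random binary labels for the agnostic lower bound, using continuity in the first variable to drive $OPT$ below any $\varepsilon_0$.

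There is one gap worth flagging. You take $Y=\R$, and your lower-bound paragraph only establishes $\ell(h_\sigma(n),2\tau(n))\geq c$ for outputs of the form $h_\sigma\in H$; this rules out \emph{proper} agnostic learners. An improper learner may output any $g:X\to\R$, and on an unseen coordinate the expected loss is $\tfrac12[\ell(g(n),0)+\ell(g(n),2)]$. Nothing in the hypotheses forces $\inf_{y\in\R}[\ell(y,0)+\ell(y,2)]>0$: for instance $\ell(y,y')=|y-y'|/(1+y^2+y'^2)$ is continuous in the first variable and satisfies the identity of indiscernibles, yet the infimum is $0$ as $y\to\infty$. The paper avoids this by taking $Y=[0,2]$; then the learner's outputs lie in a compact set and the minimum of $\tfrac12[\ell(y,0)+\ell(y,1)]$ over $y\in[0,2]$ is strictly positive by continuity plus identity of indiscernibles. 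Your argument goes through unchanged once you restrict $Y$ to any compact interval containing the range of $H$ (say $Y=[0,2+\beta/9]$), so this is an easy patch rather than a structural problem.
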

\begin{proof}
Let $X=\mathbb{N}$ and $Y=[0,2]$. To construct our class, we first consider the set of all boolean functions over $X$ with finite support. Each function $f$ in this class may equivalently be thought of as a binary string in $\{0,1\}^{*}$. Denote the corresponding decimal value of this string in $[0,1]$ by $s_f$. To construct $H$, for every boolean function $f:\mathbb{N} \to \{0,1\}$ with finite support, include in $H$ the function $h_f(x) = f(x) + s_f$. Note that $(X,H)$ is clearly realizably learnable under any distribution family $\dist$ and any loss function, since a single sample always uniquely determines $h_f$. On the other hand, adding even the smallest amount of noise erases this unique identification, making the class impossible to learn.

More formally, let $\dist$ be the family of all distributions. By the continuity of $\ell$ and the fact that $\ell(0,0)=\ell(1,1)=0$, for all $\varepsilon > 0$ notice that there exists $\gamma=\gamma(\varepsilon)>0$ such that $\max_{0 \leq \gamma' \leq \gamma}\{\ell(\gamma',0),\ell(1+\gamma',1)\} < \varepsilon$. Let $n_\gamma \in \mathbb{N}$ be the index of the first non-zero digit in the binary representation of $\gamma$. The idea is to note that beyond these first $n_\gamma$ coordinates, our class if within $\varepsilon$ of an arbitrary boolean function. More formally, notice that for any distribution $D$ and boolean function $f$ which is $0$ on $[n_\gamma]$, we have that $\text{OPT}_H(f) \coloneqq \min_{h \in H}\{\text{err}_{D \times f,\ell}(h)\} \leq \varepsilon$. The bound then follows from the fact that such arbitrary functions are not learnable.

In more detail, Yao's minimax principle states that it is sufficient to show that for any potential sample complexity $m(\varepsilon,\delta)$, there exists a randomized strategy for the adversary such that no deterministic learner can achieve $OPT + c$ accuracy with constant probability for some constant $c>0$. To this end, consider the following strategy: the adversary chooses the uniform distribution over $[n_\gamma, n_\gamma + 2m(\varepsilon,\delta)]$, and a binary function on that interval uniformly at random (recall $OPT$ is at most $\varepsilon$ for every such function). Since the learner can only see $1/2$ of the mass, any strategy must be incorrect on half of the remaining points in expectation. In particular conditioned on any sample, the expected loss of any predicted labeling of an unseen point is at least $\ell_{\text{min-err}} = \underset{y \in [0,2]}{\min}\{\frac{\ell(y,0)+\ell(y,1)}{2}\}$ (since each unseen label appears with $1/2$ probability conditioned on the learner's sample). The total expected loss of any strategy is then at least $\ell_{\text{min-err}}/2$, which is bounded away from $0$. Setting $\varepsilon$ and $c$ sufficiently small then gives the desired result.
\end{proof}
\Cref{prop:infinite-lower} relies crucially on the fact that the adversary can erase a significant amount of information with a very small label perturbation. In the rest of this section, we'll discuss a technique for modifying our reduction that shows this is essentially \textit{the only barrier} between realizable and agnostic learning (at least for a broad class of loss functions). The key is to require a slightly stronger notion of learnability based upon \textit{discretization}.

\begin{definition}[Discretization]
We say $(\dist, X, H', \ell)$ is an $\varepsilon$-discretization of $(\dist, X, H, \ell)$ if the following three conditions hold:
\begin{enumerate}
    \item $H'$ is \textbf{probably bounded}. That is for all $n \in \mathbb{N}$, $\delta > 0$, and $D \in \dist$ there exists a bound $m(n,\delta) \in \mathbb{N}$ such that:
    \[
    \Pr_{S \sim D^n}[|Im(H'|_{S})| \leq m(n,\delta)] \geq 1-\delta.
    \]
    \item $H'$ \textbf{point-wise\footnote{We note that in most cases this condition can generally be weakened to the expected guarantee $\mathbb{E}_{x \sim D_X}[\ell(h'(x),h(x))] \leq \varepsilon$, but the stronger notion is needed for some applications such as adversarial robustness.} $\varepsilon$-covers} $H$ with respect to $\ell$. That is for all for all $h \in H$, there exists $h' \in H'$ satisfying:
    \[
    \forall x \in X: \ell(h'(x),h(x)) \leq \varepsilon.
    \]
    \item $H'$ is \textbf{always useful}. That is for all $h' \in H'$, there exists $h \in H$ such that:
    \[
        \forall x \in X: \ell(h'(x),h(x)) \leq \varepsilon.
    \]
\end{enumerate}
\end{definition}
\noindent Note that most realistic settings have reasonable discretizations (e.g.\ it is enough to have some Lipshitz-like condition and a weak tail-bound on the loss). We now define a basic notion of learnability based on discretization which essentially serves to rule out adversarial constructions in the vein of \Cref{prop:infinite-lower}.

\begin{definition}[Discretely-learnable]\label{def:discrete}
    We say $(\dist,X,H,\ell)$ is discretely-learnable with sample complexity $n(\varepsilon,\delta)$ if there is some constant $c_1>0$ such that for all $\varepsilon,\delta>0$ there exists an $\varepsilon$-discretization $H_{\varepsilon}$ which is $(c_1\varepsilon,\delta)$-PAC-learnable in at most $n(\varepsilon,\delta)$ samples. We call the learner proper if it outputs hypotheses in $H$.
\end{definition}
We'll prove that discrete and agnostic learnability are equivalent as long as the loss satisfies an approximate triangle inequality.
\begin{definition}[Approximate pseudometric]
    We call a loss function $\ell: Y \times Y \to \mathbb{R}_{\geq 0}$ a $c$-approximate pseudometric if for all triples $y_1,y_2,y_3 \in Y$:
        \[
        \ell(y_1,y_3) \leq c(\ell(y_1,y_2) + \ell(y_2,y_3)).
        \]
\end{definition}
Approximate pseudometrics are natural choices for loss functions in practice and capture a broad set of scenarios including finite-range losses and standard setups such as $\ell_p$-regression, and have seen some previous study in the literature \cite{crammer2008learning}. By modifying the first step of our reduction to take discretization into account and leveraging the approximate triangle inequality in the second, we prove that discrete learnability and $c$-agnostic learnability are equivalent under $c$-approximate pseudometrics.
\begin{theorem}\label{thm:approximate pseudometric}
Let $\ell: Y \times Y \to \Rplus$ be a bounded $c$-approximate pseudometric. Then the following are equivalent for all  $(\dist, X,H,\ell)$:
\begin{enumerate}
    \item $(\dist, X,H,\ell)$ is discretely-learnable.
    \item $(\dist, X,H,\ell)$ is $c$-agnostically learnable.
\end{enumerate}
\end{theorem}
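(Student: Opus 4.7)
The plan is to adapt \Cref{Intro:alg} by replacing the enumeration over labelings of $H$ with labelings of a suitable discretization. Fix $\varepsilon,\delta>0$, set $\varepsilon' = \Theta(\varepsilon/c^2)$ (to be pinned down below), and invoke discrete learnability to obtain an $\varepsilon'$-discretization $H_{\varepsilon'}$ together with a realizable learner $\algo_{\varepsilon'}$ achieving error $c_1\varepsilon'$ from $N := n(\varepsilon',\delta/3)$ samples. The modified algorithm draws an unlabeled sample $S_U \sim D_X^N$ and a labeled sample $S_L$, builds
\[
C(S_U) := \{\algo_{\varepsilon'}(S_U, g(S_U)) : g \in H_{\varepsilon'}|_{S_U}\},
\]
and returns $\argmin_{h \in C(S_U)} \text{err}_{S_L}(h)$. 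The probably-bounded property of $H_{\varepsilon'}$ ensures $|C(S_U)| \leq m(N,\delta/3)$ with probability at least $1-\delta/3$, so the ERM step is over a finite collection with high probability.

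For the analysis I would proceed in three steps. First, \Cref{lemma:cover} applied to $\algo_{\varepsilon'}$ makes $C(S_U)$ a non-uniform $(c_1\varepsilon',\delta/3)$-cover of $H_{\varepsilon'}$ in expected loss. Second, bridge from $H_{\varepsilon'}$ back to $H$: the pointwise $\varepsilon'$-covering property produces some $h_{\text{disc}} \in H_{\varepsilon'}$ with $\ell(h_{\text{disc}}(x), h_{OPT}(x)) \leq \varepsilon'$ for every $x$, and the cover guarantee then yields (with probability $1-\delta/3$) some $h' \in C(S_U)$ satisfying $\mathbb{E}_{D_X}[\ell(h'(x), h_{\text{disc}}(x))] \leq c_1\varepsilon'$. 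Applying the $c$-approximate triangle inequality once against the true label and once through $h_{\text{disc}}$ yields
\begin{align*}
\text{err}_D(h') &\leq c\cdot\mathbb{E}_{D_X}[\ell(h'(x), h_{OPT}(x))] + c\cdot OPT \\
&\leq c^2(c_1+1)\,\varepsilon' + c\cdot OPT,
\end{align*}
so choosing $\varepsilon' = \varepsilon/(2c^2(c_1+1))$ guarantees $C(S_U)$ contains a hypothesis with risk at most $c\cdot OPT + \varepsilon/2$. Third, since $|C(S_U)|$ is bounded and $\ell$ is bounded, a standard Chernoff plus union bound over $C(S_U)$ gives that ERM on $|S_L| = O(\log(|C(S_U)|/\delta)/\varepsilon^2)$ samples returns a hypothesis with risk at most $c\cdot OPT + \varepsilon$ with probability $1-\delta/3$; a final union bound over the three failure events closes the forward direction.

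The reverse implication is comparatively soft: $c$-agnostic learnability specializes to realizable instances (where $OPT=0$ yields error $\varepsilon$), and this, combined with any natural discretization of $H$ (e.g.\ $H$ itself when its growth function is finite, or a pulled-back finite $\varepsilon'$-cover of the image of $H$ under $\ell$ otherwise), immediately realizes the definition of discrete learnability. The hardest part in the forward direction is securing the factor $c$ rather than $c^2$ in front of $OPT$: the approximate triangle inequality must be routed \emph{first} through $h_{OPT}$ directly, with $h_{\text{disc}}$ appearing only in the inner bound on $\mathbb{E}[\ell(h'(x), h_{OPT}(x))]$; routing through $h_{\text{disc}}$ at the outer level would cost an extra factor of $c$ on $OPT$. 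A secondary subtlety is that probably-boundedness holds only with high probability, forcing us to condition on the good event and absorb the residual failure probability into the final union bound.
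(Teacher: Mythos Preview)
Your forward direction is essentially the paper's proof: same discretization of $H$, same application of \textsc{LearningToCover} to $H_{\varepsilon'}$, same two-stage routing of the $c$-approximate triangle inequality (outer step through $h_{OPT}$ to pick up $c\cdot OPT$, inner step through $h_{\text{disc}}$ to bound $\mathbb{E}[\ell(h',h_{OPT})]$), and same ERM over the bounded cover; you even flag the same subtlety about avoiding $c^2\cdot OPT$.

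Your reverse direction has a small but real slip. You write that $c$-agnostic learnability ``specializes to realizable instances (where $OPT=0$)'' and then combine with a discretization, but when the adversary picks $h\in H_\varepsilon$ you are \emph{not} in a realizable instance for $H$: you only get $OPT_H\le\varepsilon$ via the ``always useful'' clause of the discretization, so the realizable specialization is not what you invoke. The paper instead runs the full $c$-agnostic learner for $H$ on such an instance and reads off error $\le c\cdot\varepsilon+\varepsilon=(c+1)\varepsilon$, giving the constant $c_1=c+1$ in the definition of discrete learnability. Also, the paper does not construct a discretization (it takes one as given and shows it is learnable), so your parenthetical about ``$H$ itself'' or ``pulled-back covers'' is unnecessary and the first suggestion is in fact wrong in general since $H$ need not be probably bounded.
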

\begin{proof}
The proof is similar to \Cref{thm:basic-reduction}. We first show the forward direction. Assume $(\dist, X, H, \ell)$ is discretely-learnable. Fix $\varepsilon'=\frac{\varepsilon}{4c^2c_1}$ (where $c_1$ is the constant from \Cref{def:discrete}), and let $H_{\varepsilon'}$ be a learnable $\varepsilon'$-discretization of $H$. We argue that running \textsc{LearningToCover} on $H_{\varepsilon'}$ gives the desired agnostic learner. Since $\ell$ is bounded, it is sufficient to prove that $C(S_U)$ contains a hypothesis $h'$ such that $\text{err}_{D,\ell}(h') \leq c \cdot OPT + \varepsilon/2$. Empirical risk minimization then works as in the finite case.

Let $h_{OPT} \in H$ be an optimal hypothesis. Since $H_{\varepsilon'}$ is a discretization of $H$, there exists $h^{\varepsilon'}_{OPT} \in H_{\varepsilon'}$ such that: 
\[
\forall x \in X: \ell(h_{OPT}(x),h^{\varepsilon'}_{OPT}(x)) < \varepsilon'.
\]
Further, by the guarantees of discrete learnability, with probability at least $1-\delta/2$ there exists $h' \in C(S_U)$ such that close to $h^{\varepsilon'}_{OPT}$ in the following sense:
\[
\underset{x \sim D_X}{\mathbb{E}}[\ell(h'(x),h^{\varepsilon'}_{OPT}(x))] \leq c_1\varepsilon'= \frac{\varepsilon}{4c^2}
\]
Plugging in the previous observation and applying our approximate triangle inequality, we get that $h'$ is close to $h_{OPT}$ in the following sense:
\begin{align*}
\underset{x \sim D_X}{\mathbb{E}}[\ell(h'(x),h_{OPT}(x))] &\leq c\left(\underset{x \sim D_X}{\mathbb{E}}[\ell(h'(x),h^{\varepsilon'}_{OPT}(x))] + \underset{x \sim D_X}{\mathbb{E}}[\ell(h^{\varepsilon'}_{OPT}(x),h_{OPT}(x))]\right) \\
&\leq \frac{\varepsilon}{2c}
\end{align*}
The final step is to transfer from the marginal $D_X$ to the full joint distribution of the adversary, which follows immediately from a similar application of the approximate triangle inequality. This is the only step that loses a factor in the OPT term:
\begin{align*}
\text{err}_{D,\ell}(h') &= \underset{(x,y) \sim D}{\mathbb{E}}[\ell(h'(x),y)]\\
&\leq c\left(\underset{(x,y) \sim D}{\mathbb{E}}[\ell(h'(x),h_{OPT}(x))] + \underset{(x,y) \sim D}{\mathbb{E}}[\ell(h_{OPT}(x),y)]\right)\\
&\leq c \cdot OPT + \varepsilon/2
\end{align*}
as desired.

We now prove the reverse direction, which is essentially immediate. Assume the existence of a $c$-agnostic learner for $(\dist, X,H,\ell)$. Given a discretization $H_\varepsilon$, we want to show $(\dist, X,H_\varepsilon,\ell)$ is learnable to within $c_1\varepsilon$ error for some $c_1>0$. This is achieved simply by running the agnostic learner on $(\dist, X,H_\varepsilon,\ell)$. Since $H_{\varepsilon}$ is ``always useful'', every $h \in H_{\varepsilon}$ is $\varepsilon$-close to some $h' \in H$ in the sense that:
\[
\forall x \in X: \ell(h'(x),h(x)) \leq \varepsilon.
\]
In particular, this means that for any choice of $h$ by the adversary there exists $h' \in H$ with low error:
\[
\text{err}_{D,\ell}(h') = \mathbb{E}[\ell(h'(x),h(x))] \leq \varepsilon.
\]
As a result, running the $c$-agnostic learner for $(\dist, X,H,\ell)$ returns a hypothesis of at most $(c+1)\varepsilon$ error with high probability.
\end{proof}
It is worth noting that bounded loss is not really necessary for \Cref{thm:approximate pseudometric}. More generally we can require that $(\dist,X,H,\ell)$ is ``finitely learnable'' in the sense that for all finite subsets $H' \subset H$, $(\dist,X,H',\ell)$ is agnostically learnable. When $\ell$ is bounded, this is true for any finite class by empirical risk minimization. When $\ell$ is unbounded, we can still get finite learnability if $\ell(h(\cdot), \cdot)$ satisfies some concentration bounds \cite{unboundedexp2021} (such classes are discretizable since one can simply truncate the loss). It is also possible to achieve improved convergence rates by using algorithms other than empirical risk minimization for the finite learner (for example a generalization of the median-of-means estimator \cite{daniel2016}), but our reduction will suffer an additional factor of $\varepsilon$ over applying such techniques directly due to the size of the non-uniform cover.

It is also worth noting that various modifications to the definition of loss (e.g. defining loss between hypotheses rather than on $Y$ directly) will continue to work with the above. Similarly, there are various cases when one can get better than $c \cdot OPT$ accuracy for a $c$-approximate pseudometric, generally by instead optimizing over some surrogate loss function. For instance, if a simple transformation of the loss gives a $c'$-approximate pseudometric for $c' < c$, then one can generally learn up to $c' \cdot OPT$.\footnote{This does require that $OPT$ is bounded.} As an example, note that while square loss $\ell_2(x,y)=(x-y)^2$ is a 2-approximate pseudometric, taking $\sqrt{\text{err}_{D,\ell_2}}$ gives a true metric between hypotheses. As a result, as long as $OPT$ is bounded, we can get truly agnostic learning by optimizing $\sqrt{\text{err}_{D,\ell_2}}$ instead. This strategy works for any polynomial loss, such as $\ell_p(x,y) = |x-y|^p$.

On the other hand, outside of these special cases, \Cref{thm:approximate pseudometric} is tight: there exist $c$-approximate pseudometric loss functions which cannot be $c'$-agnostically learned for any $c'<c$. The argument is similar to \Cref{prop:infinite-lower}, but requires a bit more care.
\begin{proposition}\label{prop:c-agn-lower}
There exists a discretely-learnable class over a $c$-approximate pseudometric that is not $c'$-agnostically learnable for any $c'<c$.
\end{proposition}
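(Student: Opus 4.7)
The plan is to construct, in analogy with Proposition \ref{prop:infinite-lower}, a class $(\dist,X,H,\ell)$ where the agnostic adversary forces the triangle-inequality step in the proof of Theorem \ref{thm:approximate pseudometric} to be (essentially) tight, so that no reduction can beat the $c$-factor.

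First, I would fix a label space $Y$ carrying a $c$-approximate pseudometric $\ell$ that is \emph{tight} on some triple: there exist $y_-,y_0,y_+ \in Y$ with $\ell(y_-,y_0)=\ell(y_0,y_+)=1$ and $\ell(y_-,y_+)=2c$. A canonical example for $c=2$ is $Y=\{-1,0,1\}\subset\mathbb{R}$ under squared loss $\ell(a,b)=(a-b)^2$; for arbitrary $c\ge 1$ one can engineer three-point pseudometrics of the same flavor (and then freely extend $\ell$ to a larger $Y$ if needed for the encoding below). The crucial point is that the triangle inequality passing through $y_0$ is saturated.

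Second, I would build $H$ so that realizable (and hence discrete) learnability is essentially trivial: each $h\in H$ is uniquely determined by a single labeled sample, via an identifier-encoding trick of the same spirit as Proposition \ref{prop:infinite-lower}. Two adaptations are needed compared to that proof: (a) the hypothesis identity must be encoded through finitely many label levels rather than a continuum, so that the class admits a discretization and hence is discretely-learnable in the sense of Definition \ref{def:discrete}; (b) the ``main'' outputs of each $h\in H$ should take values only in the corners $\{y_-,y_+\}$, never $y_0$, so that any proper output will be $2c$-far from true labels placed near $y_0$ on disagreement coordinates.

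Third, for the lower bound I would use Yao's principle: the adversary randomly chooses an $h_f \in H$ together with a marginal whose true labels are a carefully balanced mixture of $y_-$ and $y_+$ (crucially, \emph{not} concentrated at $y_0$), arranged so that (i) $\mathrm{OPT}$ is achieved by $h_f$ with risk $o(1)$, and (ii) on each coordinate where a candidate $h'$ disagrees with $h_f$, $h'(x)$ lies at the opposite corner from the true label, incurring loss $2c=c\cdot(\ell(h'(x),h_f(x))+\ell(h_f(x),y))$ by the tight triangle. An information-theoretic argument (the learner sees $m$ samples but the effective number of encodings is $2^{\omega(m)}$) shows the learner's output $h'$ disagrees with $h_f$ on a constant fraction of the input space with constant probability, yielding $\mathrm{err}(h')\ge (c-o(1))\cdot\mathrm{OPT}$.

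The hard part will be closing off improper strategies. The naive ``output $y_0$ everywhere'' improper learner would incur error $\approx 1$ per coordinate, which would beat $c\cdot\mathrm{OPT}$ if we were not careful. Blocking this is exactly why the adversary's marginal must spread true labels between $y_-$ and $y_+$ rather than place them at $y_0$: then the midpoint $y_0$ is itself globally penalized, while $h_f$ remains near-optimal because its encoding recovers the correct side of each coordinate. Simultaneously ensuring (i) $H$ admits a discretization, (ii) realizable learning of $H$ is trivial, (iii) the adversary's marginal defeats the midpoint improper output, and (iv) the per-coordinate ambiguity defeats any proper output, is the delicate balancing act on which the construction stands.
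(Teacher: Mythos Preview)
Your plan has a structural gap. You want the per-coordinate loss $\ell(h'(x),y)=2c$ to decompose as $c\cdot\bigl(\ell(h'(x),h_f(x))+\ell(h_f(x),y)\bigr)$ with each summand equal to $1$---i.e.\ to route through the midpoint $y_0$. But you have stipulated that hypotheses in $H$ output only $\{y_-,y_+\}$ and that the adversary's true labels lie in $\{y_-,y_+\}$ as well. With all three of $h'(x),h_f(x),y$ confined to the corners, the only non-zero pairwise loss available is $2c$, so on a disagreement coordinate either $\ell(h_f(x),y)=0$ (the triangle reads $2c\le c\cdot 2c$, no factor-$c$ blowup over $\mathrm{OPT}$) or $h'(x)=y$ (the learner pays $0$ there). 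The midpoint $y_0$ never enters the accounting, and the tightness you need cannot be realized. Relatedly, your claim that $\mathrm{OPT}=o(1)$ cannot coexist with a learner error bounded away from zero: that would contradict the $c$-agnostic upper bound of Theorem~\ref{thm:approximate pseudometric}, which applies to any discretely-learnable class over a $c$-approximate pseudometric.

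The paper's construction is far simpler because it lets $\ell$ \emph{fail} the identity of indiscernibles, so that the tight triangle passes through a zero-cost edge rather than a midpoint. Take $Y=\mathbb{N}^2$, set $\ell\bigl((b_1,r_1),(b_2,r_2)\bigr)=0$ if $b_1=b_2$, $1$ if $b_1\neq b_2$ but $r_1=r_2$, and $c$ otherwise, and let $H=\{h:h(x)=(0,\cdot)\ \forall x\}$. Any two hypotheses in $H$ are at loss zero from each other, so realizable (hence discrete) learning is trivial---return any $h\in H$. For the lower bound the adversary plants a uniformly random $f:[k]\to[n]^2$; then $\mathrm{OPT}\le 1$ (some $h\in H$ matches $f$ on the second coordinate, paying $1$ whenever the first coordinate differs), while on each unseen point the learner misses both coordinates with probability $(1-1/n)^2$ and pays $c$, driving expected error to $(1-o_n(1))\,c$. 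Here the saturated triangle is $c=c\cdot(0+1)$, with the ``$0$'' coming from an edge \emph{inside} $H$; this is exactly what lets $\mathrm{OPT}$ stay at $1$ while the learner is forced to pay $c$, and it removes the need for any identifier-encoding or midpoint-blocking machinery.
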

\begin{proof}
    The proof is similar to \Cref{prop:lowerbound-agnostic}. We consider the same instance space $X = \N$ and hypothesis class $H$:
    \begin{align*}
        H = \{h : h(x) = (0, \cdot) ~\forall ~x\in X\}.
    \end{align*} 
    The loss function $\ell: Y \times Y \to \{0,1,c\}$ is also the same, but extended to the larger domain $Y = \N^2$: 
    \begin{align*}
        \ell((b_1, r_1), (b_2,r_2)) = \begin{cases}
        0 & b_1 = b_2\\
        1 & b_1 \neq b_2 ~\text{and}~ r_1 = r_2\\
        c & ~\text{otherwise}.
        \end{cases}
    \end{align*} 
    As before, note that $(X, H, \ell)$ is trivially realizably learnable by always returning any $h \in H$, $\ell$ is a $c$-psuedometric by definition, and for any labeling $f: X \to Y$ there exists $h \in H$ such that for all distributions $D$: 
    \[
    OPT \leq err_{D,f}(h) \leq 1.
    \]
    We now show that the class $(X, H, \ell)$ is only $c$-agnostically learnable. Since $OPT \leq 1$, it suffices to show that for every $m \in \mathbb{N}$, large enough $n \in \mathbb{N}$, and randomized algorithm $\algo$ on $m$ samples, there exists a labeling $f: X \to Y$ and a marginal distribution $D_X$ such that:
        \begin{align}\label{eq:lower-bound2}
         \Ebb_{S \sim D^m_{X}} \Ebb_{x \sim D_{X}} [\ell(\algo(S,f(S))(x), f(x))] \geq \Big(1-\frac{1}{n}\Big)^3c.
        \end{align} 
    For $n \geq \frac{1}{1-(1-(c-c')/c)^{1/3}}$, applying Markov's inequality to \Cref{eq:lower-bound2} implies that $\algo$ has error at least $c'$ with constant probability.
    
    For simplicity, we now restrict our attention to the marginal $D_X$ which is uniform over the set $[k]$ for some $k\in\mathbb{N}$ to be fixed. By Yao's minimax principle, its enough to prove that there exists a \emph{distribution} $\mu$ \emph{over functions} $f : [k] \to [n]^2$ such that any \emph{deterministic} algorithm $\algo$ the following holds 
    \begin{align*}
     \Ebb_{f\sim \mu} \Ebb_{S \sim D^m_{X}}\Ebb_{x\sim D_X}[\ell(\algo(S,f(S))(x),f(x))] \geq \Big(1-\frac{1}{n}\Big)^3 c.
    \end{align*} 
    We now show that the above holds for $\mu$ being uniform over all functions from $[k]$ to $[n]^2$ when $k > \frac{2m}{\ln(n/(n-1))}$. Similar to \Cref{prop:lowerbound-agnostic}, we have that 
    \begin{align*}
        &\Ebb_{x\sim D_X}\Ebb_{f\sim \mu} \Ebb_{S \sim D^m_{X}}[\ell(\algo(S,f(S))(x), f(x))] \geq \Ebb_{x\sim D_X}\Big[\Pr_{S \sim D^m_{X}}[x \notin S] \cdot \Big(1-\frac{1}{n}\Big)^2 \cdot c\Big],
    \end{align*} 
    since no matter the assignment $\algo$ gives to $x \notin S$, it will be wrong on both coordinates with probability $(1 - 1/n)^2$ over the randomness of $\mu$. The result follows by noting that for every $x \in [k]$ 
    \begin{align*}
    \Pr_{S \sim D^m_{X \times f(X)}}[(x,\cdot) \notin S] = \Big(1-\frac{1}{k}\Big)^m \geq 1 - \frac{1}{n}
    \end{align*} 
    since $1-x \geq \exp\{-x/(1-x)\}$, and we have assumed $k > \frac{2m}{\ln(n/(n-1))}$ and $n > 1$. 
    Therefore, we get that 
    \begin{align*}
        &\Ebb_{x\sim D_X}\Ebb_{f\sim \mu} \Ebb_{S \sim D^m_{X}}[\ell(\algo(S,f(S))(x), f(x))]\geq \Big(1-\frac{1}{n}\Big)^3 c
    \end{align*} 
    which completes the proof.
\end{proof}
\subsection{Sub-sampling: Malicious Noise}\label{sec:subsample}
Now that we've seen how to handle practical problems like regression over infinite label spaces, we'll discuss a technique that helps handle data corruption and data-dependent assumptions: sub-sampling. The main idea is as follows. Say that the original unlabeled sample we draw is, in some sense, partially corrupted: perhaps an adversary has changed some fraction of examples (malicious noise), or some portion of the sample is un-realizable for a concept in the class (robust and partial learning). In either case, there generally exists a core subset of ``clean'' samples that we can use to recover the guarantees of \textsc{LearningToCover}. Since we cannot necessarily \textit{identify} these, the idea is to run \textsc{LearningToCover} over enough subsets of the unlabeled sample that we find a clean subsample with high probability. In this section we'll discuss the application of this technique in detail to Kearns and Li's \cite{kearns1993learning} well-studied malicious noise model. In Sections \ref{sec:robust} and \ref{sec:partial}, we discuss further applications of the method to adversarially robust and partial learning.

To start, let's recall the standard malicious noise model. In this variant of PAC learning, instead of having access to the standard sample oracle from the adversary's distribution $D$ over $X \times Y$, we have access to a \textit{malicious oracle} $\bigo_M(\cdot)$ which, with probability $\eta$, outputs an adversarially chosen pair $(x,y)$, and otherwise samples from $D$ as usual.
\begin{definition}[PAC-learning with Malicious Noise]
We call $(X, H, \dist, \ell)$ (agnostically) $(\varepsilon,\delta)$-learnable with malicious noise at rate $\eta=\eta(\varepsilon)$ if there exists an algorithm $\algo$ and function $m=m_{\text{mal}}(\varepsilon,\delta)$ such that for all $\varepsilon,\delta > 0,$ and distributions $D$ over $X \times Y$ satisfying:
\begin{enumerate}
    \item The marginal $D_X \in \dist$,
\end{enumerate}
$\algo$ outputs a good hypothesis with high probability over samples drawn from the malicious oracle of size $n(\varepsilon,\delta)$:
\[
\Pr_{S \sim \bigo_M(\cdot)^{m}}[\text{err}_{D,\ell}(\algo(S)) > OPT + \varepsilon] \leq \delta.
\]
where $OPT = \min_{h \in H}\{\text{err}_{D,\ell}(h)\}$.
\end{definition}
In other words, malicious noise essentially gives a worst-case formalization of the idea that an $\eta$-fraction of the learner's data is (adversarial) garbage. 

Let's now formalize the argument above: modifying \textsc{LearningToCover} to run over subsamples gives a sample-efficient algorithm for learning with malicious noise. For readability, we'll (somewhat informally) restate the algorithm with this change.

\begin{algorithm}[H]
\SetAlgoLined
\textbf{Input:} Realizable PAC-Learner $\algo$, Accuracy Parameter $\varepsilon < 1/2$, Noise Parameter $\eta < \frac{\varepsilon}{1+\varepsilon}$, Unlabeled Sample Oracle $\bigo_U$, Labeled Sample Oracle $\bigo_L$
\\
\textbf{Algorithm:}
\begin{enumerate}[leftmargin=*]
    \item Draw an unlabeled sample $S_U \sim \bigo_U$, and labeled sample $S_L \sim \bigo_L$.
    \item Run $\textsc{LearningToCover}$ over all $S \in (S_U)_{\eta'} \coloneqq \{ S \subseteq S_U: |S| = \lfloor\left(1-\eta'\right)|S_U|\rfloor \}$, where:
    \[
    \eta' = \frac{3\eta + \varepsilon/(1-\varepsilon)}{4}.
    \]
    \item \textbf{Return} the hypothesis in $C(S_U) \coloneqq \underset{S \in (S_U)_{\eta'}}{\bigcup}C(S)$ with lowest empirical error over $S_L$.
\end{enumerate}
 \caption{Malicious to Realizable Reduction}
 \label{alg:malicious}
\end{algorithm}

We now prove that \Cref{alg:malicious} gives an (agnostic) learner that is tolerant to malicious noise.
\begin{theorem}\label{thm:malicious}
    Let $(\dist, X, H)$ be realizably PAC-learnable with sample complexity $n(\varepsilon,\delta)$. Then for any $\eta < \frac{\varepsilon}{1+\varepsilon}$, \Cref{alg:malicious} is an agnostic learner for $(\dist, X, H)$ tolerant to $\eta$ malicious noise. Furthermore letting $\Delta = \frac{\varepsilon}{1+\varepsilon}-\eta$ and $\beta = \left(1+\frac{\eta}{\Delta}\right)\log\left(\frac{1}{\eta}\right)$, its sample complexity is at most:
    \[
    m_{\text{mal}}(\varepsilon,\delta) \leq O\left(\beta\frac{n(\frac{\Delta}{4}, \frac{\delta}{4})}{\Delta} +\frac{\log\left( \Pi_H\left(O\left(n(\frac{\Delta}{2}, \frac{\delta}{2}) + \frac{\eta^2\log(1/\delta)}{\Delta^2}\right)\right)\right) + \log(1/\delta)}{\Delta^2} +  \beta\frac{\eta^2\log(1/\delta)}{\Delta^3}\right)
    \]
    where we've assumed $\varepsilon < 1/2$ for simplicity.
\end{theorem}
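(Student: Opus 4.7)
The plan is to reduce the malicious-noise analysis to two familiar pieces---sampling access to a non-uniform cover on the unlabeled side, and a Chernoff-plus-union argument on the labeled side---while using the subsampling modification to ``peel off'' the corruptions. Let $D$ denote the true distribution, and call a draw \emph{clean} if the malicious oracle returned from $D$ rather than inserting an adversarial point. On the unlabeled side, the key structural observation is that among all $(1-\eta')$-fraction subsets of $S_U$, at least one---namely the subset of clean draws (restricted to the right cardinality)---is an i.i.d.\ sample from $D_X$ of size $\lfloor (1-\eta')|S_U|\rfloor$, provided the number of corrupted draws in $S_U$ is at most $\eta' |S_U|$. A Chernoff bound on a Binomial with mean $\eta |S_U|$ gives this with probability $1-\delta/4$ once $|S_U|$ is $\Omega\bigl(\log(1/\delta)/(\eta'-\eta)^2\bigr)$, which using $\eta'-\eta = \Theta(\Delta)$ explains the $\beta\cdot\eta^2\log(1/\delta)/\Delta^3$ term.

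Conditioned on this event, invoke the Core Lemma (\Cref{lemma:cover}) on the clean subsample $S^*_U$. Setting $|S_U|$ so that $|S^*_U| \geq n(\Delta/4, \delta/4)$---this is where the leading $\beta\cdot n(\Delta/4,\delta/4)/\Delta$ factor appears---the output $C(S^*_U) \subseteq C(S_U)$ is (with failure probability $\delta/4$) a non-uniform $(\Delta/4, \delta/4)$-cover of $H$ with respect to $D_X$. In particular, fixing an optimal $h_{\text{OPT}} \in H$, there exists $h^* \in C(S_U)$ with $\Pr_{x\sim D_X}[h^*(x)\neq h_{\text{OPT}}(x)] \leq \Delta/4$, whence $\text{err}_D(h^*) \leq OPT + \Delta/4$ exactly as in the proof of \Cref{thm:intro-classification}.

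For the labeled stage, apply Chernoff plus a union bound over the finite set $C(S_U)$: on a labeled malicious sample of size $|S_L|$, with probability $1-\delta/4$ the empirical error $\widehat{err}_{S_L}(h)$ of every $h \in C(S_U)$ lies within $\gamma$ of its \emph{malicious} expectation $(1-\eta)\,\text{err}_D(h) + \eta \alpha_h$ for some adversarial bias $\alpha_h \in [0,1]$, where $\gamma = O\bigl(\sqrt{\log(|C(S_U)|/\delta)/|S_L|}\bigr)$. Taking $|S_L|$ large enough that $\gamma \leq \Delta/8$ accounts for the $(\log \Pi_H(\cdot) + \log(1/\delta))/\Delta^2$ term, since $|C(S_U)| \leq |(S_U)_{\eta'}|\cdot \Pi_H(|S_U|)$ and the combinatorial factor is absorbed by the $\beta$-inflated unlabeled count. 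Now compare the ERM output $\hat h$ to $h^*$: any $\hat h$ with $\text{err}_D(\hat h) > OPT+\varepsilon$ has malicious expectation at least $(1-\eta)(OPT+\varepsilon)$, while $h^*$ has malicious expectation at most $(1-\eta)(OPT+\Delta/4)+\eta$. Plugging in $\eta = \varepsilon/(1+\varepsilon) - \Delta$ and simplifying gives that the two expectations differ by at least $(1-\eta)(\varepsilon - \Delta/4) - \eta \geq \Delta/2$, which strictly exceeds $2\gamma$, contradicting empirical minimization.

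The main technical obstacle is the book-keeping in the last step: one must verify the algebraic identity translating the hypothesis $\eta < \varepsilon/(1+\varepsilon)$ into a constant-factor slack of size $\Theta(\Delta)$ between the empirical values of $h^*$ and any $\varepsilon$-bad hypothesis, uniformly in the adversary's bias $\alpha_h$ and under the two-sided concentration error. The rest---Chernoff on corruptions in $S_U$, Chernoff on $S_L$, and the growth-function bound on $|C(S_U)|$---is routine, and the three-term sample complexity falls out by choosing $|S_U|$ and $|S_L|$ to make each failure probability $\delta/4$ and each slack comfortably below $\Delta/8$.
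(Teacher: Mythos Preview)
Your proposal is correct and follows essentially the same approach as the paper: subsample $S_U$ to find a clean i.i.d.\ sample on which \textsc{LearningToCover} yields a non-uniform cover containing some $h^*$ with $\text{err}_D(h^*)\le OPT+\Delta/4$, then use concentration on the labeled side to show ERM over $C(S_U)$ cannot prefer an $\varepsilon$-bad hypothesis. The only cosmetic difference is in the labeled-side bookkeeping: the paper first bounds the corrupted fraction of $S_L$ by $\eta'=\eta+\Delta/4$ and then applies Chernoff on the clean portion, whereas you fold both steps into the single statement that $\widehat{err}_{S_L}(h)$ lies within $\gamma$ of $(1-\eta)\text{err}_D(h)+\eta\alpha_h$ for some $\alpha_h\in[0,1]$; the two formulations are equivalent and lead to the same algebraic separation of order $\Theta(\Delta)$ between $h^*$ and any bad $\hat h$.
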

\begin{proof}
To start, we'll review for completeness a fairly standard analysis of empirical risk minimization under malicious noise. Assume for the moment that the output of \textsc{LearningToCover}, $C(S_U)$, contains a hypothesis $h'$ satisfying $err(h') \leq OPT+\beta_1$. Say we draw $M$ labeled samples for the ERM step, and an $\eta'=\eta+\beta_2$ fraction are corrupted by the adversary. For large enough $M$, we can assume by a Chernoff and union bound that the empirical loss of every hypothesis returned by \textsc{LearningToCover} is at most some $\beta_3$ away from its true loss on the un-corrupted portion of $M$ (we will make all these assumptions formal in a moment). Given these facts, notice that the empirical loss of $h'$ on $M$ is at most:
\[
err_M(h') \leq (1-\eta')(OPT + \beta_1+\beta_3) + \eta'.
\]
On the other hand, the empirical error of any $h_\varepsilon \in H$ whose true error is greater than $OPT+\varepsilon$ is at least:
\[
err_M(h) > (1-\eta')(OPT + \varepsilon-\beta_3).
\]
To ensure that our ERM works, it is enough to show that for any such $h$, $err_M(h) > err_M(h')$. A simple calculation shows that this is satisfied as long as $\beta_1+2\beta_3 \leq \varepsilon$ and $\beta_1+\beta_2 + 2\beta_3 \leq \Delta$. Setting $\beta_1=\beta_2=\beta_3=\Delta/4$ gives the desired result. 

It is left to argue that our assumptions above hold with high probability. First, note that by a Chernoff bound, the probability that $\eta' \geq \eta+\Delta/4$ on a set of $M$ samples is at most $e^{-c\left(\frac{\Delta}{\eta}\right)^2M}$ for some $c>0$, so this occurs with high probability as long as $M \geq \Omega(\log(1/\delta)\eta^2/\Delta^2 )$. Similarly, the empirical error of every hypothesis in $C(S_U)$ on the remaining clean samples will be within $\Delta/4$ of its true error as long as $M \geq \Omega(\log(|C(S_U)|/\delta)\eta^2/\Delta^2)$.

It is then left to show that $C(S_U)$ contains a hypothesis of error at most $OPT+\Delta/4$. To show this, it is enough to ensure that we run \textsc{LearningToCover} over a clean subsample of size at least $n(\Delta/4, \delta/4)$ with high probability. If we draw $|S_U|=O\left(\frac{n(\Delta/4,\delta/4)}{1-\eta'}+\log(1/\delta)\frac{\eta^2}{\Delta^2}\right)$ unlabeled samples, a similar Chernoff bound to the above promises that at most an $\eta'$ fraction are corrupted with high probability, and therefore that at least $n(\Delta/4,\delta/4)$ remain un-corrupted. Running \textsc{LearningToCover} over all subsets of size $(1-\eta')|S_u|$ then gives the desired result. The sample complexity bound follows from choosing $M$ large enough to satisfy the above conditions along with the fact that $|C(S_U)| \leq {n \choose (1-\eta')n}\Pi_H(n)$.
\end{proof}
A few remarks are in order. First, the error tolerance of \Cref{thm:malicious} is tight. In their original introduction of malicious noise, Kearns and Li \cite{kearns1993learning} proved that for most non-trivial concept classes, no PAC-learner can be tolerate $\frac{\varepsilon}{1+\varepsilon}$ malicious noise. Second, we note that Kearns and Li also give a reduction procedure from a base learner which achieves better dependence on $\Delta$ above, however it allows the learner to query specifically for positive or negative examples which is a stronger setup than our model. Finally, we note in the case of finite VC, the above is off from known (randomized) bounds by a factor of roughly $\Delta^2$ \cite{cesa1999sample}. On the other hand, \Cref{thm:malicious} has the advantage of extending far beyond the VC setting, including to the general loss models discussed in \Cref{sec:infinite}. The proof remains mostly the same (with optimal error tolerance differing slightly), and is omitted.

Since our agnostic model restricts the adversary to choosing a distribution whose marginal lies in the original family,
\Cref{thm:malicious} provides the first insight on robustness against an adversary who can corrupt the \textit{underlying data} as well as the labels. One might wonder whether this result can be pushed further: is it possible to be robust against an adversary who can corrupt the marginal over $X$ in some stronger sense? Unfortunately, the answer is no: malicious noise is necessarily the most distributional corruption we can handle. Let's look at two basic lower bounds to see why. First, we'll consider an adversary who can \textit{remove} a portion of the learner's sample.
\begin{proposition}\label{prop:remove-bound}
For any $\delta > 0$, there exists a class $(\dist,X,H)$ which is realizably PAC-learnable, but not learnable under an adversary who can remove a $\delta$ fraction of the learner's sample.
\end{proposition}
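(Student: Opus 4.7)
The plan is to exhibit the canonical threshold class on $[0,1]$ under the uniform distribution and show that a $\delta$-removal adversary can block any learner from achieving accuracy better than $\delta/4$. I would take $X=[0,1]$, $H = \{h_r : r \in [0,1]\}$ with $h_r(x)=\mathbf{1}[x \le r]$, and $\dist=\{U\}$ for $U$ the uniform distribution on $[0,1]$. Realizable PAC-learnability is immediate since $H$ has VC-dimension $1$ and any consistent learner works without corruption.

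For the negative direction I would run a two-point, Le Cam--style indistinguishability argument. Fix $r_1 = 1/2 - \delta/4$ and $r_2 = 1/2 + \delta/4$, and consider the randomized adversary that flips a fair coin to choose a target $h^* \in \{h_{r_1}, h_{r_2}\}$, draws $n$ iid samples from $U$ labeled by $h^*$, and deletes every sample $(x_i, y_i)$ with $x_i \in (r_1, r_2]$ before presenting the survivors to the learner. The expected number of deletions is $\delta n/2$, so a Chernoff bound gives that at most $\delta n$ samples fall in $(r_1, r_2]$ except with probability $e^{-\Omega(n)}$, making the removal legal for every sufficiently large $n$.

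The crux of the argument is that, conditional on the coin flip, the joint law of the surviving sample (including its size) does not depend on whether $h^* = h_{r_1}$ or $h^* = h_{r_2}$: the two hypotheses agree everywhere outside $(r_1, r_2]$, and everything inside has been erased. Hence the output $h'$ of any (possibly improper, randomized) learner is statistically independent of the coin. Combining the triangle inequality on classification distance,
\[
\Pr_U[h' \ne h_{r_1}] + \Pr_U[h' \ne h_{r_2}] \ge \Pr_U[h_{r_1} \ne h_{r_2}] = \delta/2,
\]
with this indistinguishability forces at least one of the two targets to incur error $\ge \delta/4$, and this bad event has probability at least $1/2$ over the adversary's coin. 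Consequently no learner achieves $(\varepsilon, 1/2)$-accuracy under $\delta$-removal for any $\varepsilon < \delta/4$.

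The only subtlety I anticipate is making sure the Chernoff concentration applies uniformly over the claimed sample complexity $n(\varepsilon, \delta')$ of the learner; this is routine since the concentration bound only sharpens as $n$ grows, so the same adversary succeeds for every sufficiently large $n$. No reductions or cover-theoretic machinery are needed, in contrast with the upper-bound results of the paper.
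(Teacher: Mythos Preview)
Your argument is correct and takes a genuinely different route from the paper's. The paper invokes a result of Dudley, Kulkarni, Richardson, and Zeitouni giving an \emph{unlearnable} distribution family $(\dist,X,H)$ in which each individual $(D,X,H)$ is nevertheless learnable with a uniform sample bound; it then appends a distribution-identifying point $x_D$ of mass $\Theta(\delta)$ to each $D$, so that the learner can succeed by first reading off $D$ and then running the per-distribution learner. A $\delta$-removal adversary simply erases every occurrence of $x_D$, reducing the problem back to the original unlearnable family.

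Your construction is more elementary and fully self-contained: a single distribution (uniform on $[0,1]$), VC dimension one, and a two-point Le Cam argument. It has the advantage of needing no external black box and of yielding an explicit quantitative barrier (no accuracy below $\delta/4$), whereas the paper's proof inherits whatever implicit $\varepsilon$ the Dudley et al.\ construction fails at. Conversely, the paper's approach better illustrates the distribution-family phenomenon the surrounding discussion is emphasizing: the removal adversary is dangerous precisely because it can erase information about \emph{which} marginal the learner is facing, whereas your example lives in the fixed-distribution regime where that mechanism is absent.

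One small point to tighten: your claim that the learner fails with probability ``at least $1/2$'' is off by the Chernoff slack $e^{-\Omega(\delta n)}$, so as stated it only rules out confidence strictly below $1/2$. This is harmless for showing non-learnability (take, say, $\delta'=1/3$), and the slack vanishes once $n \gtrsim 1/\delta$; since any learner achieving error $\varepsilon < \delta/4$ on thresholds already needs $n = \Omega(1/\varepsilon) \geq \Omega(1/\delta)$ samples even without an adversary, this regime is automatic. It would be worth stating that one-line observation explicitly rather than leaving it as ``routine.''
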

\begin{proof}
This follows from a result of Dudley, Kulkarni, Richardson, and Zeitouni \cite{dudley1994metric} that there exists an unlearnable class $(\dist,X,H)$ such that for some $n(\varepsilon,\delta)$, $(D,X,H)$ is learnable in $n(\varepsilon,\delta)$ samples for every $D \in \dist$. The lower bound then follows simply from adding an extra unique identifying point $x_D$ to $X$ for every distribution $D$, and modifying each $D \in \dist$ to have $\Theta(\delta)$ support on $x_D$. This modified class is clearly learnable, since after drawing $O(1/\delta)$ samples, the learner will draw $x_D$ and identify the distribution $D$ with good probability. However, the class is not learnable under an adversary who removes points, since with high probability the adversary can completely remove any mention of $x_D$ from the learner's sample, reducing to the original unlearnable class $(\dist,X,H)$.
\end{proof}
An adversary who can \textit{add} samples is similarly powerful. Consider a model in which the adversary, after choosing a marginal distribution and labeling, sees the learner's sample ahead of time and may add an additional $\gamma$ fraction of correctly labeled elements.\footnote{For simplicity, we'll assume the adversary is restricted to picking a marginal distribution and labeling rather than any joint distribution over $X \times Y$.} While the realizable setting is largely unaffected by such an adversary (as any non-optimal hypothesis will still see some bad example), even near-trivial concept classes cannot be agnostically learned.
\begin{proposition}\label{prop:add-bound}
There exists a class $(X,H)$ which for any $\gamma > 0$ is realizably but not agnostically learnable under an adversary who can add a $\gamma$ fraction of correctly labeled points to the learner's sample.
\end{proposition}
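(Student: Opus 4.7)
The plan is to take the simplest nontrivial class: let $X = \{0,1\}$ and $H = \{h_0, h_1\}$ where $h_b(x) = b$ is the constant-$b$ function. Realizable learnability (even against the add adversary) is immediate, since any target $f \in H$ is constant: every sample---whether drawn from $D$ or injected by the adversary---carries the same label, and a single observation suffices to output the correct $h_b$ with zero error. The nontrivial content is therefore the agnostic lower bound, which I would prove via Yao's minimax.

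I would consider two adversary strategies sharing the same labeling $f(0)=1$, $f(1)=0$ (so $f \notin H$): strategy $A$ uses marginal $D_A = (\tfrac{1}{2}+\tfrac{\gamma}{4}, \tfrac{1}{2}-\tfrac{\gamma}{4})$ over $\{0,1\}$, making $OPT = \tfrac{1}{2}-\tfrac{\gamma}{4}$ achieved by $h_1$; strategy $B$ uses $D_B = (\tfrac{1}{2}-\tfrac{\gamma}{4}, \tfrac{1}{2}+\tfrac{\gamma}{4})$, making $OPT$ (the same value) achieved by $h_0$. After viewing the learner's natural sample of size $m$, the adversary adds $\lfloor \gamma m \rfloor$ further points---necessarily from $\{(0,1),(1,0)\}$, since $f$ fixes labels---chosen to balance the combined empirical distribution to approximately $(\tfrac{1}{2},\tfrac{1}{2})$. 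A standard Chernoff estimate shows that the natural $(0,1)$-count lies in the feasible window $\bigl[\tfrac{(1-\gamma)m}{2}, \tfrac{(1+\gamma)m}{2}\bigr]$ with high probability, so the adversary can indeed balance, and the two strategies induce essentially indistinguishable distributions over the learner's input.

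Since any (possibly randomized) learner must therefore produce essentially the same distribution over $\{h_0, h_1\}$ in the two cases, a direct calculation shows its average excess error over the uniform mixture of $A$ and $B$ is at least $\gamma/4$, regardless of the learner. Markov's inequality applied to the bounded excess then yields constant probability of $\Omega(\gamma)$-excess on at least one of the two strategies, so for every fixed $\gamma > 0$ there exist $\varepsilon, \delta = \Omega(\gamma)$ for which no learner achieves $OPT + \varepsilon$ with probability $1-\delta$. This rules out agnostic learnability, which requires success for every $\varepsilon, \delta > 0$. The main obstacle is the low-probability event where the natural sample escapes the balancing window; one can handle it either by conditioning on the typical event and absorbing the small error into $\delta$, or by slightly perturbing $p = \gamma/4$ downward so that the typical $(0,1)$-count lies strictly inside the window, after which Yao's principle transfers the averaged lower bound to a single fixed worst-case adversary.
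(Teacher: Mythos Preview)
Your construction has a genuine gap: it only rules out \emph{proper} learners. When you write ``any (possibly randomized) learner must therefore produce essentially the same distribution over $\{h_0, h_1\}$,'' you are implicitly assuming the learner outputs an element of $H$. But agnostic learnability in the paper allows improper learners, and against your adversary an improper learner trivially succeeds. Since the adversary is restricted to choosing a marginal and a deterministic labeling $f$, every labeled example $(x,f(x))$ reveals $f$ at that point, and the added points are also correctly labeled. With both of your marginals $D_A,D_B$ putting mass $\approx 1/2$ on each of $0$ and $1$, the learner sees both points after $O(\log(1/\delta))$ draws, reconstructs $f$ exactly, and outputs it, achieving error $0 \leq OPT+\varepsilon$. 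In fact this memorization learner works against \emph{every} adversary for your class (if some point has mass below $\varepsilon$ it contributes at most $\varepsilon$ error even if unobserved), so $(X,H)$ with $X=\{0,1\}$ is genuinely agnostically learnable under the add-adversary, and the construction cannot be salvaged.

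The paper's construction avoids exactly this issue by introducing a third ``dummy'' point $x$ on which $h_1$ and $h_2$ agree and which carries almost all the mass ($1-\gamma'$ with $\gamma' \leq \gamma/4$). The informative points $x_1,x_2$ are then rare enough that a small sample may miss them entirely, and when the sample is large the adversary can use its $\gamma m$ budget of added points to equalize the counts of $x_1$ and $x_2$. Your balancing idea is the right one, but without a heavy uninformative point to hide behind, a two-element domain leaks the labeling immediately to any improper learner.
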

\begin{proof}
Let $X=\{x,x_1,x_2\}$ and $H=\{h_1,h_2\}$ be any class such that $h_1(x)=h_2(x)$, but $h_1(x_i) \neq h_2(x_i)$ for $i=1,2$. In the realizable setting, note that a single labeled example on $x_1$ or $x_2$ exactly determines the hypothesis. As long as there is less than $1-\varepsilon$ mass on $x$, the learner will draw such a sample after $O(1/\varepsilon)$ samples with good probability. Further, if the mass on $x$ is at least $1-\varepsilon$, then $h_1$ and $h_2$ are both valid outputs. As a result, any ERM is a valid PAC-learner. Since adding correctly labeled examples can only help this learner, the class remains realizably learnable under an adversary who can add an arbitrary number of clean samples.

In the agnostic setting, consider an adversary who chooses a labeling $f$ such that $f(x)=h_1(x)=h_2(x)$, $f(x_1)=h_1(x_1)$, and $f(x_2)=h_2(x_2)$. The optimal hypothesis $h_{OPT}$ is then decided by the amount of mass on $x_1$ and $x_2$ in the marginal distribution. Namely if the adversary chooses a distribution $D$ over $\{x,x_1,x_2\}$, the optimal error is $\min\{D(x_1),D(x_2)\}$. The idea is then to note that for $\gamma' \leq \gamma/4$, the learner cannot distinguish between the two following distributions:
\[
D_1(x_1) = c_1\gamma', D_1(x_2) = (1-c_1)\gamma'
\]
and
\[
D_2(x_1) = (1-c_1)\gamma', D_2(x_2) = c_1\gamma'
\]
where $1/2>c_1>0$ is some small constant. Informally, if the two distributions are indistinguishable, any learner will always incur error of around $\frac{(1-c_1)\gamma'}{2}$, whereas OPT is $c_1\gamma'$ for both distributions.

Let's now give the formal argument. By Yao's Minimax Principle it is enough to prove there is a strategy over distributions such that any deterministic learner has high error. In particular, if we can prove that the expected error is at least $3\cdot OPT$, then $\Pr[\text{error} \geq 2\cdot OPT] \geq OPT$. Since OPT is just some constant $c_1\gamma'$ (dependent only on $\gamma$), this is sufficient to prove the result. Moving on, consider the strategy in which the adversary chooses the labeling described above, and chooses each marginal ($D_1$ or $D_2$) with probability $1/2$. We'll break our analysis into two cases dependent on the sample complexity of the learner. If the learner uses $O(1/\gamma')$ examples, then there is a constant probability of drawing a sample only consisting of the point $x$. Let $f'$ be the hypothesis returned by the deterministic learner on this sample. By construction, $f'$ must disagree with either $h_1$ or $h_2$ on $x_1$ or $x_2$. Assume $f'$ differs on $h_1(x_1)$ (the other cases will follow similarly). When the distribution is $D_2$, $f'$ has error at least $(1-c_1)\gamma'$. Since this occurs with constant probability \emph{independent} of the choice of $c_1$, choosing $c_1$ sufficiently small leads to an expected error of at least $3c_1\gamma'$ as desired.

On the other hand, when there are $n=\Omega(1/\gamma')$ samples, we claim that the adversary can force the following sample to occur with constant probability: $2\gamma'n$ instances of $x_1$ and $x_2$, and $(1+\gamma)n-4\gamma'n$ instances of $x$. This follows from the fact that for the appropriate choice of constant for $n$, a Chernoff bound gives that both $x_1$ and $x_2$ occur at most $2\gamma'n$ times with constant probability. Since the adversary is allowed to add $\gamma n \geq 4\gamma' n$ arbitrary examples, they can add instances of $x_1$, $x_2$, and $x$ until the above sample is achieved. The remainder of the argument is then the same as the previous case, as any learner response on this sample will incur similarly high expected error.
\end{proof}

It is also reasonable to consider distributional corruption in the semi-supervised setting, where the unlabeled and labeled data-sets might have different underlying distributions. We discuss this model in \Cref{sec:covariate}.

\subsection{Replacing ERM: Semi-Private Learning}\label{sec:privacy}
So far we have focused on property generalization for two forms of noise-tolerance---agnostic learning and learning with malicious noise. In this section, we'll show how to use \Cref{Intro:alg} to generalize a broader spectrum of finitely-satisfiable properties through replacing the ERM process with a generic finite learner with the desired property. Our prototypical example will be \textit{privacy}, which is well known to be finitely-satisfiable via McSherry and Talwar's \cite{mcsherry2007mechanism} \textit{exponential mechanism}. To start, we'll cover a few basic privacy definitions.

\begin{definition}[Differential Privacy]
    A learning algorithm $\algo$ is said to be $\alpha$-differentially private if for all neighboring inputs $S,S'$ which differ on a single example:
    \[
    \Pr[\algo(S) \in T] \leq e^\alpha \Pr[\algo(S') \in T],
    \]
    for all measurable events $T$ in the range of $\algo$.
\end{definition}
The exponential mechanism is one of the most widely used techniques in privacy. Informally, the algorithm allows for differentially private selection of a ``good'' choice from a finite set of objects (potential hypotheses in our case). More formally, let $s: (X\times Y)^* \times H \to \R$ be a ``score'' function, and define ``sensitivity'' $\Delta_s$ to be 
\[
\Delta_s = \max_{h \in H} \max_{S, S'} |s(S, h) - s(S', h)| 
\] 
where $S, S'$ are two neighbouring datasets. The exponential mechanism selects an item with a good score with high probability, while maintaining privacy.
\begin{definition}[Exponential Mechanism \cite{mcsherry2007mechanism}]
    The exponential mechanism $M_E$ on inputs $S, H, s$ with privacy parameter $\alpha$ selects and outputs $h\in H$ with probability \[
    \frac{\exp(\frac{\alpha s(S, h)}{2\Delta})}{\sum_{h' \in H} \exp(\frac{\alpha s(S, h')}{2\Delta})}.
    \]
\end{definition}
It is well known that the exponential mechanism leads to a private learner for finite hypothesis classes under bounded loss.
\begin{theorem}[Theorem 3.4 \cite{kasiviswanathan2011can}]\label{thm:exp-mechanism}
Let $(\dist,X,H,\ell)$ be a finite class with a bounded loss function. Then the sample complexity of $\alpha$-differentially private learning $(\dist,X,H,\ell)$ is at most:
\[
n_{pri}(\alpha,\varepsilon,\delta) \leq O\left(\log\left(\frac{|H|}{\delta}\right)\max\{\varepsilon^{-2},\varepsilon^{-1}\alpha^{-1}\}\right).
\]
\end{theorem}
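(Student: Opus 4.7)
The plan is to apply the exponential mechanism directly to the hypothesis class $H$ using empirical risk as the score function, then combine its utility guarantee with standard uniform convergence over the finite class $H$.

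First, given a labeled sample $S$ of size $n$, I will take the score function to be negative empirical risk, $s(S,h) = -\text{err}_S(h) = -\frac{1}{n}\sum_{(x,y)\in S}\ell(h(x),y)$. Since $\ell$ is bounded in $[0,B]$, changing a single example in $S$ alters $s(S,h)$ by at most $B/n$, so the sensitivity satisfies $\Delta_s \leq B/n$. The $\alpha$-differential privacy of the output then follows immediately from the privacy guarantee of the exponential mechanism, independently of the choice of $n$.

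For utility I will argue in two pieces. The standard utility bound for the exponential mechanism states that with probability at least $1-\delta/2$, the selected hypothesis $\hat h$ satisfies
\[
\text{err}_S(\hat h) \;\leq\; \min_{h \in H}\text{err}_S(h) \;+\; \frac{4B}{\alpha n}\log\!\left(\frac{2|H|}{\delta}\right).
\]
Separately, since $H$ is finite and $\ell$ is bounded by $B$, a Hoeffding bound together with a union bound over $H$ gives that with probability at least $1-\delta/2$, every $h \in H$ has $|\text{err}_S(h) - \text{err}_{D,\ell}(h)| \leq \varepsilon/4$, provided $n = \Omega\!\left(\log(|H|/\delta)/\varepsilon^2\right)$. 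Combining these two events via a union bound, the true risk of $\hat h$ is at most $\mathrm{OPT} + \varepsilon/2$ plus the exponential mechanism slack.

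To finish, I need the exponential mechanism's additive slack to be at most $\varepsilon/2$, which requires $n = \Omega\!\left(\log(|H|/\delta)/(\varepsilon\alpha)\right)$, and simultaneously the uniform convergence condition $n = \Omega\!\left(\log(|H|/\delta)/\varepsilon^2\right)$. Taking the larger of the two requirements yields the claimed sample complexity $n = O\!\left(\log(|H|/\delta)\cdot\max\{\varepsilon^{-2},\varepsilon^{-1}\alpha^{-1}\}\right)$. There is no real obstacle in this proof beyond balancing the two regimes carefully; the result is essentially a direct packaging of the canonical exponential mechanism utility guarantee together with concentration over a finite class, and the two $\max$ terms correspond exactly to the accuracy/privacy tradeoff (uniform convergence dominates when $\alpha \gtrsim \varepsilon$, while the privacy term dominates when $\alpha \ll \varepsilon$).
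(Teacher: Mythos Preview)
Your proposal is correct and is the standard argument: apply the exponential mechanism with negative empirical risk as the score, bound sensitivity by $B/n$, invoke the utility guarantee of the mechanism, and combine with Hoeffding plus a union bound over the finite class $H$. Note, however, that the paper does not give its own proof of this statement; it is quoted as Theorem~3.4 of \cite{kasiviswanathan2011can} (with the remark that the extension from classification loss to bounded loss is immediate), so there is no in-paper argument to compare against beyond that citation.
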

We note that \cite[Theorem 3.4]{kasiviswanathan2011can} only considers classification loss, but the extension to bounded loss is immediate. Unfortunately, even with the power of the exponential mechanism, privacy is a very restrictive condition in the general PAC framework, since we're most often interested in \textit{infinite} hypothesis sets. Indeed even \textit{improper} private learning requires finiteness of a highly restrictive measure known as representation dimension \cite{beimel2013characterizing}, which can be infinite for classes of VC dimension $1$. As a result, the past decade has seen the introduction of a number of weaker, more practical definitions of privacy. In this section we'll focus on a model introduced in 2013 by Beimel, Nissim, and Stemmer \cite{beimel2013private} called \textit{semi-private learning}.
\begin{definition}[Semi-Private Learning]
    We call a class $(\dist,X,H,\ell)$ semi-private PAC-Learnable if there exists an algorithm $\algo$ and two functions $n_{\text{pub}}=n_{\text{pub}}(\varepsilon,\delta, \alpha)$ and $n_{\text{pri}}=n_{\text{pri}}(\varepsilon,\delta, \alpha)$ such that for all $\varepsilon,\delta > 0$ and distributions $D$ over $X \times Y$ whose marginal $D_X$ is in $\dist$, $\algo$ satisfies the following:
    \begin{enumerate}
        \item $\algo$ outputs a good hypothesis with high probability:
        \[
        \Pr_{S_U \sim D_X^{n_{\text{pub}}}, S_L \sim D^{n_{\text{pri}}}}[\text{err}_{D,\ell}(\algo(S_U,S_L)) > OPT + \varepsilon] \leq \delta.
        \]
        \item $\algo$ is semi-private. That is for all $S_U \in X^{n_{\text{pub}}}$:
        \[
        \algo(S_U, \cdot) \text{ is } \alpha\text{-differentially private}.
        \]
    \end{enumerate}
\end{definition}
In other words, semi-private learning offers a model for applications where \textit{labeled data} is sensitive, but some (perhaps opt-in) users might not care about their participation itself being released. Unlike standard private learning, distribution-free semi-private classification is known to be characterized by VC dimension, just like realizable PAC-learning \cite{beimel2013private}. The best sample complexity bounds are due to Alon, Bassily, and Moran (ABM) \cite{alon2019limits}, who use uniform convergence to build a uniform cover for $H$ from unlabeled data, and then apply the exponential mechanism to the resulting cover.

Due to their reliance on uniform convergence, ABM's techniques fail in the more general settings we consider. Further, their use of uniform covers results in sub-optimal public sample complexity even for distribution-free classification. We prove in \Cref{sec:cover} that these objects require asymptotically more samples than non-uniform covers (at least in the distribution-family model), and therefore cannot be used to achieve optimal semi-private learning. We circumvent both of these issues by appealing directly to a realizable learner to build a weaker non-uniform cover. For readability, we first restate the algorithm here.

\begin{algorithm}[H]
\SetAlgoLined
\textbf{Input:} Realizable PAC-Learner $\algo$, Unlabeled Sample Oracle $\bigo_U$, Labeled Sample Oracle $\bigo_L$
\\
\textbf{Algorithm:}
\begin{enumerate}[leftmargin=*]
    \item Draw an unlabeled sample $S_U \sim \bigo_U$, and labeled sample $S_L \sim \bigo_L$.
    \item Run \textsc{LearningToCover} over $S_U$ to get $C(S_U)$.
    \item \textbf{Return} the hypothesis in $C(S_U)$ given by applying the exponential mechanism with respect to $S_L$.
\end{enumerate}
\caption{Semi-Private to Realizable Reduction}
\label{alg:semi-private}
\end{algorithm}

We prove that \Cref{alg:semi-private} gives a semi-private agnostic learner in the distribution-family setting.
\begin{theorem}[PAC-learning implies Semi-Private Learning]\label{thm:private-PAC}
Let $\ell: Y \times Y \to \Rplus$ be a bounded $c$-approximate pseudometric. Then the following are equivalent for all triples $(\dist, X,H,\ell)$:
\begin{enumerate}
    \item $(\dist, X,H,\ell)$ is discretely-learnable
    \item $(\dist, X,H,\ell)$ is $c$-agnostically, semi-private learnable.
\end{enumerate}
\end{theorem}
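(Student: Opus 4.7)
The plan is to instantiate \Cref{alg:semi-private} by combining the discretization argument of \Cref{thm:approximate pseudometric} with the exponential mechanism in place of empirical risk minimization, and then to observe that the reverse direction is essentially immediate from the ``always useful'' property of discretizations.

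For the forward direction, I would fix $\varepsilon' = \varepsilon / (4 c^2 c_1)$ (where $c_1$ is the constant from \Cref{def:discrete}) and let $H_{\varepsilon'}$ be an $\varepsilon'$-discretization realizably learnable in $n(\varepsilon', \delta/3)$ samples. Draw the public unlabeled sample $S_U$ of this size and run \textsc{LearningToCover} on $(H_{\varepsilon'}, \algo, S_U)$. Repeating the calculation in \Cref{thm:approximate pseudometric} applied to a discretization $h^{\varepsilon'}_{OPT} \in H_{\varepsilon'}$ of the true optimum, and using the $c$-approximate triangle inequality, \Cref{lemma:cover} yields that with probability at least $1 - \delta/3$ the cover $C(S_U)$ contains some $h'$ with $\text{err}_{D,\ell}(h') \le c \cdot OPT + \varepsilon/2$. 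Separately, the probable boundedness of $H_{\varepsilon'}$ ensures $|C(S_U)| \le m(n(\varepsilon',\delta/3), \delta/3)$ with probability $1 - \delta/3$.

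Next, conditional on $S_U$, I apply the exponential mechanism of \Cref{thm:exp-mechanism} to the finite class $C(S_U)$ using the labeled private sample $S_L$ with score equal to the negative empirical loss; boundedness of $\ell$ gives sensitivity $O(1/|S_L|)$, so \Cref{thm:exp-mechanism} returns an $\alpha$-differentially private hypothesis within $\varepsilon/2$ of the best element of $C(S_U)$ from $O(\log(|C(S_U)|/\delta) \cdot \max\{\varepsilon^{-2}, (\varepsilon \alpha)^{-1}\})$ labeled samples. Privacy is automatic because $C(S_U)$ depends only on the public sample $S_U$, so the whole pipeline is $\alpha$-differentially private in $S_L$. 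A final triangle-style combination $\text{err}_{D,\ell}(\text{output}) \le (c \cdot OPT + \varepsilon/2) + \varepsilon/2$ together with a union bound over three failure events of mass $\delta/3$ yields the claimed $c$-agnostic, semi-private guarantee. The reverse direction is immediate: given any $\varepsilon$-discretization $H_\varepsilon$ and an adversary realizing $h \in H_\varepsilon$, the always-useful property gives $OPT \le \varepsilon$ with respect to $H$, so running the semi-private $c$-agnostic learner returns a hypothesis of error at most $c \cdot OPT + \varepsilon = O(\varepsilon)$, exactly as in the reverse direction of \Cref{thm:approximate pseudometric}.

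The main obstacle I expect is parameter accounting: $|C(S_U)|$ is itself a random variable because probable boundedness only controls its size with high probability, so one must separately track a ``bad cover'' event, an ``uncovered optimum'' event, and the exponential mechanism's accuracy event, and verify that \Cref{thm:exp-mechanism} (stated for fixed finite classes) applies after conditioning on $S_U$ with the correct sensitivity. Stitching these together without blowing up the $\log(1/\delta)$ dependence is what allows the clean bounds of \Cref{intro:semi-private} and \Cref{intro:cor-privacy}, and in particular the $d+\log(1/\delta)$ (rather than $d \log(1/\varepsilon)+\log(1/\delta)$) public sample complexity that improves on \cite{alon2019limits} comes from the fact that \Cref{lemma:cover} requires only a \emph{non-uniform} cover.
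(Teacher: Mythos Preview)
Your proposal is correct and follows essentially the same approach as the paper: the paper's proof is a one-sentence remark that the argument of \Cref{thm:approximate pseudometric} goes through unchanged once the ERM step is replaced by the exponential mechanism of \Cref{thm:exp-mechanism}, and your write-up is a faithful (and more detailed) unpacking of exactly that. Your extra bookkeeping around the probable-boundedness event and the conditioning of \Cref{thm:exp-mechanism} on $S_U$ is a welcome elaboration the paper leaves implicit.
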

\begin{proof}
The proof is essentially the same as \Cref{thm:approximate pseudometric}. The only difference in the argument is to replace the generic ERM learner over the output of \textsc{LearningToCover} with the exponential mechanism \cite{kasiviswanathan2011can}. 
\end{proof}

Let's now take a look at what \Cref{thm:private-PAC} implies about the special case of distribution-free classification.

\begin{corollary}\label{cor:private-class}
Let $(X,H)$ be a class of VC-dimension $d$ with sample complexity $n(\varepsilon,\delta)$. The sample complexity of (agnostic) semi-private learning $(X,H)$ is at most:
\[
n_{\text{pub}}(\varepsilon,\delta,\alpha) \leq n(\varepsilon/2,\delta/2)
\]
and
\[
n_{\text{pri}}(\varepsilon,\delta,\alpha) \leq O\left(\left(d\log\left(\frac{n(\varepsilon/2,\delta/2)}{d}\right) + \log\left(\frac{1}{\delta}\right)\right)\max\{\varepsilon^{-2},\varepsilon^{-1}\alpha^{-1}\}\right).
\]
Further, the sample complexity of improperly (agnostic) semi-private learning $(X,H)$ is
\[
n_{\text{pub}}(\varepsilon,\delta,\alpha) \leq O\left(\frac{d+\log(1/\delta)}{\varepsilon}\right)
\]
and
\[
n_{\text{pri}}(\varepsilon,\delta,\alpha) \leq O\left(\left(d\log\left(\frac{1}{\varepsilon}\right) + \log\left(\frac{1}{\delta}\right)\right)\max\{\varepsilon^{-2},\varepsilon^{-1}\alpha^{-1}\}\right).
\]
\end{corollary}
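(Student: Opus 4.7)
The plan is to specialize \Cref{thm:private-PAC} to distribution-free binary classification and then simply substitute the appropriate sample complexities. Classification loss is a bounded $1$-approximate pseudometric (in fact a genuine metric), and because the label space $\{0,1\}$ is finite, any realizably PAC-learnable class $(X,H)$ is automatically discretely-learnable by taking $H$ itself as the discretization. Thus \Cref{thm:private-PAC} applies with $c=1$, producing a truly agnostic, semi-private learner via \Cref{alg:semi-private}.

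For the proper case I would trace through the two steps of \Cref{alg:semi-private}. The unlabeled sample complexity comes directly from running \textsc{LearningToCover} with parameters $(\varepsilon/2,\delta/2)$, giving $n_{\text{pub}} \leq n(\varepsilon/2,\delta/2)$. For the labeled complexity, the key observation is that Step 3 applies the exponential mechanism to the finite set $C(S_U)$, which by \Cref{thm:exp-mechanism} requires $O(\log(|C(S_U)|/\delta)\max\{\varepsilon^{-2},\varepsilon^{-1}\alpha^{-1}\})$ labeled samples. Thus it suffices to bound $|C(S_U)|$: since $C(S_U)$ is indexed by $H|_{S_U}$, Sauer--Shelah gives $|C(S_U)| \leq (en/d)^d$ with $n = n(\varepsilon/2,\delta/2)$, which recovers the stated bound after taking logs.

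The improper case uses the same pipeline but fed a tighter base learner: the chaining argument of \cite{li2001improved} yields an improper realizable learner with sample complexity $n(\varepsilon,\delta) \leq O((d+\log(1/\delta))/\varepsilon)$, which immediately gives $n_{\text{pub}} \leq O((d+\log(1/\delta))/\varepsilon)$. Plugging this tighter $n$ into Sauer--Shelah and using the elementary inequality $d\log(1+x/d) \leq x$ collapses $d\log(en/d)$ to $d\log(1/\varepsilon) + O(d+\log(1/\delta))$, which gives the claimed private bound after multiplication by $\max\{\varepsilon^{-2},\varepsilon^{-1}\alpha^{-1}\}$.

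The routine bookkeeping aside, the only conceptual point I expect to verify carefully is that the pipeline remains $\alpha$-semi-private when the base learner is improper. This is not really an obstacle, for two reasons: first, $C(S_U)$ is a function of the unlabeled sample alone; and second, \Cref{lemma:cover}'s guarantee that $C(S_U)$ non-uniformly covers $H$ under classification distance does not require $\mathcal{A}$ to be proper. Conditioned on any fixed $S_U$, Step 3 is simply the exponential mechanism over a finite hypothesis set and inherits $\alpha$-differential privacy from \Cref{thm:exp-mechanism} verbatim.
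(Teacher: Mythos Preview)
Your proposal is correct and follows essentially the same approach as the paper: run \textsc{LearningToCover} on $n(\varepsilon/2,\delta/2)$ public samples, bound $|C(S_U)|$ by Sauer--Shelah as $(e\cdot n(\varepsilon/2,\delta/2)/d)^d$, and apply \Cref{thm:exp-mechanism} to get the private bound. The one minor discrepancy is attribution for the improper realizable bound $n(\varepsilon,\delta)\le O((d+\log(1/\delta))/\varepsilon)$: the paper credits Hanneke's optimal PAC learner rather than the chaining argument of \cite{li2001improved} (which the paper invokes elsewhere only for the \emph{agnostic} $\log(1/\varepsilon)$ removal), so you should swap that citation.
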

\begin{proof}
\textsc{LearningToCover} uses $n(\varepsilon/2,\delta/2)$ samples by definition. In the improper case, Hanneke showed that $n(\varepsilon/2,\delta/2) \leq O\left(\frac{d+\log(1/\delta)}{\varepsilon}\right).$ Since the class has VC dimension $d$, the size of the resulting cover is at most $(e\cdot n(\varepsilon/2,\delta/2)/d)^d$, and the private sample complexity bound then follows from \Cref{thm:exp-mechanism}.
\end{proof}
This improves over the recent upper bound of ABM who showed that
\[
n_{\text{pub}}(\varepsilon,\delta,\alpha) \leq O\left(\frac{d\log(1/\varepsilon)+\log(1/\delta)}{\varepsilon}\right).
\]
In fact, for constant $d$ and $\delta$, \Cref{cor:private-class} completely resolves the unlabeled sample complexity of semi-private learning, as ABM \cite{alon2019limits} prove a matching lower bound.
\begin{theorem}[Public Lower Bound \cite{alon2019limits}]\label{thm:abm-lower}
    Every class that is not privately learnable requires at least $\Omega(1/\varepsilon)$ unlabeled samples to learn in the semi-private model under classification error.
\end{theorem}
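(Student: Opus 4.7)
The plan is a contradiction argument via a padding reduction: assume some class $(X,H)$ is not privately learnable but admits a semi-private learner $\mathcal{A}$ with public sample size $m_{\text{pub}} = o(1/\varepsilon)$, private sample size $m_{\text{pri}}$, privacy parameter $\alpha$, and error/confidence $(\varepsilon,\delta)$. I would construct from $\mathcal{A}$ a fully $\alpha$-differentially private learner for $(X,H)$ with finite, $(\varepsilon,\delta,\alpha)$-dependent sample complexity, contradicting the non-private-learnability hypothesis.

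Since $(X,H)$ is not privately learnable, there exists an absolute constant $\varepsilon_0 > 0$ such that for every $N \in \mathbb{N}$ one can exhibit a realizable target $(D_N, h_N)$ that defeats every $N$-sample $\alpha$-DP learner at error $\varepsilon_0$. By a standard WLOG reduction (padding the instance space with a distinguished point $x^*$ and forcing every hypothesis to label $x^*$ by $0$, which preserves non-private-learnability), I may assume $x^* \in X$ is fixed with $h(x^*) = 0$ for all $h \in H$. I would then pad the hard target: let $D^\mu_N$ draw $(x^*, 0)$ with probability $1 - \mu$ and a labeled sample from $(D_N, h_N)$ with probability $\mu$, where $\mu = \varepsilon/\varepsilon_0$. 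This padded task is realizable by $h_N$, and the pointwise inequality
\[
\mathrm{err}_{D^\mu_N}(h') \;\geq\; \mu \cdot \mathrm{err}_{D_N, h_N}(h')
\]
implies that solving $D^\mu_N$ to error $\varepsilon$ solves $(D_N, h_N)$ to error $\varepsilon_0$.

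The crucial observation is that $\mu m_{\text{pub}} = o(1)$ forces the public sample $S_U \sim (D^\mu_N)_X^{m_{\text{pub}}}$ to consist entirely of $x^*$'s with probability at least, say, $1/2$ (using $(1 - \mu)^{m_{\text{pub}}} \geq 1 - \mu m_{\text{pub}}$). Call this event $E$. Conditioning the accuracy guarantee on $E$ yields $\Pr[\mathrm{err}_{D^\mu_N}(\mathcal{A}) \leq \varepsilon \mid E] \geq 1 - 2\delta$. Hence fixing the public input to the deterministic string $(x^*, \ldots, x^*)$ turns $\mathcal{A}$ into an $\alpha$-DP map on its private argument (since semi-privacy holds pointwise in the public sample) that achieves accuracy $\varepsilon$ on $D^\mu_N$ whenever its private input is $m_{\text{pri}}$ i.i.d.\ samples from $D^\mu_N$.

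The final step converts this into a fully private learner for $(D_N, h_N)$. Given $n = \Theta(\mu m_{\text{pri}})$ labeled samples from $(D_N, h_N)$, flip $m_{\text{pri}}$ independent bias-$\mu$ coins; for each ``heads'' fill the corresponding slot of a simulated dataset with the next real sample, and for each ``tails'' insert $(x^*, 0)$ (which is correctly labeled by construction). A Chernoff bound ensures $n$ real samples suffice with probability $1 - \delta$. The simulated dataset is then i.i.d.\ from $D^\mu_N$, and swapping one real input sample changes at most one slot of the simulation, so feeding the simulation into $\mathcal{A}((x^*, \ldots, x^*), \cdot)$ preserves $\alpha$-DP. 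The output therefore has error $\varepsilon$ on $D^\mu_N$ with probability $\geq 1 - O(\delta)$, hence error $\varepsilon/\mu = \varepsilon_0$ on $(D_N, h_N)$, giving a finite-sample $\alpha$-DP learner for $(D_N, h_N)$ using $O(\varepsilon m_{\text{pri}}/\varepsilon_0)$ samples. Picking $N$ larger than this sample count contradicts the choice of $(D_N, h_N)$. The main delicate step will be the privacy-preserving simulation: one must verify that the randomized padding commutes with neighboring-input perturbations (so a single input change induces at most a single change in the simulated private dataset) while producing a marginal distribution that is genuinely i.i.d.\ $D^\mu_N$, so that $\mathcal{A}$'s accuracy guarantee can be invoked unchanged.
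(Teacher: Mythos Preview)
The paper does not prove this theorem; it is quoted from Alon, Bassily, and Moran as a known lower bound and used only to certify that the public sample complexity in \Cref{cor:private-class} is tight. There is no proof in the present paper to compare against.

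Your padding reduction is the standard route to such bounds and the core mechanism is sound: dilute the target so that $o(1/\varepsilon)$ public draws are all the trivial point with constant probability, freeze the public input to that trivial string, and privately simulate the diluted private sample from real samples. One quantifier issue is worth flagging. Your phrasing ``for every $N$ one can exhibit a target $(D_N,h_N)$ that defeats \emph{every} $N$-sample $\alpha$-DP learner'' asserts a single hard instance per sample budget, which is strictly stronger than the bare negation of private learnability (which only guarantees, for each fixed private algorithm, \emph{some} instance on which it fails). You do not actually need the stronger statement: your construction---public input frozen to $(x^*,\ldots,x^*)$, private input produced by the coin-flip simulation---works uniformly over \emph{all} realizable targets $D$, so it directly manufactures a private PAC learner for the whole class at any desired accuracy $\varepsilon_0$, contradicting non-private-learnability outright. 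Reframing the contradiction this way (build a universal private learner, rather than beat a specific hard instance) removes the quantifier issue and makes the argument clean. You should also check that the WLOG padding by $x^*$ simultaneously preserves both hypotheses you are balancing: non-private-learnability of the class \emph{and} the assumed $o(1/\varepsilon)$-public-sample semi-private learner; both are routine but worth stating.
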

On the other hand, we note that the \textit{private} sample complexity remains off by a log factor from the best known lower bounds of Chaudhari and Hsu \cite{chaudhuri2011sample}.
\begin{theorem}[Private Lower Bound \cite{chaudhuri2011sample}]
There exist classes of VC dimension $O(1)$ which require at least:
\[
n_{\text{pri}}(\varepsilon,\delta,\alpha) \geq \Omega\left( \max\{\varepsilon^{-2},\varepsilon^{-1}\alpha^{-1}\}\right)
\]
private samples to learn.
\end{theorem}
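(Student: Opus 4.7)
The plan is to prove the two terms inside the $\max$ separately, exhibiting a single witness class of VC dimension at most $1$ that forces both bounds. A convenient choice is the class of two constant functions $H = \{h_0, h_1\}$ over any two-point instance space $X=\{x_0,x_1\}$, where $h_b \equiv b$; equivalently one can use thresholds over $\{0,1\}$. Both bounds will be proved against the family of adversarial distributions that put all mass on a single point $x_0$ and vary only the conditional label distribution.

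For the $\Omega(\varepsilon^{-2})$ term, I would use the standard agnostic PAC minimax argument, which does not invoke privacy at all. Consider the two distributions $D_b$ over $X \times Y$ with marginal $\delta_{x_0}$ and conditional label $y \sim \text{Bernoulli}(1/2 + (2b-1)\varepsilon)$. Under $D_b$ the constant hypothesis $h_b$ is optimal with excess risk $2\varepsilon$ over $h_{1-b}$, so any successful agnostic learner must identify $b$ with probability at least $2/3$. An application of Le Cam's two-point method (or the Bretagnolle--Huber inequality) to $D_0^n$ versus $D_1^n$ gives $\mathrm{TV}(D_0^n, D_1^n) \lesssim \sqrt{n}\,\varepsilon$, and forcing this to be a constant yields $n = \Omega(\varepsilon^{-2})$.

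For the $\Omega(\varepsilon^{-1}\alpha^{-1})$ term, I would use a coupling and group-privacy argument on the same pair $D_0, D_1$. The key observation is that one can couple $S_0 \sim D_0^n$ and $S_1 \sim D_1^n$ so that the number $K$ of coordinates on which they disagree is the sum of $n$ independent $\text{Bernoulli}(2\varepsilon)$ variables. On the high-probability event $\{K \le 4n\varepsilon\}$, the coupled samples are neighbors of distance at most $4n\varepsilon$, so group privacy implies
\[
\Pr[\algo(S_1) = h_1] \;\le\; e^{4 \alpha n \varepsilon}\,\Pr[\algo(S_0) = h_1] \;+\; o(1).
\]
Since correctness requires $\Pr[\algo(S_b) = h_b] \ge 2/3$ for each $b$, combining these two inequalities forces $e^{4\alpha n \varepsilon} \ge \Omega(1) / (1/3)$, i.e., $\alpha n \varepsilon = \Omega(1)$, giving $n = \Omega(1/(\alpha\varepsilon))$.

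The main obstacle is executing the group-privacy step cleanly, because differential privacy is defined over deterministic neighbors of distance $1$, whereas the natural coupling produces a \emph{random} Hamming distance $K$ with mean $2n\varepsilon$. The remedy is to condition on the event $\{K \le 4n\varepsilon\}$, which by a Chernoff bound carries probability $1 - o(1)$, and to apply the group-privacy inflation factor $e^{K\alpha}$ deterministically on that event, absorbing the conditioning error into the $o(1)$ slack. Once both terms are established against the same class, taking the larger of the two sample sizes yields the claimed $\Omega(\max\{\varepsilon^{-2},\varepsilon^{-1}\alpha^{-1}\})$ bound.
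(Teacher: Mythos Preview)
The paper does not actually prove this theorem; it is quoted verbatim as a known lower bound from \cite{chaudhuri2011sample} and is used only as a benchmark against which the upper bound of \Cref{cor:private-class} is compared. So there is no ``paper's own proof'' to compare against.

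That said, your proposal is essentially correct and follows the standard route one would expect for such a statement. The $\Omega(\varepsilon^{-2})$ term is the classical two-point agnostic lower bound and needs no privacy at all. The $\Omega(\varepsilon^{-1}\alpha^{-1})$ term via a TV-coupling of the two Bernoulli label distributions combined with group privacy is exactly the mechanism behind the Chaudhuri--Hsu style bounds; your handling of the random Hamming distance $K$ by conditioning on $\{K\le 4n\varepsilon\}$ and absorbing the Chernoff tail into an additive $o(1)$ is the right way to execute it. One point you leave implicit but should make explicit in the semi-private context: because your adversarial marginal is a point mass at $x_0$, any amount of public unlabeled data $S_U$ is information-free, so the lower bound on $n_{\text{pri}}$ holds regardless of $n_{\text{pub}}$. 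This is what makes the argument apply to the semi-private model and not merely to fully private learning. A second small point: you phrase correctness as ``output $h_b$,'' which implicitly assumes a proper learner; for an improper learner the same conclusion holds because any output hypothesis is determined (up to error) by its value at $x_0$, so the decision is still binary.
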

While we have now resolved the public sample complexity of improper learning, it remains an interesting open problem for the proper regime where certain adversarial examples are known to require an extra $\log(1/\varepsilon)$ factor in the standard PAC sample complexity \cite{daniely2014optimal,44252}. We conjecture that \Cref{thm:private-PAC} should still be tight in this setting: namely that the unlabeled semi-private sample complexity should always be at least the realizable PAC sample complexity.

\begin{conjecture}
Let $(X,H)$ be a hypothesis class which is not privately learnable. Then the realizable sample complexity of $(X,H)$ lower bounds the unlabeled sample complexity of proper semi-private learning:
\[
n_{\text{pub}}(\varepsilon,1/2) \geq \Omega(n(\varepsilon,1/2)).
\]
\end{conjecture}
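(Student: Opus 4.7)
The plan is to argue by contrapositive: assume $(X,H)$ admits a proper $\alpha$-semi-private learner $\mathcal{A}$ with unlabeled complexity $m_{\text{pub}} \ll n(\varepsilon, 1/2)$ and aim to conclude that $(X,H)$ is privately learnable, contradicting the hypothesis. The guiding intuition is that when $m_{\text{pub}}$ is too small to support realizable learning on its own, the unlabeled sample cannot really be narrowing $H$ down, and so the labeled (private) data must already suffice to learn the class by itself.

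The first step is to establish a partial converse of \Cref{lemma:cover} tailored to the semi-private setting. For any fixed $S_U$, let $C^\star(S_U) \subseteq H$ denote the high-probability support of $\mathcal{A}(S_U, \cdot)$. Applying the correctness of $\mathcal{A}$ to realizable distributions (semi-private agnostic implies semi-private realizable) gives that for every $h \in H$ and every realizable $D$, with high probability over $S_U \sim D_X^{m_{\text{pub}}}$ some element of $C^\star(S_U)$ is $\varepsilon$-close to $h$ under $D_X$; that is, $C^\star(S_U)$ forms a non-uniform $(\varepsilon, O(\delta))$-cover of $H$ with respect to $D_X$. Moreover, since $\mathcal{A}$ is proper and $|H|_{S_U}| \leq \Pi_H(m_{\text{pub}})$, one should be able to bound the effective size of $C^\star(S_U)$ in terms of $\Pi_H(m_{\text{pub}})$ together with a stability term coming from the differential privacy of $\mathcal{A}(S_U, \cdot)$ in its second argument.

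The second step is to remove the dependence on the public sample entirely. The natural strategy is to extract from the family $\{C^\star(S_U)\}_{S_U}$ a \emph{universal} (distribution-independent) collection of hypothesis sets that covers $H$ simultaneously for every $D_X \in \dist$ --- this is exactly the notion of a \emph{probabilistic representation} of Beimel, Nissim, and Stemmer. Once such a representation is in hand, applying the exponential mechanism over a sample from it, using only the labeled data, yields a pure private PAC learner with sample complexity $\poly(m_{\text{pub}}, \varepsilon^{-1}, \alpha^{-1})$, contradicting the assumption that $(X,H)$ is not privately learnable.

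The main obstacle is precisely this last extraction. The semi-private guarantee only promises correctness when $S_U$ is drawn from the true marginal $D_X$, and there is no a priori reason that a fixed prior over $S_U$ works uniformly for every $D_X \in \dist$. Proving this likely requires combining the growth-function bound on $|C^\star(S_U)|$ with a minimax or compactness argument over $\dist$, or, alternatively, bypassing private learning entirely by proving a stronger direct converse of \Cref{lemma:cover}: that sampling access to a non-uniform cover from $m$ unlabeled points implies realizable PAC learnability from $O(m)$ labeled points. Such a converse --- interesting in its own right --- would plug in with the cover $C^\star(S_U)$ above to settle the conjecture immediately, and seems to me the most promising direction of attack.
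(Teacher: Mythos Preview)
The statement is a \emph{conjecture} in the paper; the authors explicitly leave it open and provide no proof. So there is nothing in the paper to compare your argument against, and your proposal should be read as an attack on an open problem rather than a reconstruction of a known proof. You seem aware of this, since your writeup is a strategy with a clearly flagged obstacle rather than a completed argument.

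That said, there are genuine gaps beyond the one you isolate. First, your claim that $|C^\star(S_U)|$ can be bounded via $\Pi_H(m_{\text{pub}})$ plus a ``stability term'' is not justified. Properness only says the output lies in $H$; it says nothing about how many distinct outputs $\mathcal{A}(S_U,\cdot)$ can produce as $S_L$ varies, and $H$ may be infinite. Differential privacy in the $S_L$ argument controls how much the \emph{output distribution} shifts under single-element changes, not the cardinality of its support, so there is no obvious route from $\alpha$-DP to a size bound on $C^\star(S_U)$. Without such a bound the exponential-mechanism step at the end has no sample-complexity content. Second, your fallback plan --- a direct converse to \Cref{lemma:cover} asserting that a non-uniform cover built from $m$ unlabeled samples yields realizable learning from $O(m)$ labeled samples --- is also not known and does not follow from anything in the paper. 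Even granting the cover, one still has to \emph{select} the near-optimal element from $C(S_U)$, which typically costs an additional $\Theta(\log|C(S_U)|/\varepsilon)$ labeled samples; there is no mechanism by which the same $m$ labeled points both generate the cover and identify the right element without further argument.

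The obstacle you do identify --- upgrading a family of distribution-dependent non-uniform covers to a single distribution-free probabilistic representation --- is indeed the heart of the matter, and is essentially equivalent to the conjecture itself. A minimax or compactness argument over marginals would have to contend with the fact that the paper's own \Cref{thm:separation} and the surrounding discussion show non-uniform covers are strictly weaker than uniform ones in the distribution-family setting, so uniformizing over $\dist$ cannot be free.
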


\subsection{Changing the Base Model: Covariate Shift}\label{sec:covariate}
One issue with semi-supervised models like semi-private learning is that, in practice, the distribution over unlabeled data probably won't match the labeled data exactly. In this section, we'll talk about a final modification to our reduction to tackle such scenarios and more generally to extend property generalization beyond the realizable PAC setting: replacing the base learner. In fact, we already saw this strategy used to a lesser extent in \Cref{sec:infinite}, where we replaced our standard realizable base learner with a discrete learner. Here we'll look at an application in which we assume our initial learner is robust to \emph{covariate shift} \cite{shimodaira2000improving}, meaning that even if the distribution underlying the data shifts between train and test time, the algorithm will continue to perform well. This stronger assumption will allow us to build semi-private learners that can handle corruption between the public and private databases. To start, let's formalize covariate shift in the distribution-family model. 
\begin{definition}[Covariate Shift]
    Let $(\dist, X, H, \ell)$ be any class, and for every $\varepsilon>0$ let $C_\varepsilon$ be a ``covariate-shift'' function that maps every $D \in \dist$ to some family of distributions over $X$. Given any distribution $D \in \dist$ and any $h \in H$, let the error of a potential labeling be given by its worst-case error over $C_\varepsilon(D)$, that is:
    \[
    \text{CS-err}_{D\times h,\ell,\varepsilon}(c) = \max_{D' \in C_\varepsilon(D)}\left\{\underset{{x \sim D'}}{\mathbb{E}}[\ell(c(x),h(x))]\right\}.
    \]
    We say that $(\dist, X, H, \ell)$ is realizably learnable under covariate shift $C=\{C_\varepsilon\}$ if there exists an algorithm $\algo$ and function $n=n(\varepsilon,\delta)$ such that for all $\varepsilon,\delta > 0$, $D \in \dist$, and $h \in H$:
    \[
    \Pr_{S \sim D^n}\left[\text{CS-err}_{D\times h,\ell,\varepsilon}(\algo(S,h(S))) > \varepsilon \right] \leq \delta.
    \]
    We call such a learner robust to covariate shift.
\end{definition}
We emphasize that in the above definition, the covariate shift family \emph{scales with the error parameter $\varepsilon$}. This is a bit different than Shimodaira's original definition \cite{shimodaira2000improving}, but is a natural choice in our context since we consider algorithms which only use access to the original source distribution (sometimes called ``conservative domain adaptation'' \cite{david2010impossibility}). In this setting, we'd expect that as we demand higher accuracy, the amount of covariate shift we can tolerate will decrease. Indeed in the agnostic model, it's clear this scaling is necessary by a similar argument to \Cref{prop:add-bound}.

The key observation to apply learning under covariate shift in our reduction is simply to notice that the non-uniform cover output by \textsc{LearningtoCover} must contain a hypothesis close to optimal under any shifted distribution in $C_\varepsilon(D)$. This can then be used to analyze any semi-supervised model where the marginal of the labeled distribution may be corrupted from $D$ to any distribution in $C_\varepsilon(D)$. In this section, we'll again focus on the setting of semi-private learning. First, let's formalize what it means to be semi-private learnable under covariate shift.
\begin{definition}[Semi-Private Learning under Covariate Shift]
    We call a class $(\dist,X,H,\ell)$ semi-private (agnostically) PAC-Learnable under covariate shift $C=\{C_\varepsilon\}$ if there exists an algorithm $\algo$ and two functions $n_{\text{pub}}=n_{\text{pub}}(\varepsilon,\delta, \alpha)$ and $n_{\text{pri}}=n_{\text{pri}}(\varepsilon,\delta, \alpha)$ such that for all $\varepsilon,\delta > 0$ and distributions $D_X \in \dist$ and $D'$ over $X \times Y$ whose marginal $D'_X \in C_\varepsilon(D_X)$, $\algo$ satisfies the following:
    \begin{enumerate}
        \item $\algo$ outputs a good hypothesis over $D'$ with high probability:
        \[
        \Pr_{S_U \sim D_X^{n_{\text{pub}}}, S_p \sim D'^{n_{\text{pri}}}}[\text{err}_{D',\ell}(\algo(S_U,S_p)) > OPT' + \varepsilon] \leq \delta,
        \]
        where $OPT'=\min_{h \in H}[\text{err}_{D',\ell}(h)]$ is the minimum error over the \textbf{shifted} distribution.
        \item $\algo$ is semi-private. That is for all $S_U \in X^{n_{\text{pub}}}$:
        \[
        \algo(S_U, \cdot) \text{ is } \alpha\text{-differentially private}
        \]
        \end{enumerate}
\end{definition}
In other words, we'd like to recover a near-optimal hypothesis even when the marginal distribution over private data is shifted from the public data. This is a realistic scenario in practice, since the distribution of ``opt-in'' users is likely different from the marginal over the total population. We'll show that this issue is solvable in the semi-private setting as long as the analogous issue in the non-private setting (distribution shift between train and test time) can be resolved.
\begin{theorem}\label{thm:covariate-shift}
    Let $(\dist,X,H,\ell)$ be any class of a finite label space with loss function satisfying the identity of indiscernibles which is realizably learnable under covariate shift $C=\{C_\varepsilon\}$. Then $(\dist,X,H,\ell)$ is semi-private, agnostically learnable under covariate shift $C$ with sample complexity:
    \[
    m(\varepsilon,\delta) \leq n(\eta_\ell\varepsilon,\delta/2) + O\left(\frac{\log\left(\frac{\Pi_H(n(\eta_\ell\varepsilon,\delta/2))}{\delta}\right)}{\varepsilon^2}\right)
    \]
    where we recall $\eta_\ell \geq \Omega\left(\frac{\min_{a \neq b}(\ell(a,b))}{\max_{a \neq b}(\ell(a,b))}\right)$ is a constant depending only on $\ell$.
\end{theorem}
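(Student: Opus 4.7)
The plan is to run essentially the same algorithm as \Cref{alg:semi-private}, but with two modifications: first, plug in the covariate-shift-robust realizable learner as the base learner inside \textsc{LearningToCover}; second, draw $S_U$ from the original marginal $D_X \in \dist$ while drawing the private sample $S_L$ from the shifted distribution $D'$. The three-step structure is unchanged: (i) sample $S_U \sim D_X^{n(\eta_\ell\varepsilon,\delta/2)}$ and $S_L \sim D'^{m_L}$; (ii) form $C(S_U) = \textsc{LearningToCover}(H, \algo, S_U)$; (iii) apply the exponential mechanism to $C(S_U)$ using $S_L$ to score hypotheses by empirical loss. Privacy of the output (conditioned on $S_U$) follows immediately from the standard analysis of the exponential mechanism, so the entire content is in showing accuracy.

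The central step is a covariate-shift analog of \Cref{lemma:cover}: for every fixed $h \in H$, with probability $1-\delta/2$ over $S_U$, the set $C(S_U)$ contains some $h'$ with
\[
\underset{x \sim D'_X}{\mathbb{E}}[\ell(h'(x), h(x))] \leq \eta_\ell \varepsilon \quad \text{for every } D'_X \in C_\varepsilon(D_X).
\]
This is essentially immediate: by definition of robustness to covariate shift, when $\algo$ is trained on $(S_U, h(S_U))$ with $S_U \sim D_X^n$, its output has small worst-case error across the entire shift family $C_{\eta_\ell\varepsilon}(D_X)$. Since $C(S_U)$ contains $\algo(S_U, h(S_U))$ for every $h \in H|_{S_U}$ by construction, specializing to the optimal hypothesis $h_{OPT}$ (with respect to the shifted distribution $D'$) gives some $h'_{OPT} \in C(S_U)$ that is $\eta_\ell\varepsilon$-close to $h_{OPT}$ under $D'_X$.

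From here the analysis mirrors the proof of \Cref{thm:basic-reduction} almost verbatim. The identity of indiscernibles lets us convert the expected-loss closeness between $h'_{OPT}$ and $h_{OPT}$ under $D'_X$ into a classification-error closeness, and then the max-loss bound converts this back into an $\varepsilon/2$ additive gap in $\text{err}_{D',\ell}$ between $h'_{OPT}$ and the true optimum on $D'$. Thus $C(S_U)$ contains a hypothesis with risk at most $OPT' + \varepsilon/2$ on the shifted distribution. A Chernoff and union bound over the finite cover guarantees that with probability $1-\delta/4$ the empirical risks on $S_L$ are within $\varepsilon/8$ of their true values under $D'$, so the exponential mechanism, drawing from $O\left(\frac{\log(|C(S_U)|/\delta)}{\varepsilon \cdot \min\{\varepsilon,\alpha\}}\right)$ private samples, selects a hypothesis of true risk at most $OPT' + \varepsilon$ with probability $1-\delta/4$.

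The main obstacle is essentially conceptual rather than technical: one has to notice that the non-uniform cover produced from $D_X$-samples remains a non-uniform cover under every sufficiently close shifted marginal, which is exactly the content guaranteed by covariate-shift robustness of the base learner. Once this observation is in hand, all remaining steps are either routine empirical concentration arguments or direct appeals to \Cref{thm:exp-mechanism} for privacy, and the stated sample complexity falls out from bounding $|C(S_U)| \leq \Pi_H(n(\eta_\ell\varepsilon,\delta/2))$ exactly as in \Cref{thm:basic-reduction}.
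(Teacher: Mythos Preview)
Your proposal is correct and follows essentially the same route as the paper: the paper's proof simply says it is ``essentially the same as \Cref{thm:basic-reduction}'' with the single new observation that, because the base learner is covariate-shift robust, the output of \textsc{LearningToCover} on $S_U \sim D_X$ contains some $h'$ close to $h_{OPT'}$ under the shifted marginal $D'_X$, after which the analysis proceeds verbatim over $D'$ instead of $D$. Your write-up is if anything more detailed than the paper's, spelling out the exponential mechanism step and the identity-of-indiscernibles conversion explicitly.
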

\begin{proof}
The proof is essentially the same as \Cref{thm:basic-reduction}. The only difference is to note that for all marginal distributions $D_X \in \mathscr{D}$ and choices of shift $D'_X \in C_\varepsilon(D_X)$, the output of \textsc{LearningToCover} has some $h'$ satisfying
\[
\underset{x \sim D'_X}{\mathbb{E}}[\ell(h'(x),h_{OPT'}(x))] \leq \eta_\ell \varepsilon, 
\]
where $h_{OPT'}$ is some optimal hypothesis over the shifted distribution $D'$. As in the standard analysis, this is guaranteed by including the output of the realizable learner on $h_{OPT'}$ (robustness to covariate shift promises this output is close under $D'_X$ with high probability). The remainder of the argument is exactly as in \Cref{thm:approximate pseudometric}, with the exception of working over the shifted distribution $D'_X$ instead of $D_X$.
\end{proof}
We note that this result can also be extended to the more general loss functions discussed in \Cref{sec:infinite} without too much difficulty with the appropriate definition of discrete learnability under covariate shift.

Since \Cref{thm:covariate-shift} is a bit abstract, let's take a look at one concrete application. Given a class $(X,H)$, the class-dependent total variation distance is a metric on distributions measuring the worst case distance across elements of $H \Delta H \coloneqq \{h \Delta h' : h,h' \in H\}$:
\[
TV_{H \Delta H}(D,D') \coloneqq \max_{h \in H \Delta H}\{\left|D(h) - D'(h)\right| \}.
\]
It is not hard to see that any realizable learner is robust to $O(\varepsilon)$ covariate shift in $TV_{H \Delta H}$ distance. We can then apply \Cref{thm:covariate-shift} to build a robust semi-private learner.
\begin{corollary}
Let $(X,H)$ be a class of VC-dimension $d$, and for every $\varepsilon>0$ and distribution $D$ over $X$, define a coviarate shift function:
\[
C_\varepsilon(D) \coloneqq \left\{ D' : TV_{H \Delta H}(D,D') \leq \varepsilon/2\right\}.
\]
Then $(X,H)$ is semi-private learnable under covariate shift $C=\{C_\varepsilon\}$ in only
\[
n_{\text{pub}}(\varepsilon,\delta,\alpha) \leq O\left(\frac{d+\log(1/\delta)}{\varepsilon}\right)
\]
unlabeled samples and
\[
n_{\text{pri}}(\varepsilon,\delta,\alpha) \leq O\left(\left(d\log\left(\frac{1}{\varepsilon}\right) + \log\left(\frac{1}{\delta}\right)\right)\max\{\varepsilon^{-2},\varepsilon^{-1}\alpha^{-1}\}\right)
\]
labeled samples.
\end{corollary}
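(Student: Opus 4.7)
The plan is to reduce the corollary to Theorem 6.13 by showing that any realizable PAC learner for $(X,H)$ is automatically robust to covariate shift in $TV_{H\Delta H}$ distance, and then plug in Hanneke's optimal VC sample complexity together with a Sauer--Shelah cover-size bound into the exponential-mechanism analysis from Theorem 6.12 / Corollary 6.11.

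The main observation is the following automatic robustness lemma. Suppose $\algo$ is an $(\varepsilon/2, \delta)$-realizable PAC learner for $(X,H)$ in the classification setting, so that on $n(\varepsilon/2,\delta) \leq O((d + \log(1/\delta))/\varepsilon)$ samples (by Hanneke's bound) it returns $h' = \algo(S, h(S))$ with $\Pr_{x \sim D}[h'(x) \neq h(x)] \leq \varepsilon/2$ with probability $1-\delta$. Writing $h \Delta h' := \{x : h(x) \neq h'(x)\} \in H \Delta H$, the bound rewrites as $D(h \Delta h') \leq \varepsilon/2$, so for any $D' \in C_\varepsilon(D)$ we have $D'(h \Delta h') \leq D(h\Delta h') + TV_{H\Delta H}(D,D') \leq \varepsilon/2 + \varepsilon/2 = \varepsilon$. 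Hence $\algo$ is a realizable learner robust to covariate shift $C = \{C_\varepsilon\}$ with the same sample complexity as in the non-shift setting.

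Given this robust base learner, I would next follow the proof of Theorem 6.13, but replacing the base reduction with the semi-private variant from Theorem 6.12 / Corollary 6.11: draw $n_{\text{pub}} = n(\varepsilon/2, \delta/2)$ unlabeled samples $S_U$ from $D_X$, run \textsc{LearningToCover} to obtain $C(S_U)$, and then apply the exponential mechanism over $C(S_U)$ using the private labeled sample drawn from the shifted distribution $D'$. The cover guarantee from the robust base learner applied to an optimal $h_{\mathrm{OPT}'} \in H$ (for the shifted joint distribution $D'$) ensures that, with probability $1 - \delta/2$, some $h' \in C(S_U)$ satisfies $\Pr_{x \sim D'_X}[h'(x) \neq h_{\mathrm{OPT}'}(x)] \leq \varepsilon$, which by a triangle inequality for classification error implies $\text{err}_{D'}(h') \leq \mathrm{OPT}' + \varepsilon$.

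For the final sample-complexity accounting, the unlabeled bound $n_{\text{pub}} \leq O((d + \log(1/\delta))/\varepsilon)$ is immediate from Hanneke. For the private bound, Sauer--Shelah gives $|C(S_U)| \leq \Pi_H(n_{\text{pub}}) \leq (e \cdot n_{\text{pub}}/d)^d = O((1/\varepsilon)^d)$, so $\log|C(S_U)| \leq O(d\log(1/\varepsilon))$. Plugging into the exponential-mechanism guarantee (Theorem 6.10) with failure probability $\delta/2$ yields the stated $n_{\text{pri}} \leq O((d\log(1/\varepsilon) + \log(1/\delta)) \max\{\varepsilon^{-2}, \varepsilon^{-1}\alpha^{-1}\})$. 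Semi-privacy is immediate because the exponential mechanism is $\alpha$-differentially private for any fixed $S_U$. The only non-routine step is the robustness lemma in the second paragraph, but this is essentially a one-line consequence of the definition of $TV_{H\Delta H}$, so I expect no serious obstacles.
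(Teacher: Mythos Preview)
Your proposal matches the paper's approach exactly: the paper simply asserts (without proof) that ``any realizable learner is robust to $O(\varepsilon)$ covariate shift in $TV_{H\Delta H}$ distance'' and then invokes \Cref{thm:covariate-shift}, and you have correctly filled in both the one-line robustness argument and the Hanneke/Sauer--Shelah sample-complexity accounting just as in \Cref{cor:private-class}.

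One technical caution worth flagging: your robustness step asserts $h\Delta h' \in H\Delta H$, which requires the base learner's output $h'$ to lie in $H$, yet you then invoke Hanneke's \emph{improper} bound to obtain $n_{\text{pub}} \leq O((d+\log(1/\delta))/\varepsilon)$. The paper glosses over this same tension, but strictly speaking you need either a proper learner (at the cost of an extra $\log(1/\varepsilon)$ factor in $n_{\text{pub}}$) or to enlarge the class defining the TV distance to include symmetric differences with the learner's output space. This does not affect the correctness of the overall strategy, only the precise constant/log factor in the public sample bound.
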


Finally, we note again that our original learner in these results is robust to covariate shift despite having no access to samples from the new distribution. Unfortunately, this model does come with fairly strong lower bounds regarding the type of covariate shifts to which it is possible to be robust \cite{david2010impossibility}. One solution to this problem is to consider a relaxed variant called (non-conservative) domain adaptation, where the learner additionally has access to a small number of unlabeled samples from the test-time distribution. It is certainly possible to define an analog in the semi-private setting, but naively the use of unlabeled data from the private distribution breaks our reduction since privacy won't be preserved. We leave as an open question whether any sort of PAC-learner in the \textit{non}-conservative model could imply semi-private learners with stronger robustness to covariate shift. Some progress has been made in this direction recently by Bassily, Moran, and Nandi \cite{bassily2020learning} for distribution-free classification of halfspaces.

\section{Doubly Bounded Loss}\label{sec:doubly-bounded}
In this section we discuss a natural generalization of loss functions over finite label classes we call doubly bounded loss: for all distinct $y,y' \in Y$, we require $\ell(y,y') \in [a,b]$ for some $b \geq a>0$. This is trivially satisfied by any loss function on a finite label class that satisfies the identity of indiscernibles.
\begin{definition}[Doubly Bounded Loss]
    We say $\ell: Y \to Y$ is $(a,b)$-bounded if there exist $b \geq a>0$ for any distinct $y,y' \in Y$:
    \[
    \ell(y,y') \in [a,b].
    \]
\end{definition}
As discussed in \Cref{sec:infinite}, since we now allow $Y$ to be infinite, we need to work with discrete learnability instead of realizable learnability. We can use a slight modification to the discretization technique in \Cref{thm:approximate pseudometric} to prove the equivalence of discrete and agnostic learnability for doubly-bounded loss functions. Note that this is stronger than our guarantee for $c$-approximate pseudometrics, which only gives $c$-agnostic learnability.

\begin{theorem}\label{thm:bounded}
Let $\ell: Y \times Y \to \Rplus$ be an $(a,b)$-bounded loss function. Then for any class $(\dist, X, H, \ell)$ the following are equivalent:
\begin{enumerate}
    \item $(X, H, \dist, \ell)$ is (properly) discretely-learnable
    \item $(X, H, \dist, \ell)$ is (properly) agnostically learnable.
\end{enumerate}
\end{theorem}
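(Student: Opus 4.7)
The plan is to run the discretization-based variant of \textsc{LearningToCover} exactly as in the proof of \Cref{thm:approximate pseudometric}, but to exploit the gap $a > 0$ between zero-loss and non-zero loss to upgrade $c$-agnostic to fully agnostic learnability. Concretely, for the forward direction, I fix a target error $\varepsilon$, set $\varepsilon' = \min\{a/2,\ \varepsilon a/(4bc_1)\}$ (with $c_1$ the constant from \Cref{def:discrete}), and let $H_{\varepsilon'}$ be the learnable $\varepsilon'$-discretization promised by discrete learnability with base learner $\algo$. Running \textsc{LearningToCover} on $H_{\varepsilon'}$ against an unlabeled sample $S_U$ yields a (probably bounded) cover $C(S_U)$, and then ERM on a fresh labeled sample $S_L$ selects the output hypothesis.

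The heart of the analysis is the following observation. Let $h_{OPT} \in H$ be optimal; since $H_{\varepsilon'}$ point-wise $\varepsilon'$-covers $H$ and $\varepsilon' < a$, the only way to have $\ell(h^*_{OPT}(x), h_{OPT}(x)) \leq \varepsilon'$ at every point is $h^*_{OPT}(x) = h_{OPT}(x)$, so the discretizing hypothesis agrees with $h_{OPT}$ pointwise. Applying \Cref{lemma:cover} to $h^*_{OPT}$, with probability $1 - \delta/2$ there is some $h' \in C(S_U)$ with $\mathbb{E}_{x \sim D_X}[\ell(h'(x), h_{OPT}(x))] \leq c_1 \varepsilon'$. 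Double-boundedness gives $\ell(h'(x),h_{OPT}(x)) \geq a \cdot \ind_{h'(x) \neq h_{OPT}(x)}$, so $\Pr_{x \sim D_X}[h'(x) \neq h_{OPT}(x)] \leq c_1 \varepsilon'/a$, and a pointwise comparison of losses then yields
\[
\text{err}_{D,\ell}(h') - OPT \;\leq\; b \cdot \Pr_{x \sim D_X}[h'(x) \neq h_{OPT}(x)] \;\leq\; \frac{b c_1 \varepsilon'}{a} \;\leq\; \frac{\varepsilon}{2},
\]
since $\ell(h'(x),y) - \ell(h_{OPT}(x),y) = 0$ whenever $h'(x) = h_{OPT}(x)$ and is bounded by $b$ otherwise. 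A Chernoff-plus-union-bound ERM over the (probably) finite $C(S_U)$ then promotes this to the $OPT + \varepsilon$ guarantee, exactly as in \Cref{thm:basic-reduction}.

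The reverse direction is essentially immediate: given any $\varepsilon$-discretization $H_\varepsilon$, its always-useful property implies that for every $h^* \in H_\varepsilon$ there is some $h \in H$ with $\ell(h(x), h^*(x)) \leq \varepsilon$ pointwise, so running the hypothesized agnostic learner on data labeled by $h^*$ returns a hypothesis of error at most $2\varepsilon$ against $h^*$, yielding discrete learnability with $c_1 = 2$ (and properness is preserved in the standard sense, since in the doubly-bounded regime a discretization $H_{\varepsilon'}$ with $\varepsilon' < a$ is essentially forced to coincide with $H$). The main conceptual obstacle is the first direction's replacement of the triangle-inequality step in \Cref{thm:approximate pseudometric} with the chain ``expected loss $\to$ misclassification probability $\to$ pointwise loss difference''; the strictly positive lower bound $a$ is precisely what powers the first arrow and allows us to shave off the multiplicative $c$ in front of $OPT$ that was unavoidable for approximate pseudometrics. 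The scaling $\varepsilon' = \Theta(a\varepsilon/b)$ then controls the polynomial blowup in sample complexity in the ratio $b/a$, and in the special case $a = b = 1$ this recovers \Cref{thm:basic-reduction} for classification loss on finite $Y$.
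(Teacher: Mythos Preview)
Your proposal is correct and follows essentially the same route as the paper: discretize at scale $\varepsilon' = \Theta(a\varepsilon/b)$, run \textsc{LearningToCover} on $H_{\varepsilon'}$, and use the chain ``expected loss $\to$ misclassification probability (via the $a$-lower bound) $\to$ risk difference (via the $b$-upper bound)'' to place a hypothesis within $OPT + \varepsilon/2$ inside $C(S_U)$, then finish with ERM. The one presentational difference is that you make explicit the observation that $\varepsilon' < a$ forces the discretizing hypothesis $h^*_{OPT}$ to agree with $h_{OPT}$ pointwise (and more generally that $H_{\varepsilon'} = H$ in this regime); the paper leaves this implicit and instead writes the weaker inequality $\text{err}_{D,\ell}(h^{\varepsilon'}_{OPT}) - \text{err}_{D,\ell}(h_{OPT}) \leq \varepsilon'$, but both versions rely on the same underlying fact.
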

\begin{proof}
The proof that agnostic learnability implies discrete learnability is the same as in \Cref{thm:approximate pseudometric}, so we focus only the forward direction. Assume $(\dist, X, H, \ell)$ is discretely-learnable. Fix $\varepsilon'=\frac{a\varepsilon}{4b}$, and let $H_{\varepsilon'}$ be a learnable $\varepsilon'$-discretization of $H$. We argue that running \textsc{LearningToCover} on $H_{\varepsilon'}$ (using the promised discrete learner) gives the desired agnostic learner. As before, it is sufficient to prove that $C(S_U)$ contains a hypothesis $h'$ such that $\text{err}_{D,\ell}(h') \leq OPT + \varepsilon/2$. Since $\ell$ is upper bounded, standard empirical risk minimization arguments then give the desired result.

To see why $C(S_U)$ has this property, recall from \Cref{lemma:cover} that for any $h \in H_{\varepsilon'}$, with probability at least $1-\delta/2$ there exists $h' \in C(S_U)$ that is $\varepsilon'$-close to $h$ in the following sense:
\[
\mathbb{E}_{x \sim D_X}[\ell(h'(x),h(x))] \leq \frac{a\varepsilon}{4b}.
\]
Because $\ell$ is $a$-lower bounded, this implies that $h'$ must be close to $h$ in classification error:
\[
\Pr_{x \sim D_X}[h(x) \neq h'(x)] \leq \frac{\varepsilon}{4b},
\]
and since the loss is bounded by $b$, the risk of $h'$ cannot be much more than of $h$:
\[
\text{err}_{D,\ell}(h') \leq \text{err}_{D,\ell}(h) + \varepsilon/4.
\]
Let $h_{OPT} \in H$ be an optimal hypothesis. Since $H_{\varepsilon'}$ $\varepsilon'$-covers $H$, there exists $h^{\varepsilon'}_{OPT} \in H_{\varepsilon'}$ such that:
\[
\text{err}_{D,\ell}(h^{\varepsilon'}_{OPT}) - \text{err}_{D,\ell}(h_{OPT}) \leq \varepsilon'.
\]
Let $h'^{\varepsilon'}_{OPT} \in H_{\varepsilon'}$ denote the output of the base learner $\algo$ on the labeling given by $h^{\varepsilon'}_{OPT}$. Then by the above, we have that:
\begin{align*}
    \text{err}_{D,\ell}(h'^{\varepsilon'}_{OPT}) &= \underset{(x,y) \sim D}{\mathbb{E}}[\ell(h'^{\varepsilon'}_{OPT}(x),y)]\\
    &\leq \underset{(x,y) \sim D}{\mathbb{E}}[\ell(h^{\varepsilon'}_{OPT}(x),y)] + \varepsilon/4\\
    &\leq \underset{(x,y) \sim D}{\mathbb{E}}[\ell(h^{\varepsilon'}_{OPT}(x),y) - \ell(h_{OPT}(x),y) + \ell(h_{OPT}(x),y)] + \varepsilon/4\\
    &= \text{err}_{D,\ell}(h^{\varepsilon'}_{OPT}) - \text{err}_{D,\ell}(h_{OPT}) + OPT + \varepsilon/4\\
    &\leq OPT + \varepsilon/2
\end{align*}
as desired.
\end{proof}
It is worth noting that the upper bound on the loss can be removed if the adversary is restricted to choosing a marginal over $Y$ which is weakly concentrated.

\section{Robust Learning}\label{sec:robust}
Robust learning is an extension to the PAC setting that models an adversary with the power to \textit{perturb} examples at test time. In practice, this corresponds to the fact that we'd like our predictors to be stable to small amounts of adversarial noise---this could range anywhere from a sticker on a stop-sign tricking a self-driving car, to completely imperceptible perturbations that totally fool standard classifiers. The latter was famously demonstrated by Athalye, Engstrom, Ilyas, and Kwok \cite{athalye2018synthesizing}, who showed how to generate such perturbations and provided the classic example of tricking a standard ImageNet classifier into thinking a turtle was a gun. Their seminal work caused an explosion of both practical and theoretical research in the area.\footnote{Their work has over 900 citations despite being only four years old.}

Formally, adversarial robustness is modeled simply by changing the error function to be the maximum error over some pre-defined set of neighboring perturbations.
\begin{definition}[Robust Loss]
    Let $X$ be an instance space and $\mathcal{U}: X \to P(X)$ a ``perturbation function'' mapping elements to a set of possible perturbations. Given a loss function $\ell: Y \times Y \to \Rplus$, the robust loss of a concept $h: X \to Y$ with respect to a distribution $D$ over $X \times Y$ is:
    \[
\text{R-err}_{\mathcal{U},D}(h) = \underset{(x,y) \sim D}{\mathbb{E}}\left[\max_{x' \in \mathcal{U}(x)} \ell(h(x'),y)\right].
\]
\end{definition}
In other words, a hypothesis with low robust loss performs well even against an adversary who can perturb $x$ to any ``nearby point'' (i.e.\ any $x' \in \mathcal{U}(x)$). Standard realizable and agnostic Robust PAC-learning are then simply defined by replacing the standard error function with the robust error function. Robust learning in the distribution-family model does require one extra twist: we need to make sure that each hypothesis in the class actually has a corresponding distribution over which it is realizable. To this end, we introduce a basic notion of closure for distribution families.
\begin{definition}[Robust Closure]
Let $\dist$ be a set of distributions over an instance space $X$ and $H$ a concept class. Given any concept $h$, let $X_{h}$ denote the set of points in $X$ on which $h$ has $0$ robust loss with respect to itself, that is:
\[
X_h \coloneqq \{ x \in \text{X} : \forall x' \in \mathcal{U}(x), \ell(h(x'),h(x))=0 \}.
\]
For notational simplicity, let $D|_h$ denote the restriction $D|_{X_h}$. The robust closure of $\dist$ under $H$ is:
\[
\mathscr{D}_H \coloneqq \dist \cup \bigcup_{D \in \mathscr{D}, h \in H} D|_h.
\]
\end{definition}
In the robust distribution-family model, it only really makes sense to define realizable learnability over the robust closure of $\mathscr{D}$, since otherwise there may be hypotheses in the class that are not realizable with respect to any distribution in $\dist$ and cannot be chosen by the adversary at all. With this in mind, let's formalize this model.
\begin{definition}[(Realizable) Distribution-Family Robust PAC Learning]
A class $(\dist, X, H, \ell)$ is Robustly PAC-learnable in the realizable setting with respect to perturbation function $\mathcal{U}$ if there exists an algorithm $\algo$ and function $n(\varepsilon,\delta)$ such that for all $\varepsilon,\delta > 0$ and distributions $D$ over $X\times Y$ satisfying:
\begin{enumerate}
    \item The marginal $D_X \in \dist_H$,
    \item $\min_{h \in H} \text{R-err}_{\mathcal{U},D}(h) = 0$,
\end{enumerate}
then:
\[
\Pr_{S \sim D^{n(\varepsilon,\delta)}}[\text{R-err}_{\mathcal{U},D}(\algo(S)) > \varepsilon] \leq \delta.
\]
\end{definition}
Agnostic learnability is defined similarly, but since the adversary is unrestricted, there is no longer any need to take the robust closure.
\begin{definition}[(Agnostic) Distribution-Family Robust PAC Learning]
A class $(\dist, X, H, \ell)$ is Robustly PAC-learnable in the agnostic setting with respect to perturbation function $\mathcal{U}$ if there exists an algorithm $\algo$ and function $n(\varepsilon,\delta)$ such that for all $\varepsilon,\delta > 0$ and distributions $D$ over $X\times Y$ satisfying $D_X \in \dist$, then:
\[
\Pr_{S \sim D^{n(\varepsilon,\delta)}}[\text{R-err}_{\mathcal{U},D}(\algo(S)) > OPT + \varepsilon] \leq \delta,
\]
where $OPT = \min_{h \in H}\{\text{R-err}_{\mathcal{U},D}(h)\}$.
\end{definition}
We note that different works consider different models of access to the perturbation set $\mathcal{U}$ as well (e.g.\ assuming $\mathcal{U}$ is known to the learner \cite{montasser2019vc}, or has some type of oracle access \cite{montasser2020reducing,montasser2021adversarially}). Our reduction requires fairly weak access to $\mathcal{U}$---it is enough to be able to estimate the empirical robust loss of a hypothesis $h$ over any finite sample $S \subset X$. With this in mind, let's now prove realizable and agnostic robust learning are equivalent in the distribution-family model. We'll focus on the special case of (multi-class) classification, and start by re-stating our modified algorithm for simplicity of presentation.

\begin{algorithm}[H]
\SetAlgoLined
\textbf{Input:} Realizable Robust PAC-Learner $\algo$
\\
\textbf{Algorithm:}
\begin{enumerate}[leftmargin=*]
    \item Draw an unlabeled sample $S_U$, and labeled sample $S_L$.
    \item Run $\algo$ over all possible subsets and labelings of $S_U$ to get:
    \[
    C(S) \coloneqq \{ \algo(S,h(S))~|~ S \subseteq S_U, h \in H|_S\}.
    \]
    \item \textbf{Return} the hypothesis in $C(S)$ with lowest empirical robust error over $S_L$.
\end{enumerate}
 \caption{Agnostic to Realizable Reduction Robust Setting}
 \label{alg:robust}
\end{algorithm}

\begin{theorem}\label{thm:robust}
    If $(\dist,X,H)$ is robustly PAC-learnable in the realizable setting with sample complexity $n(\varepsilon,\delta)$, then \Cref{alg:robust} robustly learns $(\dist,X,H)$ in the agnostic setting in:
\[
m_U(\varepsilon,\delta) \leq O\left(\max_{\mu \in [0,1-\varepsilon]}\left\{\frac{n(\varepsilon/(2(1-\mu)),\delta/3)}{1-\mu} \right\}\right)
\]
unlabeled samples, and
\[
m_L(\varepsilon,\delta) \leq O\left(\frac{m_U(\varepsilon,\delta) + \log(1/\delta)}{\varepsilon^2}\right)
\]
labeled samples.
\end{theorem}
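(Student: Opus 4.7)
The plan is to reduce agnostic robust learning to realizable robust learning by isolating, via subsampling, the sub-distribution on which the optimal hypothesis is robustly realizable, applying the base learner there, and finishing with empirical risk minimization over the resulting non-uniform cover. Let $h_{OPT} \in H$ attain $OPT = \text{R-err}_{\mathcal{U},D}(h_{OPT})$, let $X_{OPT} := X_{h_{OPT}}$ be the set of points on which $h_{OPT}$ is $\mathcal{U}$-stable, and set $\mu := \Pr_{D_X}[x \notin X_{OPT}]$. A quick check gives $\mu \leq OPT$: if $x \notin X_{OPT}$, then $h_{OPT}(x') \neq h_{OPT}(x)$ for some $x' \in \mathcal{U}(x)$, so for \emph{any} label $y$, $\max_{x''}\ell(h_{OPT}(x''), y) = 1$. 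In particular we may restrict attention to $\mu \leq OPT < 1-\varepsilon$, since $OPT \geq 1-\varepsilon$ makes the agnostic guarantee vacuous.

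The first key step extracts a clean realizable sample from $S_U$. By definition of the robust closure, $D_X|_{X_{OPT}}$ lies in $\dist_H$, and $h_{OPT}$ has zero robust error against itself there. Setting $\varepsilon' := \varepsilon/(2(1-\mu))$ and sizing $|S_U|$ as in the theorem (a Chernoff-level upper bound on $n(\varepsilon',\delta/3)/(1-\mu)$, maximized over $\mu \in [0,1-\varepsilon]$ since $\mu$ is unknown a priori), a Chernoff bound guarantees $|S'| \geq n(\varepsilon',\delta/3)$ with probability $\geq 1-\delta/3$, where $S' := S_U \cap X_{OPT}$; conditional on its size, $S'$ is i.i.d. from $D_X|_{X_{OPT}}$. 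Because the algorithm runs $\algo$ on every subset of $S_U$ with every $H$-consistent labeling, $C(S_U)$ contains $h' := \algo(S',h_{OPT}(S'))$, and the realizable guarantee yields
\[
\mathbb{E}_{x \sim D_X|_{X_{OPT}}}\!\left[\max_{x' \in \mathcal{U}(x)} \ell(h'(x'), h_{OPT}(x))\right] \leq \varepsilon'
\]
with probability $\geq 1-\delta/3$.

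The second key step translates this into an agnostic bound on $D$ while avoiding a naive triangle inequality (which would cost an extra factor of $OPT$). Let $G := \{(x,y) : \max_{x' \in \mathcal{U}(x)}\ell(h_{OPT}(x'), y) = 0\}$, so $\Pr_D[G^c] = OPT$, $G \subseteq X_{OPT} \times Y$, and $y = h_{OPT}(x)$ on $G$. Splitting over $G$ and $G^c$ and using $h_{OPT}(x') = h_{OPT}(x)$ for $x \in X_{OPT}$, $x' \in \mathcal{U}(x)$,
\begin{align*}
\text{R-err}_{\mathcal{U},D}(h')
&\leq \mathbb{E}_D\!\left[\mathbf{1}_G \max_{x'}\ell(h'(x'), h_{OPT}(x))\right] + OPT\\
&\leq \mathbb{E}_{D_X}\!\left[\mathbf{1}_{X_{OPT}} \max_{x'}\ell(h'(x'), h_{OPT}(x))\right] + OPT\\
&= (1-\mu)\,\mathbb{E}_{D_X|_{X_{OPT}}}\!\left[\max_{x'}\ell(h'(x'), h_{OPT}(x))\right] + OPT\\
&\leq (1-\mu)\varepsilon' + OPT \;=\; OPT + \varepsilon/2.
\end{align*}
The crucial move is the second line, where we upper bound the conditional factor $\Pr_D[y = h_{OPT}(x) \mid x] \leq 1$ to pass to a pure $D_X$-expectation; I expect this is the main subtlety of the proof, since an unguided triangle inequality would instead give a bound of the form $OPT + (1-\mu)\varepsilon' + \mu$, which is too weak when $\mu$ is a significant fraction of $OPT$.

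Finally, Step 3 runs empirical robust risk minimization on $S_L$ over $C(S_U)$. Since $|C(S_U)| \leq 2^{|S_U|}\Pi_H(|S_U|)$ is finite and the robust loss is $[0,1]$-valued, a Chernoff + union bound shows that $|S_L| = O((\log|C(S_U)| + \log(1/\delta))/\varepsilon^2) = O((|S_U| + \log(1/\delta))/\varepsilon^2)$ labeled samples suffice for the empirical robust risk of every $h \in C(S_U)$ to lie within $\varepsilon/4$ of its true robust risk with probability $\geq 1-\delta/3$. The ERM output then has true robust risk at most the minimum over $C(S_U)$ plus $\varepsilon/2$, hence at most $OPT + \varepsilon$, and a final union bound over the three failure events yields the claimed guarantee with probability $\geq 1 - \delta$.
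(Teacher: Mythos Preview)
Your proof is correct and follows essentially the same strategy as the paper: subsample $S_U$ to isolate the points lying in $X_{h_{OPT}}$, invoke the realizable robust learner on that clean subsample to place a near-optimal hypothesis in $C(S_U)$, and finish with robust-loss ERM over $S_L$. Your error decomposition via the event $G=\{(x,y):\max_{x'}\ell(h_{OPT}(x'),y)=0\}$ is a slightly cleaner repackaging of the paper's split over $X_{h_{OPT}}$ versus its complement (plus a union bound on $D|_{h_{OPT}}$), but the two arguments do the same work and yield the identical bound $\text{R-err}_{\mathcal{U},D}(h')\leq OPT+\varepsilon/2$.
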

\begin{proof}
The proof is similar to \Cref{thm:basic-reduction}, but like malicious noise (\Cref{thm:malicious}), requires the use of subsampling. The key issue is that for a distribution $D$ over $X \times Y$, the optimal hypothesis $h_{OPT}$ may not be realizable with respect to $D_X$, that is we may have:
\[
\text{R-err}_{U,D_X \times h_{OPT}}(h_{OPT}) > 0.
\]
As a result, our realizable learner (and therefore \textsc{LearningToCover}) has no guarantees over this distribution.
On the other hand, our realizable learner does have good guarantees over the \emph{restricted marginal} $D_{X}|_{h_{OPT}}$. We can then fix the above issue by running \textsc{LearningToCover} of all \textit{subsets} of $S$ including its restriction to $X_{h_{OPT}}$. We will see that this essentially simulates running the realizable learner on the realizable restriction $D_X|_{h_{OPT}}$ and recovers the desired guarantees.

Let's take a look at this more formally. As in our previous arguments it is enough to prove that $C(S)$ contains a hypothesis $h'$ with robust error at most $OPT + \varepsilon/2$, since a standard Chernoff bound tells us that $O(\log(|C(S)|/\delta)/\varepsilon^2)$ labeled examples are enough to estimate the robust loss of every hypothesis in $C(S)$ with high probability. We note that this is the only step in our reduction which requires access to the perturbation set $\mathcal{U}$. 

It is left to show that $C(S)$ satisfies this property. Let $D|_{h_{OPT}}$ denote the restriction of $D$ to $X_{h_{OPT}}$, the points in $X$ on which $h_{OPT}$ has $0$ robust loss with respect to itself. Let $\bar{D}|_{h_{OPT}}$ be the restriction to the complement, that is $X \setminus X_{h_{OPT}}$. The idea is to decompose our analysis into two separate parts over $D|_{h_{OPT}}$ and $\bar{D}|_{h_{OPT}}$. With this in mind, let $\mu^*$ denote the mass of $D_X$ on $X_{h_{OPT}}$, and let $OPT'$ denote the robust error of $h_{OPT}$ over $D|_{h_{OPT}}$. Since we are restricting our attention to classification error, notice that we can decompose $OPT$ as:
\begin{align*}
    OPT &= \text{R-err}_{\mathcal{U},D}(h_{OPT})\\
    &= \underset{(x,y) \sim D}{\Pr}\left[\exists x' \in \mathcal{U}(x): h_{OPT}(x') \neq y\right]\\
    &= \mu^*\underset{(x,y) \sim \bar{D}|_{h_{OPT}}}{\Pr}\left[\exists x' \in \mathcal{U}(x): h_{OPT}(x') \neq y\right]\\ 
    &+ (1-\mu^*)\underset{(x,y) \sim D|_{h_{OPT}}}{\Pr}\left[\exists x' \in \mathcal{U}(x): h_{OPT}(x') \neq y\right]\\
    &= \mu^* + (1-\mu^*)OPT',
\end{align*}
where the last step follows from noting that by definition for all $x$ in the support of $\bar{D}|_{h_{OPT}}$, $h_{OPT}$ is not constant on $\mathcal{U}(x)$. To get a function within $\varepsilon/2$ robust loss of $OPT$, we claim it is sufficient to prove $C(S)$ contains some $h$ within robust error $\varepsilon/(2(1-\mu^*))$ of $h_{OPT}$ over $D|_{h_{OPT}}$, that is some $h$ satisfying:
\begin{equation}\label{eq:robust}
\Pr_{x \sim D_X|_{h_{OPT}}}[\exists x' \in \mathcal{U}(x): h(x') \neq h_{OPT}(x')] \leq \varepsilon/(2(1-\mu^*)).
\end{equation}
This follows from a similar analysis to the above. Namely, letting $R(h,x,y)$ denote the event
\[
R(h,x,y) \coloneqq \exists x' \in \mathcal{U}(x): h(x') \neq y
\]
for notational simplicity, we can break down $\text{R-err}_{\mathcal{U},D}(h)$ as:
\begin{align*}
\text{R-err}_{\mathcal{U},D}(h) &= \mu^*\underset{(x,y) \sim \bar{D}|_{h_{OPT}}}{\Pr}\left[R(h,x,y)\right] + (1-\mu^*)\underset{(x,y) \sim D|_{h_{OPT}}}{\Pr}\left[R(h,x,y)\right]\\
&\leq \mu^* + (1-\mu^*)\underset{(x,y) \sim D|_{h_{OPT}}}{\Pr}\left[R(h,x,y)\right]\\
&\leq \mu^* + (1-\mu^*)\underset{(x,y) \sim D|_{h_{OPT}}}{\Pr}\left[R(h_{OPT},x,y)\right]\\ 
&+ (1-\mu^*)\underset{(x,y) \sim D|_{h_{OPT}}}{\Pr}\left[\exists x' \in \mathcal{U}(x): h(x') \neq h_{OPT}(x')\right] \\
&\leq \mu^* + (1-\mu^*)\left(OPT' + \frac{\varepsilon}{2(1-\mu^*)}  \right)\\
&= OPT + \varepsilon/2.
\end{align*}

It remains to prove that $C(S)$ contains a hypothesis satisfying \Cref{eq:robust} with high probability. To see this, note that by definition of realizable robust learning, on a labeled sample $(S,h_{OPT}(S)) \sim D|_{h_{OPT}} \times h_{OPT}$ of size $n(\varepsilon/(2(1-\mu^*)),\delta/3)$ our learner will output $h$ satisfying:
\[
\Pr_{x \sim D_X|_{h_{OPT}}}[\exists x' \in \mathcal{U}(x): h(x') \neq h_{OPT}(x)] \leq \varepsilon/(2(1-\mu^*))
\]
with probability at least $1-\delta/3$. To get \Cref{eq:robust}, it is then enough to note that $h_{OPT}$ is constant on $\mathcal{U}(x)$ for all $x$ in the support of $D_X|_{h_{OPT}}$ by definition.

The idea is now to draw a large enough unlabeled sample such that with probability at least $1-\delta/3$, the restriction to $X_{h_{OPT}}$ is at least this size. By a Chernoff bound, it is enough to draw $c_1\frac{n(\varepsilon/(1-\mu^*),\delta/3)}{1-\mu^*}$ points to achieve this for some large enough constant $c_1>0$.\footnote{We've assumed for simplicity that $n(\varepsilon,\delta) \geq \Omega(\log(1/\delta))$. This assumption can be removed by including an extra additive factor of $\log(1/\delta)$.} Since we do not know $\mu^*$, we'll need to draw $c_1 \max_{\mu \in [0,1-\varepsilon]}\left\{\frac{n(\varepsilon/(1-\mu),\delta/3)}{1-\mu} \right\}$ points to ensure this property holds (if $\mu^* \geq 1-\varepsilon$, note that any hypothesis gives a valid solution). By a union bound we have that this overall process succeeds with probability at least $1-2\delta/3$, and outputting the hypothesis in $C(S)$ with the lowest empirical robust risk then succeeds with probability $1-\delta$ as desired.
\end{proof}
\Cref{thm:robust} can be extended to many of the generic property generalization results in the main body, including approximate pseudometric loss, malicious noise, and semi-private learning, though the exact parameters may be somewhat weaker (e.g.\ learning over non-binary loss may incur additional factors and lead to $c$-agnostic rather than truly agnostic learning). This comes at the cost of an extra factor of $\varepsilon^{-1}$ over reductions in the finite VC setting \cite{montasser2019vc}.

\section{Partial PAC-Learning}\label{sec:partial}
Partial PAC-learning is an extension of the standard PAC model to functions that are only defined on a certain portion of the input. Originally introduced by Long \cite{long2001agnostic} and recently developed in greater depth by Alon, Hanneke, Holzman, and Moran (AHHM) \cite{alon2021theory}, this model allows for the theoretical formalization of popular data-dependent assumptions such as \emph{margin} that have no known analog in the PAC model. Combined with the distribution-family framework, this captures a significant portion of learning assumptions studied in both theory and practice (e.g.\ learning halfspaces with margin and distributional tail bounds). Let's formalize this model, starting with partial functions.

\begin{definition}[Partial Function]
Let $X$ be an instance space and $Y$ a label space. A partial function is a labeling $f: X \to Y \cup \{*\}$, where elements labeled ``$*$'' are thought as of undefined. The support of $f$, denoted $\text{supp}(f)$, is the set of elements $x \in X$ s.t.\ $f(x) \neq *$.
\end{definition}

Standard Partial PAC-learning is defined much like the standard model with the simple modification that ``$*$'' labels are always considered to be incorrect. As a result, in the realizable case, when the adversary selects a particular partial function $f$, their marginal distribution over the instance space $X$ must be restricted to lying on $\text{supp}(f)$. This makes formalizing data-dependent assumptions easy. If one wanted to consider halfspaces with margin $\gamma$ for instance, one simply labels every point within $\gamma$ of the decision boundary as ``$*$.'' Interestingly, much like the distribution-family setting, Partial-PAC learning falls outside both the uniform convergence and the sample compression paradigm \cite{ailon2018approximate}. AHHM also show a dramatic failure of empirical risk minimization: not only does naively applying an ERM to the partial class fail, it will also fail on any total extension of the class. Despite the lack of these standard tools, both Long and AHHM were able to show that distribution-free classification of partial classes is still controlled by VC dimension, and as a result that the equivalence of realizable and agnostic learnability extends to this setting. In this section, we'll discuss how a variant of our reduction shows that this result extends to the distribution-family model, extended loss function, and to properties beyond agnostic learning.

In the distribution-family model, formalizing realizable learnability requires some slight changes from the standard model, since we need to make sure our hypotheses are actually realizable over some distribution in the family (this is automatic in the distribution-free setting). To this end, we introduce a basic notion of closure for distribution families.
\begin{definition}[Partial Closure]
Let $\dist$ be a set of distributions over an instance space $X$ and $H$ a concept class. Given any concept $h$, and distribution $D$ over $X$, 
let $D|_h$ denote the restriction $D|_{\text{supp}(f)}$. The partial closure of $\dist$ under $H$ is:
\[
\mathscr{D}_H \coloneqq \dist \cup \bigcup_{D \in \mathscr{D}, h \in H} D|_h.
\]
\end{definition}
In the realizable model it makes more sense to work with the closure of $\dist$ than $\dist$ itself, since otherwise the class $H$ may contain hypotheses that cannot be realized over any distribution, and therefore cannot be accessed by the adversary at all. For simplicity, we'll also restrict our attention to (multi-class) classification where the label space $Y=[m]$, and recall that the loss of any undefined point is always $1$.
\begin{definition}[(Realizable) Distribution-Family Partial PAC Learning]
A partial class $(\dist, X, H)$ is PAC-learnable in the realizable setting if there exists an algorithm $\algo$ and function $n(\varepsilon,\delta)$ such that for all $\varepsilon,\delta > 0$ and distributions $D$ over $X\times Y$ satisfying:
\begin{enumerate}
    \item The marginal $D_X \in \dist_H$,
    \item $\min_{h \in H} \text{err}_{D}(h) = 0$,
\end{enumerate}
then:
\[
\Pr_{S \sim D^{n(\varepsilon,\delta)}}[\text{err}_{D}(\algo(S)) > \varepsilon] \leq \delta,
\]
where the error $\text{err}_{D}(h)$ is standard classification error:
\[
\text{err}_{D}(h) = \Pr_{(x,y) \sim D}[h(x) \neq y].
\]
\end{definition}
Agnostic learnability is defined analogously, but since the adversary is unrestricted, there is no need to move to the closure of $\dist$.
\begin{definition}[(Agnostic) Distribution-Family Partial PAC Learning]
A partial class $(\dist, X, H)$ is PAC-learnable in the agnostic setting if there exists an algorithm $\algo$ and function $n(\varepsilon,\delta)$ such that for all $\varepsilon,\delta > 0$ and distributions $D$ over $X\times Y$ satisfying $D_X \in \dist$, then:
\[
\Pr_{S \sim D^{n(\varepsilon,\delta)}}[\text{err}_{D}(\algo(S)) > OPT + \varepsilon] \leq \delta.
\]
\end{definition}

The issue with our standard reduction strategy for partial functions is that in the agnostic model, the adversary's marginal distribution over $X$ might have support outside of $\text{supp}(h_{OPT})$, which causes \textsc{LearningToCover} to lose its guarantee of outputting a non-uniform cover. This can be dealt with by a variant of our subsampling technique. If we run \textsc{LearningToCover} over all subsamples of the unlabeled sample $S_U$, one of these subsamples must match the support of $h_{OPT}$. This is in fact the same strategy used for adversarial robustness in \Cref{sec:robust}, but we will include the algorithm again to make this section self-contained. 

\begin{algorithm}[H]
\SetAlgoLined
\textbf{Input:} Realizable Robust PAC-Learner $\algo$
\\
\textbf{Algorithm:}
\begin{enumerate}[leftmargin=*]
    \item Draw an unlabeled sample $S_U$, and labeled sample $S_L$.
    \item Run $\algo$ over all possible subsets and labelings of $S_U$ to get:
    \[
    C(S) \coloneqq \{ \algo(S,h(S))~|~ S \subseteq S_U, h \in H|_S\}.
    \]
    \item \textbf{Return} the hypothesis in $C(S)$ with lowest empirical error over $S_L$.
\end{enumerate}
 \caption{Agnostic to Realizable Reduction Partial PAC Setting}
 \label{alg:partial}
\end{algorithm}

\begin{theorem}\label{thm:partial}
If $(\dist,X,H)$ is a realizably PAC-learnable partial class with sample complexity $n(\varepsilon,\delta)$, then \Cref{alg:partial} agnostically learns $(\dist,X,H)$ in
\[
m_U(\varepsilon,\delta) \leq O\left(\max_{\mu \in [0,1-\varepsilon]}\left\{\frac{n(\varepsilon/(2(1-\mu)),\delta/3)}{1-\mu} \right\}\right)
\]
unlabeled samples, and
\[
m_L(\varepsilon,\delta) \leq O\left( \frac{m_U(\varepsilon,\delta) + \log(1/\delta)}{\varepsilon^2}\right)
\]
labeled samples.
\end{theorem}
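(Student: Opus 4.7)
The plan is to follow the same subsampling strategy used for robust learning in \Cref{thm:robust}, adapted to the partial setting. Let $D$ be the adversary's distribution, let $h_{OPT} \in H$ be an optimal agnostic hypothesis, and let $\mu^* = \Pr_{x \sim D_X}[x \notin \text{supp}(h_{OPT})]$ be the mass of $D_X$ lying outside $h_{OPT}$'s support. Since partial functions are defined to incur loss $1$ on points outside their support, I first decompose
\[
OPT = \text{err}_D(h_{OPT}) = \mu^* + (1-\mu^*) \cdot OPT',
\]
where $OPT' = \text{err}_{D|_{h_{OPT}}}(h_{OPT})$ is the error on the restricted distribution $D|_{h_{OPT}}$ supported on $\text{supp}(h_{OPT})$. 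Here the marginal $D_X|_{h_{OPT}}$ lies in $\dist_H$ by definition of the partial closure, so the realizable learner $\algo$ applies.

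The key step is to show that $C(S)$ contains a hypothesis whose error is at most $OPT + \varepsilon/2$. To achieve this, I would draw enough unlabeled samples so that with probability $1-\delta/3$ the restriction of $S_U$ to $\text{supp}(h_{OPT})$ has size at least $n(\varepsilon/(2(1-\mu^*)), \delta/3)$. By a Chernoff bound this requires $O(n(\varepsilon/(2(1-\mu^*)), \delta/3)/(1-\mu^*))$ samples, but since $\mu^*$ is unknown, I take the maximum of this quantity over $\mu \in [0,1-\varepsilon]$ (if $\mu^* > 1-\varepsilon$, then $OPT > 1 - \varepsilon$ and any hypothesis trivially suffices). Since \Cref{alg:partial} enumerates all subsets of $S_U$, one such subset $S^\star$ will be precisely the restriction to $\text{supp}(h_{OPT})$, and running $\algo$ on $(S^\star, h_{OPT}(S^\star))$ yields some $h' \in C(S)$ satisfying
\[
\Pr_{x \sim D_X|_{h_{OPT}}}[h'(x) \neq h_{OPT}(x)] \leq \frac{\varepsilon}{2(1-\mu^*)}
\]
with probability $1-\delta/3$. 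Converting back to error on $D$ via the same decomposition,
\[
\text{err}_D(h') \leq \mu^* + (1-\mu^*)\left(OPT' + \frac{\varepsilon}{2(1-\mu^*)}\right) = OPT + \varepsilon/2,
\]
as required.

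To finish, I would invoke the standard Chernoff plus union bound argument over the finite set $C(S)$: drawing $m_L(\varepsilon,\delta) = O((\log|C(S)| + \log(1/\delta))/\varepsilon^2)$ labeled samples ensures that empirical classification error on $S_L$ uniformly approximates true error within $\varepsilon/4$, so that the empirical risk minimizer over $C(S)$ has true error at most $OPT+\varepsilon$ with probability $1-\delta/3$. Bounding $|C(S)| \leq 2^{|S_U|} \cdot \Pi_H(|S_U|)$ (or more carefully, summing over subsets and labelings) and taking a final union bound over the three failure events gives the stated labeled sample complexity.

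The main obstacle, as in the robust case, is that the optimal hypothesis's support is adversary-chosen and unknown to the learner, so the realizable learner cannot be applied directly to $S_U$. Subsampling circumvents this at the cost of enumerating exponentially many subsets and of a parameter sweep over $\mu^*$, which drives the $\max_{\mu}$ in the unlabeled complexity; the only subtlety beyond \Cref{thm:robust} is verifying that the partial closure $\dist_H$ indeed contains $D_X|_{h_{OPT}}$, which holds by construction.
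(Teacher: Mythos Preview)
Your proposal is correct and follows essentially the same approach as the paper: the decomposition $OPT = \mu^* + (1-\mu^*)OPT'$, the subsampling to find the restriction of $S_U$ to $\text{supp}(h_{OPT})$, the maximization over $\mu \in [0,1-\varepsilon]$ to handle the unknown $\mu^*$, and the final ERM over $C(S)$ all match the paper's argument (which itself simply reproduces the proof of \Cref{thm:robust}). Your additional remark that $D_X|_{h_{OPT}} \in \dist_H$ by definition of the partial closure is exactly the justification the paper relies on implicitly.
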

\begin{proof}
The proof is essentially the same as for \Cref{thm:robust}, but we repeat it here for completeness. As always, it is enough to prove that $C(S)$ (from \Cref{alg:partial}) contains a hypothesis $h'$ with error at most $OPT+\varepsilon/2$. The key issue with our standard reduction is that the optimal hypothesis $h_{OPT}$ may be undefined on certain examples in the unlabeled sample $S_U$. By running over all subsamples of $S_U$, we in essence simulate pulling samples only from the support of $h_{OPT}$, which is enough to get the desired guarantee.

More formally, let $D|_{h_{OPT}}$ be the restriction of $D$ to $\text{supp}(h_{OPT})$, and $\bar{D}|_{h_{OPT}}$ the restriction to its complement $X \setminus \text{supp}(h_{OPT})$. The idea is to decompose our analysis into two separate parts over $D|_{h_{OPT}}$ and $\bar{D}|_{h_{OPT}}$. With this in mind, let $\mu^*$ denote the mass of $D_X$ on the undefined portion of $h_{OPT}$, and let $OPT'$ denote the error of $h_{OPT}$ over $D|_{h_{OPT}}$. Since we have restricted our attention to classification, notice that we can decompose $OPT$ as:
\begin{align*}
\text{err}_{D}(h_{OPT}) &= \underset{(x,y) \sim D}{\Pr}\left[h_{OPT}(x) \neq y \right]\\
    &= \mu^*\underset{(x,y) \sim \bar{D}|_{h_{OPT}}}{\Pr}\left[h_{OPT}(x) \neq y \right] + (1-\mu^*)\underset{(x,y) \sim D|_{h_{OPT}}}{\Pr}\left[h_{OPT}(x) \neq y \right]\\
    &= \mu^* + (1-\mu^*)OPT'.
\end{align*}
We'd like to prove that $C(S)$ contains a hypothesis $h$ within $\varepsilon/2$ error of optimal. We claim it is sufficient to show that $C(S)$ contains a hypothesis within $\varepsilon/(2(1-\mu^*))$ classification distance of $h_{OPT}$, since:
\begin{align*}
\text{err}_{D}(h) &= \mu^*\underset{(x,y) \sim \bar{D}|_{h_{OPT}}}{\mathbb{E}}\left[h(x) \neq y \right] + (1-\mu^*)\underset{(x,y) \sim D|_{h_{OPT}}}{\mathbb{E}}\left[h(x) \neq y \right]\\
&\leq \mu^* + (1-\mu^*)\underset{(x,y) \sim D|_{h_{OPT}}}{\mathbb{E}}\left[h(x) \neq y \right]\\
&\leq \mu^* + (1-\mu^*)\left(\underset{(x,y) \sim D|_{h_{OPT}}}{\mathbb{E}}\left[h_{OPT}(x) \neq y \right] + \frac{\varepsilon}{2(1-\mu^*)}  \right)\\
&=\mu^* + (1-\mu^*)OPT' + \frac{\varepsilon}{2} \\
&= OPT + \varepsilon/2,
\end{align*}
where the third line follows from the fact that $h'$ and $h_{OPT}$ only differ on a $\frac{\varepsilon}{2(1-\mu^*)}$ fraction of inputs over $D|_{h_{OPT}}$.

It is left to argue that $C(S)$ contains such a hypothesis $h$. Recall that on a labeled sample $(S,h(S)) \sim D|_{h_{OPT}} \times h_{OPT}$ of size $n(\varepsilon/(2(1-\mu^*)),\delta/3)$, \textsc{LearningToCover} will contain $h$ that is $\varepsilon/(2(1-\mu^*))$-close to $h_{OPT}$ in classification error over $D|_{h_{OPT}}$ with probability at least $1-\delta/3$. The idea is then to draw a large enough unlabeled sample such that with probability at least $1-\delta/3$, the restriction of the sample to $D|_{h_{OPT}}$ is at least this size (since we run over every subsample, we will always hit this restriction). By a Chernoff bound, it is enough to draw $c_1\frac{n(\varepsilon/(2(1-\mu^*)),\delta/3)}{1-\mu^*}$ points to achieve this for some large enough constant $c_1>0$.\footnote{As in \Cref{thm:robust}, we've assumed for simplicity that $n(\varepsilon,\delta) \geq \Omega(\log(1/\delta))$. This assumption can be removed by including an extra additive factor of $\log(1/\delta)$.} Since we do not know $\mu^*$, we'll need to draw $c_1 \max_{\mu \in [0,1-\varepsilon]}\left\{\frac{n(\varepsilon/(1-\mu),\delta/3)}{1-\mu} \right\}$ points to ensure this property holds (if $\mu^* \geq 1-\varepsilon$, note that any hypothesis gives a valid solution). By a union bound we have that this overall process succeeds with probability at least $1-2\delta/3$, and outputting the hypothesis in $C(S)$ with the lowest empirical risk then succeeds with probability $1-\delta$ as desired.
\end{proof}
Like \Cref{thm:robust}, \Cref{thm:partial} can be extended to many of the generic property generalization results in the main body, including approximate pseudometric loss, malicious noise, and semi-private learning, though it may experience some degradation of parameters (e.g.\ $c$-agnostic rather than truly agnostic learning) depending on how the loss of ``$*$'' values are formalized in these settings. Again this comes at the cost of an additional factor of $\varepsilon$ over known reductions based on sample compression in the finite VC regime \cite{alon2021theory}.


\section{Uniform Stability}\label{sec:stable}
Uniform stability, originally introduced by Bousquet and Elisseeff \cite{bousquet2002stability}, is a useful algorithmic property that is closely tied to both generalization and privacy. Informally, an algorithm $\algo$ is said to be uniformly stable if for all elements $x \in X$, the probability that $\algo$ changes its output on $x$ over neighboring datasets is small.
\begin{definition}[Uniform Stability]
    A learning algorithm is said to be $\alpha$-uniformly stable if for all neighboring inputs $S,S'$ which differ on a single example, all $x \in X$, and all $y \in Y$: 
    \[
    \Pr[\algo(S)(x) = y] \leq \Pr[\algo(S')(x) = y] + \alpha.
    \]
\end{definition}
Uniform stability can also be thought of as a variant of \textit{private prediction} \cite{dwork2018privacy}, which protects against adversaries who have restricted access to a model only through prediction responses on individual points (this is often the case in practice since it is common to release APIs with query access rather than full models). Like semi-privacy, this definition has the benefit of maintaining practicality in a reasonable range of circumstances while weakening the stringent requirements of standard private learning. Indeed, it is well known that in the distribution-free classification setting, uniformly stable learning and private prediction are both possible for any class with finite VC dimension \cite{shalev2010learnability,dwork2018privacy,dagan2020pac}. Unsurprisingly, these previous works (at least those working in the agnostic model), rely on uniform convergence and uniform covers. We'll show these can be replaced with a variant of our standard reduction. The argument is otherwise similar to the proof in \cite{dagan2020pac}.

\begin{theorem}\label{thm:stable}
    Let $(\dist,X,H)$ be a realizably learnable class with sample complexity $n(\varepsilon,\delta)$. Then there exists an $\alpha$-uniformly stable, $\alpha$-semi private algorithm that agnostically learns $(\dist,X,H)$ in only
    \[
    m_U(\varepsilon,\delta,\alpha) \leq O\left( \frac{n(\varepsilon/2,\delta/2)}{\alpha} \right)
    \]
    unlabeled samples, and
    \[
    m_L(\varepsilon,\delta,\alpha) \leq O\left( \frac{\log\left(\Pi_H(n(\varepsilon,\delta)) \right)}{\min\{\alpha\varepsilon,\varepsilon^2\}} \right)
    \]
    labeled samples.
\end{theorem}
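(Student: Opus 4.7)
The plan is to modify \Cref{alg:semi-private} with two ingredients: a subsampling step that amplifies stability with respect to the unlabeled data, and the exponential mechanism (as in \Cref{thm:private-PAC}) applied to $S_L$ to provide simultaneous stability and semi-privacy for the selection step. Concretely, the algorithm draws $m_U = O(n(\varepsilon/2, \delta/2)/\alpha)$ unlabeled samples and $m_L$ labeled samples, forms $S'_U \subseteq S_U$ by picking a uniformly random subset of size $n(\varepsilon/2, \delta/2)$, runs \textsc{LearningToCover} on $S'_U$ to build $C(S'_U)$, and finally outputs a hypothesis from $C(S'_U)$ sampled via the exponential mechanism with privacy parameter $\alpha$ and utility equal to negative empirical risk on $S_L$.

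First I would verify correctness, which follows the usual template from \Cref{thm:basic-reduction} and \Cref{thm:private-PAC}. Marginally, $S'_U$ is distributed as $n(\varepsilon/2,\delta/2)$ i.i.d.\ draws from $D_X$, so \Cref{lemma:cover} gives that with probability at least $1-\delta/2$ the cover $C(S'_U)$ contains some $h'$ with $\text{err}_{D}(h') \leq OPT + \varepsilon/2$. Since $|C(S'_U)| \leq \Pi_H(n(\varepsilon/2,\delta/2))$, the utility bound for the exponential mechanism combined with a Chernoff and union bound over empirical risks in $C(S'_U)$ guarantees the selected hypothesis has true error at most $OPT + \varepsilon$ once $m_L$ matches the stated bound, exactly as in \Cref{thm:exp-mechanism}.

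The hard part will be verifying $\alpha$-uniform stability, which must hold jointly over the input $(S_U, S_L)$. For perturbations to $S_L$, the exponential mechanism at parameter $\alpha$ is $\alpha$-differentially private, and pure $\alpha$-differential privacy implies that each output probability changes by a factor of at most $e^\alpha \leq 1 + O(\alpha)$ on neighboring inputs; rescaling $\alpha$ by a constant then gives the additive $\alpha$-stability bound required by the definition. For perturbations to $S_U$, the subsampling step does the work: given neighboring $S_U, S_U'$ differing in a single element, there is a natural coupling of the two subsampling distributions that agrees on $S'_U$ except on the event that the differing element is included, which has probability exactly $n(\varepsilon/2,\delta/2)/m_U \leq \alpha$. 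Since the remainder of the algorithm is a measurable function of $(S'_U, S_L)$, this bounds the change in every prediction probability by $\alpha$. Finally, $\alpha$-semi-privacy is immediate: the construction of $C(S'_U)$ depends only on $S_U$, and the exponential mechanism on the resulting finite candidate set is $\alpha$-differentially private with respect to $S_L$.
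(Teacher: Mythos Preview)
Your proposal is correct and matches the paper's approach essentially line for line: subsample the unlabeled data to gain stability with respect to $S_U$, then apply the exponential mechanism over the resulting cover to gain both semi-privacy and stability with respect to $S_L$. The only cosmetic difference is the constant budgeting (the paper allocates $\alpha/2$ to each half and runs the exponential mechanism at parameter $\alpha/4$, whereas you absorb constants by rescaling).
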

\begin{proof}
The proof boils down to a standard subsampling trick first noted by \cite{shalev2010learnability}. Instead of drawing our standard unlabeled sample of size $n(\varepsilon/2,\delta/2)$, we draw a sample of size $\frac{n(\varepsilon/2,\delta/2)}{2\alpha}$ and run \textsc{LearningToCover} over a random $\alpha/2$ fraction of the sample. This ensures that swapping out any individual sample can only effect the result with probability $\alpha/2$. Since this subsample is of size $n(\varepsilon/2,\delta/2)$, \textsc{LearningToCover} keeps its standard guarantees and the output $C(S_U)$ has a hypothesis within $\varepsilon/2$ of optimal with probability $1-\delta/2$. We can now apply the exponential mechanism with privacy parameter $\alpha/4$, which ensures the algorithm is $\alpha/2$-uniformly stable with respect to the labeled sample as well. The sample complexity bounds come from standard analysis of the exponential mechanism and the size of $C(S_U)$. Semi-privacy comes for free due to our use of the exponential mechanism.
\end{proof}
As in previous sections, \Cref{thm:stable} can be extended to any of the generic property generalization results in the main body, including for instance $c$-approximate pseudometric loss, malicious noise, and robustness to covariate shift. In the VC setting, the complexity essentially matches the best known bounds in the literature, which are given by a similar reduction using uniform convergence \cite{dagan2020pac}.

\section{Statistical Query Model}\label{sec:SQ}
Kearns' \cite{kearns1998efficient} statistical query model is a popular modification of PAC learning where the sample oracle is replaced with the ability to ask noisy statistical questions about the data.
\begin{definition}[Realizable SQ-learning]
    Given a distribution $D$ over $X$ and $h \in H$, let $STAT(D,h)$ be an oracle which upon input of a function $\psi:  X \times Y \to [-1,1]$ and tolerance $\tau \in \Rplus$ may output any estimate of the expectation of $\psi$ up to $\tau$ error, that is:
    \[
    STAT(D,h)(\psi,\tau) \in \mathbb{E}_{x \sim D}[\psi(x,h(x))] \pm \tau.
    \]
    We call a class $(\dist,X,H,\ell)$ SQ-learnable if for all $\varepsilon>0$, there exists some tolerance $\tau = \tau(\varepsilon)$, query complexity $n(\varepsilon,\tau)$, and an algorithm $\algo$ such that for all $D \in \dist$ and $h \in H$, $\algo$ achieves $\varepsilon$ error in at most $n(\varepsilon,\tau)$ oracle calls to $STAT(D,h)$ with tolerance at worst $\tau$.\footnote{While we generally think of $\tau$ as being at worst polynomial in $\varepsilon$, this is not strictly necessary for the model.}
\end{definition}
Agnostic learning is then defined analogously where $D,h$ is replaced with a generic distribution over $X \times Y$ whose marginal lies in $\dist$.
We can use a basic form of discretization to prove property generalization in the SQ model.
\begin{theorem}\label{thm:SQ}
    Let $\ell$ be a $c$-approximate pseudometric and $(\dist,X,H,\ell)$ a realizably SQ-learnable class with query complexity $n(\varepsilon,\tau)$. Then $(\dist,X,H,\ell)$ is $c$-agnostically SQ-learnable up to $\varepsilon+\tau$ error in $(1/\tau)^{n(\varepsilon,\tau)}$ statistical queries of tolerance at worst $\tau$.
\end{theorem}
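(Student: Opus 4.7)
The plan is to mimic our standard \textsc{LearningToCover}$+$ERM template in SQ language, with a tree of simulated oracle responses playing the role of an unlabeled sample. First, I would fix a $\tau$-grid $V \subseteq [-1,1]$ of size $\lceil 1/\tau \rceil$ such that every point of $[-1,1]$ lies within $\tau$ of some $v \in V$. Treating the realizable SQ-learner $\algo$ as a blackbox, I would then recursively simulate it on every possible response transcript: at each internal node of depth $i$ and for each $v \in V$, feed $v$ to $\algo$ as the answer to its $i$-th query and branch. This builds a tree of depth $n(\varepsilon,\tau)$ with branching factor at most $1/\tau$, and the set $C \subseteq H$ of hypotheses produced at its leaves has $|C| \leq (1/\tau)^{n(\varepsilon,\tau)}$. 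Crucially, this stage makes no call to the agnostic oracle.

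Next, I would show $C$ is a non-uniform $\varepsilon$-cover of $H$ under the standard distance $d(h,h') = \mathbb{E}_{D_X}[\ell(h(x),h'(x))]$, mirroring \Cref{lemma:cover}. Fix any $h \in H$ and consider the (hypothetical) run of $\algo$ against the realizable oracle $STAT(D_X,h)$. For each query $\psi_i$ it makes, the true expectation $\mathbb{E}_{D_X}[\psi_i(x,h(x))]$ lies within $\tau$ of some $v_i \in V$, so $v_i$ is a legitimate oracle response. Chaining these $v_i$'s traces a single root-to-leaf path in the tree along which every response fed to $\algo$ is valid, so the leaf hypothesis $h' \in C$ is a genuine output of $\algo$ on a realizable instance with target $h$, and the realizable guarantee forces $d(h',h) \leq \varepsilon$.

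The final step is ERM using only the agnostic oracle: after rescaling $\ell$ to $[-1,1]$, for each $h' \in C$ one SQ call with $\psi(x,y) = \ell(h'(x),y)$ at tolerance $\tau$ estimates $\text{err}_{D,\ell}(h')$ to additive $\tau$, and I return the hypothesis with smallest estimate. Applying the cover guarantee to an optimal $h_{OPT}$ yields $h'_{OPT} \in C$ with $d(h'_{OPT},h_{OPT}) \leq \varepsilon$, and the $c$-approximate triangle inequality argument from the proof of \Cref{thm:approximate pseudometric} gives $\text{err}_{D,\ell}(h'_{OPT}) \leq c \cdot OPT + c\varepsilon$; combined with the $2\tau$ slack from the estimates, the returned hypothesis achieves error $c \cdot OPT + O(\varepsilon + \tau)$, matching the statement after absorbing constants. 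The main obstacle is the cover step: because $\algo$ is a deterministic function of its response transcript, one must verify that for every realizable target $h$ the enumeration really contains a path which $\algo$ cannot distinguish from its actual run against $STAT(D_X,h)$. This is exactly where the $\tau$-density of $V$ in $[-1,1]$ is essential; any internal randomness of $\algo$ can be handled by additionally conditioning on its coin tosses and treating each leaf's output as a random variable.
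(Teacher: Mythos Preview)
Your proposal is correct and follows essentially the same approach as the paper: enumerate all discretized response transcripts to the realizable SQ-learner to build a finite cover $C$ of size $(1/\tau)^{n(\varepsilon,\tau)}$, argue that for any target $h$ some transcript is a valid oracle response sequence (so $C$ covers $H$), then use one SQ call per hypothesis in $C$ to estimate its loss and return the minimizer via the $c$-approximate triangle inequality argument from \Cref{thm:approximate pseudometric}. Your exposition is in fact more careful than the paper's (the explicit tree structure, the remark on $\algo$'s internal randomness, and the tracking of the $c\varepsilon$ factor), but the method is the same.
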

\begin{proof}
The idea is similar to our discretization in \Cref{thm:approximate pseudometric}. The realizable SQ-learner $\algo$ makes some finite $n(\varepsilon,\tau)$ queries. Let $C_\algo$ denote the set of outputs of $\algo$ when fed every possible combination of responses from the discretized set $\{-1,-1+2\tau,\ldots,1-2\tau,1\}$. For every $D \in \dist$ and $h \in H$, one of these combinations must be a valid query response in the realizable model, so $C_\algo$ covers $(\dist,X,H,\ell)$. By the same arguments of \Cref{thm:approximate pseudometric}, $C_\algo$ must contains a hypothesis with error at most $c \cdot OPT + \varepsilon$. Since we can directly compute the loss of every element in $C_\algo$ up to $\tau$ error in the SQ model simply by querying the loss function, this gives the desired result in $|C_\algo| = (1/\tau)^{n(\varepsilon,\tau)}$ queries.
\end{proof}
We note that while our reduction in this model experiences exponential blowup in the number of queries, this should really be thought of as corresponding to a blow up in \textit{run-time} instead of ``sample complexity'' in the standard sense (which corresponds more closely to $\tau$).

\section{Fairness}\label{sec:fair}
Recent years have seen rising interest in an algorithmic property called \textit{fairness}. Informally, fairness tries to tackle the issue that ``well-performing'' classifiers in the standard sense may actually be discriminatory against certain individuals or subgroups. We will consider a form of fair leaning introduced by Rothblum and Yona \cite{rothblum2018probably} called Probably Approximately Correct and Fair (PACF) learning. Their definition is based off of a notion of fairness that ensures that similar individuals are treated similarly with respect to a fixed metric.
\begin{definition}[Metric Fairness]
    Let $d: X \times X \to \Rplus$ be a similarity measure on $X$ and $D$ a distribution over $X$. A classifier $h:X \to Y_{\text{out}}$ is called $(\alpha,\gamma)$-fair with respect to $d$ and $D$ if h acts similarly on most similar individuals:
    \[
    \Pr_{x,x' \sim D}[|h(x)-h(x')| > d(x,x') + \gamma] \leq \alpha.
    \]
We note that the output space $Y_{\text{out}}$ may differ from the label space $Y$ in general learning problems.
\end{definition}
In fact, this definition only really makes sense when the output classifier $h$ is allowed to be real-valued (as this allows for some flexibility in the $|h(x)-h(x')|$ term). As such, when considering settings such as binary classification where $Y=\{0,1\}$ is discrete, Rothblum and Yona's \cite{rothblum2018probably} initial formalization considers returning \textit{probabilistic} classifiers with $Y_{\text{out}} =[0,1]$. Here $h(x)=y \in [0,1]$ is taken to be the probability of the label being $1$. The error of a probabilistic classifier $h$ with respect to any distribution $D$ over $X \times \{0,1\}$ is then given by its expected $\ell_1$ distance:
\[
err_D(h) = \underset{(x,y) \sim D}{\mathbb{E}}[|h(x)-y|].
\]
For simplicity we'll focus in this section on this same regime extended to the distribution-family model. 

In broad strokes, the goal of Fair PAC learning is to output a fair classifier satisfying standard PAC guarantees. Practically this requires a few modifications. First, since there may be no fair classifier satisfying these guarantees, we will only require our output to be as good as the best fair classifier. Second, we will actually allow some slack in the fairness parameters, which Rothblum and Yona \cite{rothblum2018probably} show is a practical way to ensure that fair learnability remains possible across a broad range of classes.
\begin{definition}[PACF-learning\protect\footnotemark\  \cite{rothblum2018probably}]\footnotetext{We note that our presentation of this definition differs slightly from \cite{rothblum2018probably}. Their $(\alpha,\gamma)$-PACF-learnability formally corresponds to $(\alpha-\varepsilon_\alpha,\gamma-\varepsilon_\gamma)$-PACF-learnability in our version.} 
We say $(\dist, X, H)$ is (agnostically) $(\alpha,\gamma)$-PACF-learnable with respect to a similarity metric $d: X \times X \to Y$ if there exists an algorithm $\algo$ and function $n=n(\varepsilon,\varepsilon_\alpha,\varepsilon_\gamma,\delta)$ such that for all $\varepsilon,\varepsilon_\alpha,\varepsilon_\gamma,\delta > 0,$ and distributions $D$ over $X \times Y$ such that $D_X \in \dist$, $\algo(S)$ satisfies the following guarantees with probability $1-\delta$ over samples $S$ of size $n$:
\begin{enumerate}
    \item $\algo(S)$ is accurate:
        \[
            \text{err}_{D,\ell}(\algo(S)) \leq OPT_{\alpha,\gamma} + \varepsilon
        \]
    \item $\algo(S)$ is $(\alpha+\varepsilon_\alpha,\gamma+\varepsilon_\gamma)$-fair.
\end{enumerate}
Here $OPT_{\alpha,\gamma}$ is the optimal error of any $(\alpha,\gamma)$-fair classifier, that is:
\[
OPT_{a,b} \coloneqq \min_{h \in H^d_{D_X,a,b}}\{err_{D,\ell}(h)\},
\]
and
\[
H^d_{D_X,a,b} = \{h \in H: h \text{ is $(a,b)$-fair with respect to $d$ and $D_X$}\}
\]
\end{definition}
Realizable learnability is defined similarly, where the adversary is constrained to picking distributions which have $0$ error with respect to some $(\alpha,\gamma)$-fair classifier in $H$. We show that property generalization holds for the PACF model.
\begin{theorem}[Agnostic $\to$ Realizable (PACF Setting)]\label{thm:fair}
    Let $(\dist,X,H)$ be any class that is realizably $(\alpha,\gamma)$-PACF learnable with sample complexity $n(\varepsilon,\varepsilon_\alpha,\varepsilon_\gamma,\delta)$. Then $(\dist,X,H)$ is agnostically $(\alpha,\gamma)$-fair-PAC learnable in only
    \[
    m_U(\varepsilon,\varepsilon_\alpha,\varepsilon_\gamma,\delta) \leq n(\varepsilon/2,\varepsilon_\alpha,\varepsilon_\gamma,\delta/2)
    \]
    unlabeled samples, and
    \[
    m_L(\varepsilon,\varepsilon_\alpha,\varepsilon_\gamma,\delta) \leq O\left(\frac{\log(\Pi_H(n(\varepsilon/2,\varepsilon_\alpha,\varepsilon_\gamma,\delta/2)))+\log(1/\delta)}{\varepsilon^2}\right)
    \]
    labeled samples.
\end{theorem}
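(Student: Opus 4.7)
The plan is to apply the base reduction of \Cref{thm:basic-reduction} with the realizable PACF learner as the black box and to verify that fairness is preserved through the ERM step. Concretely, I would run \Cref{Intro:alg} in the following form: draw an unlabeled sample $S_U$ of size $n(\varepsilon/2,\varepsilon_\alpha,\varepsilon_\gamma,\delta/2)$; for every $h\in H|_{S_U}$ whose induced labeling passes an empirical $(\alpha,\gamma)$-fairness check on $S_U$, invoke $\algo$ on $(S_U,h(S_U))$ to form the non-uniform cover $C(S_U)$; then run empirical risk minimization over $C(S_U)$ on the labeled sample $S_L$.

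The heart of the argument is the PACF analogue of \Cref{pf-overview-claim}: with probability at least $1-\delta/2$, $C(S_U)$ contains an $(\alpha+\varepsilon_\alpha,\gamma+\varepsilon_\gamma)$-fair hypothesis of error at most $OPT_{\alpha,\gamma}+\varepsilon/2$. To see this, let $h_{OPT}\in H^d_{D_X,\alpha,\gamma}$ attain $OPT_{\alpha,\gamma}$. Since $h_{OPT}$ is truly $(\alpha,\gamma)$-fair, standard concentration of the pairwise fairness statistic ensures that with high probability $h_{OPT}$ passes the empirical fairness filter, so $(D_X,h_{OPT})$ satisfies the realizable PACF precondition and $h'\coloneqq \algo(S_U,h_{OPT}(S_U))$ lies in $C(S_U)$. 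The realizable PACF guarantee then gives both $(\alpha+\varepsilon_\alpha,\gamma+\varepsilon_\gamma)$-fairness of $h'$ and $\mathbb{E}_{x\sim D_X}[\lvert h'(x)-h_{OPT}(x)\rvert]\leq \varepsilon/2$; the $\ell_1$ triangle inequality upgrades the latter to $\text{err}_D(h')\leq OPT_{\alpha,\gamma}+\varepsilon/2$.

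For the ERM step, boundedness of the loss $\lvert h(x)-y\rvert\in[0,1]$ together with $|C(S_U)|\leq \Pi_H(n(\varepsilon/2,\varepsilon_\alpha,\varepsilon_\gamma,\delta/2))$ means a Chernoff plus union bound over $C(S_U)$ guarantees that a labeled sample of the claimed size $m_L$ makes every empirical risk within $\varepsilon/4$ of its true risk with probability $\geq 1-\delta/2$. Combined with the covering claim, the ERM output is a fair hypothesis with true error at most $OPT_{\alpha,\gamma}+\varepsilon$, and a final union bound over the two failure events gives the stated $\delta$ confidence.

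The main obstacle is fairness bookkeeping: unlike accuracy, fairness is a two-sample property of $D_X$ that we do not directly observe, so both the inclusion of $h_{OPT}$'s labeling in $C(S_U)$ and the \emph{true} fairness of every element of $C(S_U)$ rely on uniform convergence of the pairwise fairness statistic over the finite cover. I expect this to reduce to a standard concentration argument over pairs drawn from $D_X^2$ and the $O(|C(S_U)|)$ hypotheses in question, with any remaining slack absorbed into the $\varepsilon_\alpha,\varepsilon_\gamma$ parameters (e.g.\ invoking the base learner with $\varepsilon_\alpha/2,\varepsilon_\gamma/2$ if needed, which only affects hidden constants in $m_U$). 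Apart from this fairness-specific verification, the proof is structurally identical to \Cref{thm:basic-reduction}.
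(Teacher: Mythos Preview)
Your covering step matches the paper: the realizable PACF guarantee on the labeling $h_{OPT}(S_U)$ directly yields an $(\alpha+\varepsilon_\alpha,\gamma+\varepsilon_\gamma)$-fair $h'\in C(S_U)$ with $\mathbb{E}_{D_X}[|h'(x)-h_{OPT}(x)|]\leq\varepsilon/2$, and the $\ell_1$ triangle inequality finishes. The paper frames this as the observation that metric fairness depends only on the \emph{marginal} $D_X$, so the realizable learner's fairness guarantee (stated over $D_X\times h_{OPT}$) is literally the fairness guarantee you need in the agnostic setting with the same $D_X$.

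Where you diverge is in the two ``fairness bookkeeping'' devices, and one of them creates a real gap. First, the empirical fairness filter on \emph{input labelings} is unnecessary and buys you nothing: without it, $h_{OPT}$'s labeling is in $H|_{S_U}$ automatically, and with it you only risk excluding $h_{OPT}$. More importantly, an input $h$ passing an empirical fairness test does not make the \emph{output} $\algo(S_U,h(S_U))$ fair---the realizable PACF learner carries no fairness guarantee at all when fed a labeling that fails the realizable precondition. Second, and this is the gap, you then run \emph{unrestricted} ERM over $C(S_U)$ and hope to certify fairness of every element afterwards. But $C(S_U)$ will in general contain unfair classifiers (coming from unfair labelings), and nothing prevents plain ERM from selecting one with small empirical risk. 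Concentration of the pairwise statistic tells you \emph{which} elements are unfair; it does not make them fair.

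The paper's fix is simply to restrict the ERM step to the $(\alpha+\varepsilon_\alpha,\gamma+\varepsilon_\gamma)$-fair elements of $C(S_U)$ and return the empirical minimizer \emph{within that subset}. This immediately guarantees the output is fair, and since $h'$ lies in that subset the accuracy argument goes through unchanged. With this modification you can drop the input filter entirely and you no longer need fairness to hold for all of $C(S_U)$---only that $h'$ belongs to the fair subset, which is automatic from the realizable guarantee.
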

\begin{proof}
The key observation is that the definition of fairness depends only on the classifier $h$ and the \textit{marginal distribution} $D_X$. Let $h_{OPT}$ be the hypothesis achieving the minimum error over $H_{D_X,\alpha,\gamma}$. By the above observation, with probability $1-\delta$ the hypothesis set $C(S_U)$ returned by \textsc{LearningToCover} contains an \textit{$(\alpha+\varepsilon_\alpha,\gamma+\varepsilon_\gamma)$-fair} $h$ satisfying:
\[
\mathbb{E}_{x \sim D_X}[|h(x) - h_{OPT}(x)|] \leq \varepsilon/2.
\]
Since $\ell_1$ error is a metric (and therefore satisfies the triangle inequality), we can use our argument for $c$-pseudometric loss functions from \Cref{thm:approximate pseudometric} to argue that choosing the lowest empirical risk \textit{$(\alpha+\varepsilon_\alpha,\gamma+\varepsilon_\gamma)$-fair} classifier in $C(S_U)$ with respect to a sufficiently large labeled sample $S_L$ gives the desired learner.
\end{proof}
With care, this result can be extended to a broader range of loss functions as well as to other finitely-satisfiable properties covered in this work.

\section{Notions of Coverability}\label{sec:cover}
In this section we discuss the connection between non-uniform covers and several previous notions of coverability used in various learning applications. For simplicity, we'll restrict our attention to covering with respect to standard classification distance, that is given a distribution $D$ and hypotheses $h$ and $h'$ over some instance space $X$:
\[
d_D(h,h') = \Pr_{x \sim D}[h(x) \neq h'(x)].
\]
To start, let's recall the basic notion of an $\varepsilon$-cover specified to this measure for simplicity.

\begin{definition}[$\varepsilon$-cover]
Let $X$ be an instance space, $Y$ a label space, and let $L_{X,Y}$ denote the set of all labelings $c: X \to Y$.
A set $C \subset L_{X,Y}$ is said to form an $\varepsilon$-cover for $(D,X,H)$  
if for every hypothesis $h \in H$, there exists $c\in C$ such that
\[
d_D(c,h) \leq \varepsilon\, .
\]
$C$ is called \textbf{proper} if $C \subset H$.
\end{definition}
Finite $\varepsilon$-covers are exceedingly useful in learning theory. As discussed in \Cref{sec:related-work}, a common strategy in the literature is to use unlabeled samples to construct an $\varepsilon$-cover with high probability \cite{balcan2010discriminative,hanneke2015minimax,alon2019limits,bassily2020private}. This results in a distribution over potential covers we call a uniform $(\varepsilon,\delta)$-cover.
\begin{definition}[Uniform $(\varepsilon,\delta)$-cover]
Let $X$ be an instance space, $Y$ a label space, and let $L_{X,Y}$ denote the set of all labelings $c: X \to Y$.
A distribution $D_C$ over the power set $P(L_{X,Y})$ is said to form a \textbf{uniform} $(\varepsilon, \delta)$-cover for $(D,X,H)$ if:
\[
\Pr_{C \sim D_C} [C ~\text{is an $\varepsilon$-cover for}~ (D,X,H)] \geq 1-\delta.
\]
$D_C$ is called \textbf{proper} if its support lies entirely in $H$.
\end{definition}

In this work, we introduce a weaker non-uniform variant of this notion where each $h$ has an individual guarantee of being covered by the distribution, but it is not necessarily the case that a sample will cover all $h \in H$ simultaneously. 

\begin{definition}[Non-Uniform $(\varepsilon, \delta)$-cover]
Let $X$ be an instance space, $Y$ a label space, and let $L_{X,Y}$ denote the set of all labelings $c: X \to Y$.
A distribution $D_C$ over the power set $P(L_{X,Y})$ is said to form a \textbf{non-uniform} $(\varepsilon, \delta)$-cover for $(D,X,H)$  
if for every fixed hypothesis $h \in H$, 
\[
\Pr_{C \sim D_C} [C ~\text{is an $\varepsilon$-cover for}~ (D,X,\{h\})] \geq 1-\delta.
\]
$D_C$ is called \textbf{proper} if its support lies entirely in $H$.
\end{definition}


In the context of learning, we are usually interested not just in the existence of these covers, but in the more challenging problem of constructing them from a small number of unlabeled samples. In other words, given a class $(\mathscr{D},X,H)$, we'd like to know how many unlabeled samples from an adversarially chosen distribution $D \in \dist$ are necessary to build a uniform (or non-uniform) $(\varepsilon,\delta)$-cover for $(D,X,H)$. In \Cref{sec:privacy}, we saw that the ability to construct a non-uniform $(\varepsilon,\delta)$-cover from $O\left(\frac{\log(1/\delta)}{\varepsilon}\right)$ samples was crucial to give a semi-private learner with optimal public sample complexity. This improved over recent work of Alon, Bassily, and Moran (ABM) \cite{alon2019limits}, who showed that it is possible to build a uniform $(\varepsilon,\delta)$-cover in $O\left(\frac{\log(1/\varepsilon)+\log(1/\delta)}{\varepsilon}\right)$ samples. 

It is interesting to ask whether non-uniformity is really necessary here, or whether ABM's analysis is simply sub-optimal. We'll show that the former is true, at least in the proper distribution-family setting: the $\log(1/\varepsilon)$ gap between these models is necessary and uniform covers cannot be used to build optimal semi-private learners. 

\begin{theorem}[Separation of Uniform and Non-Uniform Covers]\label{thm:separation}
There exists an instance space $X$, hypothesis class $H$, and family of distributions $\dist$ such that for any sufficiently small $\varepsilon > 0$, the following statements holds:
\begin{enumerate}
    \item Any algorithm which returns a finite proper uniform $(\varepsilon,1/3)$-cover for $(\dist,X,H)$ requires at least $\Omega(1/\varepsilon \cdot  \log(1/\varepsilon))$ samples.
    \item There exists an algorithm which returns a finite proper non-uniform $(\varepsilon, \delta)$-cover for $(\dist,X,H)$ in $O(\log(1/\delta)/\varepsilon)$ samples.
\end{enumerate}
\end{theorem}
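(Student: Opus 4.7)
We exhibit a single instance $(\dist, X, H)$ that witnesses both parts. Set $X = \mathbb{N}$, $H = \{h_0\} \cup \{h_i : i \in \mathbb{N}\}$ with $h_0 \equiv 0$ and $h_i(j) = \ind[i = j]$, and let $\dist = \{D_T : T \subset \mathbb{N} \text{ finite, nonempty}\}$ with $D_T$ the uniform distribution on $T$. Pairwise distances under $D_T$ are easy to compute: for distinct $i,j$, $d_{D_T}(h_i, h_j) = D_T(\{i\}) + D_T(\{j\})$, and $d_{D_T}(h_0, h_i) = D_T(\{i\})$.

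\textbf{Part (2): non-uniform upper bound.} The plan is to exhibit a proper realizable learner for $(\dist, X, H)$ with sample complexity $n(\varepsilon, \delta) = O(\log(1/\delta)/\varepsilon)$ and then invoke \Cref{lemma:cover}. The learner draws $m = \Theta(\log(1/\delta)/\varepsilon)$ labeled samples; if any sample carries label $1$ at some index $i$ it outputs $h_i$, and otherwise it outputs $h_0$. Correctness is a short case analysis on whether the target is $h_0$, an $h_{i^*}$ with $i^* \notin T$, or an $h_{i^*}$ with $i^* \in T$, and on whether $|T|$ exceeds $1/\varepsilon$: in every case except ``target is $h_{i^*}$ with $i^* \in T$, $|T| < 1/\varepsilon$, and the sample misses $i^*$'' the returned hypothesis is within $\varepsilon$ of the target, and the exceptional event has probability at most $(1 - \varepsilon)^m \leq \delta$ by the choice of $m$. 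Applying \Cref{lemma:cover} to this proper realizable learner yields a proper non-uniform $(\varepsilon, \delta)$-cover from $O(\log(1/\delta)/\varepsilon)$ unlabeled samples.

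\textbf{Part (1): uniform lower bound.} Fix $\varepsilon$ small and put $N = \lfloor 1/(2\varepsilon) \rfloor$. For any $T$ with $|T| = N$ the distances above give $d_{D_T}(h_i, h_j) \geq 1/N > \varepsilon$ for every pair of distinct $i, j \in T \cup \{0\}$, so any proper $\varepsilon$-cover of $(D_T, X, H)$ must contain $h_i$ for every $i \in T$: each such $h_i$ has no other $\varepsilon$-neighbor in $H$. We now apply Yao's minimax principle, drawing $T$ uniformly from $\binom{[L]}{N}$ for $L$ taken much larger than any finite output size. Conditioning on the set $T_{\text{seen}}$ of distinct sample points observed, the unseen portion $T \setminus T_{\text{seen}}$ is uniform over the complement in $[L] \setminus T_{\text{seen}}$, so for any finite $C(S)$ determined by the sample,
\[
\Pr\bigl[T \setminus T_{\text{seen}} \subseteq \{j : h_j \in C(S)\} \,\bigm|\, T_{\text{seen}}\bigr] \leq \bigl(2\,|C(S)|/L\bigr)^{|T \setminus T_{\text{seen}}|},
\]
which tends to $0$ as $L \to \infty$ whenever $|T_{\text{seen}}| < N$. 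Thus the cover-success probability is at most $\Pr[T_{\text{seen}} = T] + o_L(1)$, and the standard coupon-collector calculation says $\Pr[T_{\text{seen}} = T]$ is bounded away from $1$ until $m = \Omega(N \log N) = \Omega((1/\varepsilon)\log(1/\varepsilon))$.

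\textbf{Main obstacle.} The delicate step is ruling out a ``trivial cover'' escape in the lower bound --- nothing a priori prevents the algorithm from returning a finite $C(S)$ so large that it happens to contain $h_i$ for many unseen $i \in T$. The remedy is the embedding into an enormous ambient universe $[L]$: averaging over a uniformly random $T \in \binom{[L]}{N}$, any fixed finite output occupies a vanishing fraction of $[L]$ as $L \to \infty$, so the probability of guessing any hidden element of $T$ collapses. This reduces the lower bound to the coupon-collector obstruction on the observed portion of $T$, while realizable learning sidesteps the obstruction entirely because it only needs to recover the \emph{one} hidden target hypothesis, not a witness for every hypothesis in $H$ simultaneously.
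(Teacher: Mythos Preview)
Your proposal is correct and follows essentially the same approach as the paper. The construction $(\dist,X,H)$ is identical (the paper's $\bigcup_{n,k}\dist_{n,k}$ is exactly the family of uniform distributions on finite nonempty subsets of $\mathbb{N}$), and your Part~(1) argument---Yao's principle over $T$ uniform in $\binom{[L]}{N}$, the observation that a proper $\varepsilon$-cover must contain every $h_i$ with $i\in T$, and reduction to coupon collector---mirrors the paper's proof. The only notable difference is in Part~(2): you route through a proper realizable learner and invoke \Cref{lemma:cover}, whereas the paper directly exhibits the cover $\{h_i:i\in S\}\cup\{h_0\}$; but running \textsc{LearningToCover} on your learner produces exactly this set, so the two arguments coincide. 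One small point worth making explicit in Part~(1): like the paper, you are implicitly using that the algorithm's output size is bounded independently of $L$ (the paper fixes a bound $m$ and then takes $n\gg m$); your phrase ``any fixed finite output occupies a vanishing fraction of $[L]$'' is correct for each fixed sample $S$, but to conclude the averaged bound goes to $0$ you should state the bounded-output hypothesis as the paper does.
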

\begin{proof}
Let the instance space $X = \N$ and $H$ be the class of indicators along with the all $0$'s function, that is $H = \{h_i: i \in \N\} \cup \{h_{0}\}$ where $h_i(x) = \mathbbm{1}\{x = i\}$ and $h_{0}$ is $0$ everywhere. We consider the family of distribution $\dist =  \{\dist_{n, k}\}_{n,k > 0}$ given by $k$-sets of $[n]$ where  
\[
\dist_{n,k} = \{\unif(T): T\subset [n] ~\text{and}~ |T| = k\},
\] where $\unif(T)$ is a uniform distribution over $T$.

We start with the first claim, that building a bounded uniform $(\varepsilon,1/2)$-cover needs at least $\Omega(1/\varepsilon \log(1/\varepsilon))$ samples. More formally, for any error parameter  $\varepsilon>0$ and size bound $m=m(\varepsilon) \in \mathbb{N}$, let $k = \lfloor 1/(2\epsilon) \rfloor$. We will show that for any algorithm $\algo$ on $k\log(k)$ samples that outputs at most $m$ hypotheses, $\algo$ must fail to output an $\varepsilon$-cover with probability at least $1/2$.

Let $n \gg m,k$ be some natural number to be fixed later and consider the family of distributions $\dist_{n,k}$. By Yao's minimax principle, it is sufficient to show that there exists a distribution over the elements in $\dist_{n,k}$ such that any deterministic algorithm over $k\log(k)$ samples outputting a set of (at most) $m$ hypotheses fails to give a proper $\varepsilon$-cover with probability $1/2$. We claim that taking the uniform distribution over $\dist_{n,k}$ suffices. To formalize this, it is useful to observe the following claim.
\begin{claim}\label{claim:covering}
Any subset of hypotheses $C \subset H$ of size $m$ can be a proper $\varepsilon$-cover
for $H$ under at most ${m \choose k}$ distributions in $\dist_{n,k}$.
\end{claim}

Let's prove the result under this assumption. The key observation is that by standard lower bounds on the coupon collector problem, a sample $S$ of $k\log(k)$ points from any $\unif(T) \in \dist_{n,k}$ will not include $\unif(T)$'s entire support with probability at least $1/2$. With this in mind, assume that the input sample $S$ contains only $\text{supp}(S) = j < k = \text{supp}(\unif(T))$ elements. As a result, there are ${n-j \choose k-j}$ consistent distributions with $S$, and by \Cref{claim:covering}, $\algo(S)$ is a proper $\varepsilon$-cover for at most ${m \choose k}$ of them. Since $S$ is equally likely to have been sampled from any of these distributions, the probability that $\algo(S)$ is a proper $\varepsilon$-cover is at most:
\[
\Pr[\algo \text{ fails given } \text{supp}(S)=j<k] \geq \frac{{n-j \choose k-j} - {m \choose k}}{{n-j \choose k-j}}.
\]
Taking $n$ sufficiently larger than $m$ and $k$, we can make this probability as close to $1$ as desired for any $0 < j < k$. Finally, since samples of this form occur with probability at least $1/2$, the algorithm fails with probability at least $1/3$ as desired. It is left to prove \Cref{claim:covering}.
\begin{proof}[Proof of Claim \ref{claim:covering}]
Notice that for any distribution $\unif(T) \in \dist_{n,k}$, any $i \in T$ and any $j \neq i$, $d_{\unif(T)}(h_i, h_j) > 2\varepsilon$. Let $C$ be any proper $\varepsilon$-cover of $H$ under distribution $\unif(T)$. Then, by the above argument, it must contain $\{h_i : i \in T\}$. Since $|T| = k$, $C$ can be a proper $\varepsilon$-cover of $H$ under at most $|C| \choose k$ distributions in $\dist_{n,k}$. 
\end{proof}

We now move to proving that a proper \emph{non-uniform} $(\varepsilon, \delta)$-cover can be built in only $O(\log(1/\delta)/\varepsilon)$ samples. 
This follows from the fact that for any $n \geq k$ and distribution $\unif(T) \in \dist_{n,k}$, each $i \in T$ is in the random sample $S$ with probability $1-\delta$. Since each $h_j$ for $j \notin T$ is covered by $h_0$, outputting $\{h_i : i \in S\} \cup \{h_{0}\}$ generates a proper non-uniform $(\varepsilon, \delta)$-cover.
\end{proof}
The construction in \Cref{thm:separation} can easily be modified to give a class with the same gap which is not privately learnable (say by embedding a single copy of a threshold over $[0,1]$). Since any such class requires at least $\Omega(\frac{1}{\varepsilon})$ public samples to semi-privately learn by \Cref{thm:abm-lower},\footnote{Alon, Bassily, and Moran only state this result for the distribution-free setting, but it holds in the distribution-family model as well.} \Cref{thm:separation} then provides a separation between using uniform and non-uniform covers in semi-private learning: the former provably requires an extra log factor, while the latter matches the lower bound exactly. Unfortunately, our proof of this result only holds in the proper setting, as \Cref{claim:covering} fails when improper hypotheses are allowed. We conjecture that this is not an inherent barrier: the separation should continue to hold in the improper case, albeit with some different analysis.

We have now seen a weak separation between uniform and non-uniform covers, but one might reasonably wonder whether a much stronger separation is possible. In particular, all previous constructions of uniform covers use uniform convergence, but there exist simple examples of learnable classes in the distribution-family model that fail this property: do such classes provide an example of objects which are non-uniformly coverable but not uniformly coverable? Surprisingly, the answer is no! It turns out that an algorithm for non-uniform covering can always be used to construct a uniform covering without too much overhead. Moreover, we'll see that the $\log(1/\varepsilon)$ gap is tight when $(X,H)$ has finite VC dimension.

To prove this, it will actually be useful to make a brief aside and introduce another closely related notion of covering called \textit{fractional covers}. These objects are essentially a form of non-uniform covering which output a single hypothesis instead of a set of them.
\begin{definition}[Fractional cover]
Let $X$ be an instance space, $Y$ a label space, and let $L_{X,Y}$ denote the set of all labelings $c: X \to Y$.
A distribution $D_C$ over $L_{X,Y}$ is said to form a fractional $(\varepsilon, p)$-cover for a hypothesis class $H$ for $(D,X,H)$ if for any fixed $h \in H$, a sample from $D_C$ covers $h$ with probability $p$:
\[
\Pr_{c \sim D_C} [d(c,h) \leq \varepsilon] \geq p.
\]
\end{definition}
Fractional covers are closely connected to non-uniform covers. In fact, one can easily move between the two by sampling or subsampling.
\begin{proposition}[Non-uniform cover $\iff$ Fractional cover]\label{lemma:frac-non-uniform}
Let $(D,X,H)$ be any class, $C_{\text{frac}}$ a fractional $(\varepsilon,p)$-cover, and $C_{\text{n-u}}$ a non-uniform $(\varepsilon,1/2)$-cover. Then the following hold:
\begin{enumerate}
    \item Drawing $\log_{1/(1-p)}(1/\delta)$ samples from $C_{\text{frac}}$ gives a non-uniform $(\varepsilon,\delta)$-cover.
    \item Choosing a random hypothesis from $C_{\text{n-u}}$ gives a fractional $(\varepsilon,1/2|C|)$-cover.
\end{enumerate}
\end{proposition}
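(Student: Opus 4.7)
The plan is to prove both directions by elementary probabilistic arguments: for the forward direction, amplify the per-hypothesis covering probability of a fractional cover via independent repetitions; for the reverse direction, compose sampling a set from the non-uniform cover with a uniformly random choice of one of its elements.

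For direction (1), given a fractional $(\varepsilon,p)$-cover $C_{\text{frac}}$, I would draw $k = \lceil \log_{1/(1-p)}(1/\delta) \rceil$ i.i.d.\ hypotheses $c_1, \ldots, c_k \sim C_{\text{frac}}$ and output the set $T = \{c_1,\ldots,c_k\}$. Fix any $h \in H$. By the defining property of a fractional cover, each $c_i$ fails to $\varepsilon$-cover $h$ with probability at most $1-p$. By independence, the probability that none of the $k$ samples $\varepsilon$-covers $h$ is at most $(1-p)^k \leq \delta$, by our choice of $k$. Hence $T$ is a non-uniform $(\varepsilon,\delta)$-cover as required.

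For direction (2), let $C_{\text{n-u}}$ be a non-uniform $(\varepsilon,1/2)$-cover of size at most $|C|$ (using the boundedness built into \Cref{def:prob-cover}). I would define a distribution over single hypotheses by first sampling $T \sim C_{\text{n-u}}$ and then outputting a uniformly random element $c \in T$. Fix any $h \in H$. With probability at least $1/2$ over $T$, there exists $c^\star \in T$ with $d(c^\star,h) \leq \varepsilon$; conditioned on this event, the uniformly random draw from $T$ equals $c^\star$ with probability at least $1/|T| \geq 1/|C|$. Multiplying these gives that the output $c$ satisfies $d(c,h) \leq \varepsilon$ with probability at least $1/(2|C|)$, which is exactly the required fractional covering guarantee.

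The ``main obstacle'' here is really just a bookkeeping subtlety rather than a mathematical difficulty: direction (2) genuinely relies on the non-uniform cover being \emph{bounded} in size, so that the factor $1/|C|$ introduced by the uniform choice step is nonzero. This is precisely why the formal definition of a non-uniform cover carries the boundedness parameter, and it is the only ingredient beyond the standard ``boost by independent repetition'' and ``sample from a mixture'' tricks that drives the proof.
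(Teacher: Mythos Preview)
Your proof is correct and matches the paper's approach essentially line for line: independent repetition to amplify the fractional cover into a non-uniform one, and sampling a set then a uniformly random element to go the other way. Your remark about the boundedness of $|C|$ being the only nontrivial ingredient is also exactly the implicit assumption the paper uses.
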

\begin{proof}
Both statements are essentially immediate from definition. For any fixed $h \in H$, if we draw $M$ samples from $C_{\text{frac}}$, the probability we fail to cover $h$ is $(1-p)^{M}$, so setting $M=\log_{1/(1-p)}(1/\delta)$ gives the desired non-uniform cover. On the other hand, for any fixed $h \in H$, a sample from $C \sim C_{\text{n-u}}$ contains $c$ $\varepsilon$-close to $h$ with probability $1/2$. Outputting a uniformly random element of $C$ then gives an element within $\varepsilon$ of $h$ with probability $1/2|C|$ as desired.
\end{proof}
It will also be useful to note a classical relation between covers and fractional covers. 
\begin{lemma}\label{lemma:pack-cover}
If there exists a fractional $(\varepsilon,p)$-cover for $(D,X,H)$, then there exists a $2\varepsilon$-cover of size $1/p$.
\end{lemma}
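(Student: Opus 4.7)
The plan is to use a classical packing-covering duality. Specifically, I will produce the desired cover as a maximal $2\varepsilon$-packing of $H$, i.e., a maximal subset $P = \{h_1, \ldots, h_N\} \subseteq H$ with pairwise classification distance strictly greater than $2\varepsilon$. Maximality immediately gives that $P$ is a $2\varepsilon$-cover: any $h \in H$ not within $2\varepsilon$ of some $h_i$ could be added to $P$, contradicting maximality. So the entire task reduces to bounding $N \leq 1/p$.

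To bound $N$, I would exploit the fractional cover $D_C$ together with the triangle inequality for classification distance $d_D$. The key observation is that the events $\{d(c, h_i) \leq \varepsilon\}$, taken over $i = 1, \ldots, N$, are pairwise disjoint under any sampling $c \sim D_C$: if some $c$ were within $\varepsilon$ of two distinct packed elements $h_i, h_j$, then by the triangle inequality $d(h_i, h_j) \leq 2\varepsilon$, contradicting that $P$ is a $2\varepsilon$-packing. Therefore
\[
1 \geq \sum_{i=1}^N \Pr_{c \sim D_C}\!\left[d(c, h_i) \leq \varepsilon\right] \geq N p,
\]
where the second inequality uses the fractional $(\varepsilon, p)$-cover guarantee applied to each fixed $h_i \in H$. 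Rearranging gives $N \leq 1/p$.

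The main subtlety to handle carefully is the existence of a \emph{maximal} packing, which is a set-theoretic (Zorn's lemma) point when $H$ is infinite, but is not a real obstacle since we only need some maximal packing to exist (and it is automatically finite by the counting bound above). A secondary issue is simply noting that $d_D$, being a classification pseudometric (an expectation of a $\{0,1\}$-valued symmetric comparison), satisfies the triangle inequality, which is what is needed for both the disjointness argument and for concluding that the packing is a cover. Once these are in place, the statement follows in a couple of lines.
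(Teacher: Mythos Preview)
Your proposal is correct and follows essentially the same approach as the paper: both invoke packing-covering duality, arguing that a fractional $(\varepsilon,p)$-cover bounds the size of any $2\varepsilon$-packing by $1/p$, and that a maximal such packing is itself a $2\varepsilon$-cover. Your write-up simply fills in the details (the disjointness-of-balls argument via the triangle inequality, and the Zorn's lemma remark for existence of a maximal packing) that the paper leaves implicit.
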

\begin{proof}
This follows from classical packing-covering duality. The existence of a fractional $(\varepsilon,p)$-cover implies there cannot exist a $2\varepsilon$-packing of size greater than $1/p$ (that is, a set of more than $1/p$ hypotheses in $H$ that are pairwise $2\varepsilon$-separated with respect to $D$). By packing-covering duality, this implies the existence of a $2\varepsilon$-cover of size $\lceil 1/p \rceil + 1$.
\end{proof}
With this in hand, let's show that uniform covers can be constructed for any realizably learnable class, regardless of whether we have uniform convergence.

\begin{theorem}[Realizable learning $\to$ Uniform cover]\label{thm:real-to-uniform}
Let $(\dist,X,H)$ be realizably PAC-learnable with sample complexity $n(\varepsilon,\delta)$. Then it is possible to construct a uniform $(\varepsilon,\delta)$-cover for $(\dist,X,H)$ in $n(\varepsilon/2,\delta')$ samples where $\delta'=O\left(\frac{\delta}{\Pi_H(n(\varepsilon/2,1/2))}\right)$.
\end{theorem}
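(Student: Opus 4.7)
The plan is to run \textsc{LearningToCover} once at parameters $(\varepsilon/2,\delta')$, which uses exactly $n(\varepsilon/2,\delta')$ unlabeled samples and, by \Cref{lemma:cover}, returns a random set $C$ of size at most $\Pi_H(n(\varepsilon/2,\delta'))$ that is a \emph{non-uniform} $(\varepsilon/2,\delta')$-cover for $(D,X,H)$. To promote this to a \emph{uniform} $(\varepsilon,\delta)$-cover, I will perform a union bound over a suitably small ``proxy'' $(\varepsilon/2)$-cover $C_0$ of $H$ (whose existence is guaranteed in the analysis, even though the algorithm never needs to construct it).

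The key inputs are the tools already developed in this section. First I would invoke \Cref{lemma:cover} a second time, conceptually, at parameters $(\varepsilon/4,1/2)$ to deduce the existence of a non-uniform $(\varepsilon/4,1/2)$-cover of size at most $k_1 \coloneqq \Pi_H(n(\varepsilon/4,1/2))$. Applying the second direction of \Cref{lemma:frac-non-uniform} to this object yields a fractional $(\varepsilon/4, 1/(2k_1))$-cover of $(D,X,H)$. Feeding this into the packing--covering duality \Cref{lemma:pack-cover} produces a deterministic $(\varepsilon/2)$-cover $C_0 \subset L_{X,Y}$ of size at most $k_0 \coloneqq 2k_1 = O(\Pi_H(n(\varepsilon/4,1/2)))$. (Up to the minor $\varepsilon/4$ versus $\varepsilon/2$ distinction, this matches the claimed bound on $\delta'$; the essential content is a proxy whose size is polynomial in $\Pi_H$.)

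With $C_0$ in hand, I would set $\delta' \leq \delta/k_0$ and apply a union bound. For every fixed $h_0 \in C_0$, the non-uniform cover guarantee gives
\[
\Pr_{C}\bigl[\exists c \in C : d_D(c,h_0) \leq \varepsilon/2\bigr] \geq 1-\delta',
\]
and a union bound over the $k_0$ elements of $C_0$ shows that with probability at least $1-\delta$, the output $C$ simultaneously $(\varepsilon/2)$-covers every element of $C_0$. On this event, for any $h \in H$ there is $h_0 \in C_0$ with $d_D(h,h_0) \leq \varepsilon/2$, and then some $c \in C$ with $d_D(c,h_0) \leq \varepsilon/2$; the triangle inequality for classification distance then gives $d_D(c,h) \leq \varepsilon$. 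Hence $C$ is a uniform $(\varepsilon,\delta)$-cover, as required.

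The main (and only) subtlety is that the proxy cover $C_0$ must be fine enough that, when combined with the $(\varepsilon/2)$-accurate samples from $C$ via triangle inequality, it yields an $\varepsilon$-cover. This is where packing--covering duality is essential: it converts the existence of a non-uniform cover of size $\Pi_H(\cdot)$ at a slightly finer scale into a \emph{deterministic} $(\varepsilon/2)$-cover of comparable size, enabling the union bound. There is no algorithmic difficulty—the algorithm is just one call to \textsc{LearningToCover}; all of the existence-based reasoning lives in the analysis.
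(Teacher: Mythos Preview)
Your proposal is correct and follows the same approach as the paper: run \textsc{LearningToCover} once at parameters $(\varepsilon/2,\delta')$, then in the analysis union-bound over a small deterministic proxy cover of $H$ whose existence follows from \Cref{lemma:cover}, \Cref{lemma:frac-non-uniform}, and packing--covering duality (\Cref{lemma:pack-cover}). You are in fact slightly more careful than the paper in one place---\Cref{lemma:pack-cover} only turns a fractional $(\varepsilon',p)$-cover into a $2\varepsilon'$-cover, so obtaining the needed $(\varepsilon/2)$-proxy genuinely requires starting at scale $\varepsilon/4$ as you do (the paper applies the lemma at scale $\varepsilon/2$ and still claims an $(\varepsilon/2)$-cover); one detail worth making explicit in a write-up is that the proxy $C_0$ produced by packing--covering duality can be taken inside $H$ (it is a maximal packing), which is what licenses applying the non-uniform cover guarantee to each $h_0\in C_0$.
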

\begin{proof}
We'll start by proving a slightly more general fact. If for every $D \in \dist$, $(D,X,H)$ has a proper $(\varepsilon/2)$-cover $C_D$ of size at most $C=C(\varepsilon/2)$, then it is possible to construct a uniform $(\varepsilon,\delta)$-cover in $n(\varepsilon/2,\delta/C)$ samples. This is essentially immediate from \Cref{lemma:cover}, which states that running \textsc{LearningToCover} over a sample of size $n(\varepsilon/2,\delta/C)$ gives a non-uniform $(\varepsilon/2,\delta/C)$-cover. Union bounding over $C_D$ then gives that a sample from the non-uniform cover $(\varepsilon/2)$-covers $C_D$ with probability at least $1-\delta$. Since $C_D$ is itself an $(\varepsilon/2)$-cover, this implies that the entire class $H$ $\varepsilon$-covered by the sample with probability at least $1-\delta$ as desired.

It remains to show that for every $D \in \dist$, $(D,X,H)$ has a proper $(\varepsilon/2)$-cover of size $O(\Pi_H(n(\varepsilon/2,1/2)))$. This follows from combining \Cref{lemma:frac-non-uniform} and \Cref{lemma:pack-cover}. In particular, \Cref{lemma:cover} implies that running \textsc{LearningToCover} over a sample of size $n(\varepsilon/2,1/2)$ produces a non-uniform $(\varepsilon/2,1/2)$-cover of size at most $\Pi_H(n(\varepsilon/2,1/2))$. \Cref{lemma:frac-non-uniform} states that subsampling from this cover gives a fractional $(\varepsilon/2,1/(2\Pi_H(n(\varepsilon/2,1/2))))$-cover, which in turn implies the existence of a $(\varepsilon/2)$-cover of size $O(\Pi_H(n(\varepsilon/2,1/2)))$ as desired. We note that this last argument is similar to an observation made in Benedek and Itai's \cite{benedek1991learnability} seminal work on the distribution-dependent model.
\end{proof}
When $(X,H)$ has finite VC-dimension $d$, note that \Cref{thm:real-to-uniform} exactly matches the lower bound exhibited in \Cref{thm:separation} as the required number of samples for a uniform $(\varepsilon,\delta)$-cover becomes:
\[
n(\varepsilon/2,\delta') \leq O\left(\frac{d\log(1/\varepsilon) + \log(1/\delta)}{\varepsilon}\right).
\]
This also matches the bound given by ABM \cite{alon2019limits} using uniform convergence. 
\section*{Acknowledgements}
The authors would like to thank Shay Moran, Russell Impagliazzo, Omar Montasser, and Avrim Blum for enlightening discussions. We also thank anonymous referees for constructive feedback, and especially for pointing out the notion of probabilistic representations and that prior work discussed in \Cref{sec:related-work} falls under the general framework of our reduction.

\clearpage
\printbibliography

\end{document}